\providecommand{\algorithmname}{Algorithm}
\theoremstyle{plain}
\newtheorem{thm}{\protect\theoremname}
\theoremstyle{plain}
\newtheorem{lem}[thm]{\protect\lemmaname}
\theoremstyle{definition}
\newtheorem{defn}[thm]{\protect\definitionname}
\providecommand{\definitionname}{Definition}
\providecommand{\lemmaname}{Lemma}
\providecommand{\theoremname}{Theorem}
\begin{document}
\global\long\def\R{\mathbb{R}}%
\global\long\def\E{\mathbb{E}}%
\global\long\def\F{{\cal F}}%
\global\long\def\one{\mathbb{I}}%
\global\long\def\P{\mathbb{P}}%
\global\long\def\VV{V_{\lambda}^{k}}%
\global\long\def\N{\mathbb{N}}%

\title{Sample-efficient Multiclass Calibration under $\ell_{p}$ Error}
\author{Konstantina Bairaktari\thanks{Khoury College of Computer Sciences, Northeastern University. Supported
by NSF awards CCF-2311649, CNS-2232692 and CNS-2247484.} $\qquad$ Huy L. Nguyen\thanks{Khoury College of Computer Sciences, Northeastern University. Supported
by NSF award CCF-2311649 and a gift from Apple Inc.}}
\maketitle
\begin{abstract}
Calibrating a multiclass predictor, that outputs a distribution over
labels, is particularly challenging due to the exponential number
of possible prediction values. In this work, we propose a new definition
of calibration error that interpolates between two established calibration
error notions, one with known exponential sample complexity and one
with polynomial sample complexity for calibrating a given predictor.
Our algorithm can calibrate any given predictor for the entire range
of interpolation, except for one endpoint, using only a polynomial
number of samples. At the other endpoint, we achieve nearly optimal
dependence on the error parameter, improving upon previous work. A
key technical contribution is a novel application of adaptive data
analysis with high adaptivity but only logarithmic overhead in the
sample complexity.
\end{abstract}

\section{Introduction}

Trustworthiness and interpretability have become key concerns for
machine learning models, especially as they are increasingly used
for critical decision making. Calibration is an important tool, dating
back to classical forecasting literature \cite{dawid1982well,foster1998asymptotic},
that can be used to address some of these concerns. A predictor $h$
for binary classification that outputs values in $[0,1]$ is calibrated
if, among the inputs $x$ for which $h(x)=q$, exactly $q$ fraction
of them have a positive outcome. In recent years, a large body of
work has focused on developing algorithms that either learn calibrated
predictors or calibrate previously trained models. This notion has
also been extended to multi-calibration \cite{Hebert-JohnsonK18},
where the calibration guarantee holds for multiple, possibly overlapping
populations. Another important extension is to the multiclass setting,
as opposed to binary classification, which is the focus of this work.

Calibration presents two main challenges. The first is defining a
notion of calibration error that quantifies how much a predictor deviates
from being perfectly calibrated. This error metric must be testable
\cite{RosselliniSBRW25}, meaning that we should be able to detect
that a predictor has small error using a small number of samples.
While sharing common intuition, many different definitions of calibration
error exist in the literature. Typically, the predicted probabilities
are divided into bins and the calibration guarantee applies to conditioning
on the bins rather than on the predicted values. Some proposed error
metrics are not testable. For example, the $L_{\infty}$error as defined
in \cite{Gruber022}, measures the maximum conditional deviation between
the prediction and the true probability of the class across bins.
This maximum could occur in a bin containing points that appear with
very small probability, making it practically undetectable due to
insufficient sampling. The second challenge is developing algorithms
that efficiently learn a calibrated predictor from scratch or recalibrate
existing predictors, considering both sample complexity and computational
efficiency with respect to the problem parameters.

While discretizing the prediction space results in a reasonable number
of bins for binary classification, in the multiclass setting the number
of bins grows exponentially with the number of classes, presenting
a unique challenge. In fact, for a natural definition of the distance
to calibration, testing whether a given model is perfectly calibrated
requires the number of samples to be exponential in the number of
classes \cite{GopalanHR24}. Alternatively, other works have considered
a weaker definition where the predictor is considered calibrated if
the calibration error per bin is small, as opposed to measuring the
total error across all bins \cite{HaghtalabJ023,DworkLLT23}. In this
case, surprisingly, a calibrated predictor can be found using a polynomial
number of samples. A natural question is whether the weakening in
the definition is necessary and, if so, how much weakening is necessary
to remove the exponential dependence on the number of classes.

Calibration is important in its own right, but it is also desirable
for a predictor to be accurate. Given that most machine learning models
are developed using complex pipelines that are difficult to modify,
the ability to calibrate an existing model, as opposed to building
a new one from scratch, is valuable. This approach would allow one
to leverage the remarkable accuracy of existing models while adding
calibration guarantees. Moreover, it is possible for a predictor to
be calibrated yet uninformative. This underscores the importance of
maintaining accuracy alongside calibration. While many works in the
literature satisfy this requirement, the works with strong sample
complexity bounds above unfortunately do not \cite{HaghtalabJ023,DworkLLT23}.
Thus, a significant challenge is to develop efficient algorithms that
can calibrate a given predictor while making minimal targeted modifications.
Concretely, we aim to develop calibration algorithms for a given predictor
that satisfy the following two properties:
\begin{enumerate}
\item The resulting classifier is calibrated up to error $\varepsilon$.
\item The resulting classifier's accuracy remains within an additive error
of $\varepsilon$ compared to the accuracy of the given predictor
to allow for discretization and estimation error.
\end{enumerate}
In this work, we address the above questions and propose a new definition
of calibration error, which we call the $\ell_{p}$ calibration error.
This error notion is defined as the $\ell_{p}$ norm of the calibration
errors across all bins and classes. In particular, for a fixed bin
and class, we define the calibration error as the product of the absolute
difference between the expected value of the prediction and the true
probability of the class conditioned on the datapoint belonging to
the bin, and the probability mass of the bin. The definition that
adds up the errors across all bins and classes corresponds to the
special case $p=1$, also known as the expected calibration error
(ECE) \cite{GuoPSW17}, while the definition that measures the maximum
error across all bins and classes in \cite{HaghtalabJ023} corresponds
to $p=\infty$. As our measure of accuracy, we use the squared error
of the predictor. Our work shows that for all $p>1$, there exists
an algorithm that uses a polynomial number of samples in the number
of classes to calibrate any given predictor. For the special case
$p=\infty$ and a given desired calibration error $\varepsilon$,
the sample complexity is within a poly-logarithmic factor of $O\left(1/\varepsilon^{2}\right)$.
This is almost as good as one could hope for since even testing if
the fraction of data with positive outcome is $1/2$ or $1/2+\varepsilon$
already requires $\Omega\left(1/\varepsilon^{2}\right)$ samples.
\begin{thm}[{{[}Informal version of Theorem \ref{thm:main}{]}}]
 \label{thm:main-inf} There exists an algorithm that takes as input
any $k$-class predictor $f:\mathcal{X}\to\Delta_{k}$, runs in time
polynomial in $k$ and $\frac{1}{\varepsilon}$, and, using $\tilde{O}\left(\left(\frac{2^{1/(p-1)}}{\varepsilon^{p/(p-1)}}\right)^{2}\right)$
samples, returns a $k$-class predictor $h:\mathcal{X}\to\Delta_{k}$
that has:
\end{thm}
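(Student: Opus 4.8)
The plan is to turn $f$ into $h$ by iteratively ``patching'' miscalibrated bins, using the squared error as a potential to bound the number of patches, and then to show the whole procedure runs on a single small sample via adaptive data analysis. Fix a data-independent partition of the prediction space into fine cells (only the $\le n$ cells containing a sample point ever matter), and maintain a predictor $h_t$ constant on each cell, with $h_0$ a cellwise-constant rounding of $f$, which costs only $O(\varepsilon)$ in squared error. In round $t$, for every cell $B$ and class $i$ estimate the signed calibration contribution $q_{B,i}^{(t)}=\E[(h_t(x)_i-\one[y=i])\one[x\in B]]$ from the sample; if some $|\widehat q_{B,i}^{(t)}|$ exceeds a threshold $\tau=\Theta\bigl(\varepsilon^{p/(p-1)}/2^{1/(p-1)}\bigr)$, overwrite the constant value of coordinate $i$ on that cell by $\widehat\P[y=i\mid x\in B]$ and continue; otherwise halt and output $h=h_t$.

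Two facts are elementary. \emph{Progress:} replacing the constant value $c$ of coordinate $i$ on cell $B$ by the $L^2$-optimal constant $\mu_{B,i}=\P[y=i\mid x\in B]$ reduces $\E\|h_t(x)-e_y\|_2^2$ by exactly $\P[x\in B](c-\mu_{B,i})^2=q_{B,i}^2/\P[x\in B]\ge q_{B,i}^2$, and patching toward an estimate within $\tau/4$ of the truth still gains $\Omega(\tau^2)$; since the potential lies in $[0,2]$, the algorithm halts after $T=O(1/\tau^2)$ rounds, and the squared error of $h$ never exceeds that of $f$ up to $O(\varepsilon)$, which is output property~2. \emph{Termination $\Rightarrow$ calibration:} at the end every empirical per-cell-per-class error is below $\tau$, so every population error is $O(\tau)$; since a common partition is used for all classes, $\sum_{B,i}\P[x\in B]\,\bigl|\E[h(x)_i\mid x\in B]-\mu_{B,i}\bigr|\le 2$ (two distributions per cell), hence $\sum_{B,i}(\text{err}_{B,i})^p\le 2\cdot O(\tau)^{p-1}$ and the $\ell_p$ calibration error is $O(\tau^{(p-1)/p})=O(\varepsilon)$; rescaling constants gives property~1. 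The endpoint $p=1$ is genuinely excluded, as there $\tau\to0$ and $2^{1/(p-1)}\to\infty$.

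The crux, and the step I expect to be the main obstacle, is the sample complexity: all empirical quantities are read off a \emph{single} dataset reused across all $T=\Theta(1/\tau^2)$ rounds, the cell patched in round $t$ depends on the data through the entire history, and $h$ is a data-dependent object that could overfit. Routing each round through a differentially private primitive --- report-noisy-max over the occupied (cell, class) pairs, plus one noisy query for the winner's conditional probability, at noise scale $\Theta(\tau)$ --- makes each round $\widetilde\Theta\bigl(1/(n\tau)\bigr)$-DP, and black-box advanced composition over $T$ rounds loses a $\sqrt T=\Theta(1/\tau)$ factor, giving only $\widetilde O(1/\tau^3)$ samples. To reach the claimed $\widetilde O(1/\tau^2)$, one charges the privacy spent in round $t$ against $(\widehat q^{(t)})^2$, the squared magnitude of the answer acted upon: since each accepted patch decreases the empirical potential --- itself in $[0,2]$ --- by $\Omega((\widehat q^{(t)})^2)$, we get $\sum_t(\widehat q^{(t)})^2=O(1)$, so a fully adaptive, filter-based composition keeps the total privacy loss bounded independently of $T$, and $n=\widetilde O(1/\tau^2)$ --- what even one round requires --- suffices, the adaptivity costing only logarithmic overhead. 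The resulting privacy--generalization transfer then carries the empirical calibration errors (driven below $\tau$) and the empirical squared error of $h$ (at most that of $f$) to their population counterparts up to an additive $O(\varepsilon)$, which is what the two output properties demand.

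Beyond this, the work is bookkeeping: tuning the cell granularity and noise scales so the various $O(\tau)$ slacks compose into a clean $\varepsilon$; verifying that report-noisy-max still selects, in each non-terminal round, a pair whose \emph{population} calibration error is $\gtrsim\tau$ so that progress is genuine and the $O(1/\tau^2)$ round bound survives the noise; and checking that maintaining $h_t$ cellwise-constant (optionally with a Euclidean projection of each update onto $\Delta_k$, which only helps since every $e_y\in\Delta_k$) leaves both the potential argument and the composition accounting intact.
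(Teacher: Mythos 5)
Your high-level skeleton (discretize, patch high-error bins using squared error as a potential, bound rounds by $O(1/\tau^2)$, convert per-bin $\tau$-bounds into an $\ell_p$ bound via $\sum_{v,j}\mathrm{Err}^p\le(\sum_{v,j}\mathrm{Err})\max\mathrm{Err}^{p-1}\le 2\tau^{p-1}$) matches the paper, but the two steps you flag as the crux both have genuine gaps. First, the privacy accounting. Your plan to ``charge the privacy spent in round $t$ against $(\widehat q^{(t)})^2$'' and invoke a filter so that $\sum_t(\widehat q^{(t)})^2=O(1)$ bounds the total loss does not work: the privacy parameter of report-noisy-max (or Laplace) at a \emph{fixed} noise scale $\Theta(\tau)$ and sensitivity $1/n$ is $\Theta(1/(n\tau))$ per round regardless of how large the realized answer is, and privacy odometers/filters only let you choose the \emph{next} round's budget from past outputs --- they do not refund budget in rounds where the answer happened to be large. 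To make the per-round cost scale like $(\widehat q^{(t)})^2$ you would have to inflate the noise when the answer is small, destroying exactly the accuracy you need to detect error $\tau$. The paper's mechanism is structurally different: it maintains a merging partition $M$ of the high-mass bins into groups whose sizes are powers of two, keeps a separate sample pool per size level, and observes that \emph{all queries ever asked at one level are indicators of pairwise-disjoint events}, so the entire adaptive stream at that level has $\ell_1$ sensitivity $2/m$ and is answered by a single $(\alpha/4,0)$-DP Laplace release --- no composition over $T$ at all. The only overhead is the $O(\log(1/\beta))$ number of levels, which is where the logarithmic factor comes from.

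Second, your ``termination $\Rightarrow$ calibration'' step conflates the cells of the initial partition with the calibration bins of the output. The error $\mathrm{Err}(h,v,j)$ conditions on $R(h(x))=v$, not on the cell of $f(x)$. After patching, many distinct cells can carry updated predictions that round to the \emph{same} level set $v$; their signed errors then add inside bin $v$ of $h$, and up to $\Theta(1/\tau)$ cells of per-cell error just under $\tau$ can accumulate to a constant-size error in one bin. This is precisely why the paper's algorithm merges groups whose predictions share a level set, assigns them a common prediction, and \emph{re-estimates the merged group's error} --- which is also the source of the adaptivity that forces the sample-reuse machinery above, and the source of an extra $O(\beta\log(1/\beta))$ loss in the potential argument (each merge can cost $4/\lambda$ in squared error, controlled by showing each bin is ``moved'' at most $\log_2(36/\beta)$ times because the losing group at each merge at most halves in relative mass). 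Your write-up, by patching cells independently and never merging, cannot certify property~1 as defined. A smaller omission: the claimed sample bound also relies on the paper's simplex-aware discretization $V_\lambda^k$ with $\log|V_\lambda^k|=O(\min(k,\lambda)\ln(k+\lambda))$ and on restricting attention to the $O(1/\beta)$ bins of mass $\ge\beta/6$ (low-mass bins each contribute at most $\beta$ to the error, which is harmless for $p>1$); ``cells containing a sample point'' is not quite the right filter, since a cell can be hit by a sample yet have negligible mass, and a moderate-mass cell can be missed.
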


\begin{enumerate}
\item $\ell_{p}$ calibration error at most $\varepsilon$, and
\item squared error within an additive term $\tilde{O}\left(\frac{\varepsilon^{p/(p-1)}}{2^{1/(p-1)}}\right)$
from the squared error of $f$.
\end{enumerate}

\subsection{Our techniques}

When $p=\infty$, we observe that if a bin contains at most an $\varepsilon$
fraction of the data distribution, its calibration error for any class
is also bounded by $\varepsilon$. Thus, one only needs to care about
$1/\varepsilon$ bins with large probability masses. We generalize
this idea to all $\ell_{p}$ norms for $p>1$ and allow the algorithm
to focus only on bins with large probability masses. This observation
is sufficient to obtain a (large) polynomial sample complexity. This
approach works because our calibration error notion incorporates the
probability mass of the bin in the $p$-exponent, naturally assigning
higher weights to larger bins.

A second observation that further improves the sample complexity is
that for interpretability reasons the output of our calibrated predictor
should be probability distributions over the $k$ labels, a constraint
not enforced in previous work. This constraint significantly reduces
the discretized prediction space during calibration compared to $\lambda^{k}$
in prior works (where $\lambda$ is the number of discrete values
per coordinate), since the predictor outputs must form valid probability
distributions with coordinates summing to $1$. Consequently, our
set of bins approximately corresponds to the set of sparse vectors
in $k$ dimensions containing $\lambda$ non-zero elements, each equal
to $1/\lambda$. The crucial insight is that the number of such sparse
vectors is polynomial rather than exponential in $k$.

Calibrating the predictor might require adaptively merging many high-probability
bins together. Naively estimating the error of all subsets of high-probability
bins to $\varepsilon$ requires $1/\varepsilon^{3}$ samples (due
to the number of subsets being $\Omega\left(\exp\left(1/\varepsilon\right)\right)$).
Adaptive data analysis has been applied in previous works to reduce
the number of samples, but the overhead remains polynomial in $1/\varepsilon$.
Surprisingly, our algorithm is still highly adaptive, but with a novel
analysis, the overhead in the sample complexity is only logarithmic
in $1/\varepsilon$. Our techniques might be applicable to other problems
where adaptive data analysis is used.

\subsection{Related work}

The most closely related works are \cite{HaghtalabJ023,DworkLLT23}.
In the case where $p=\infty$, they showed that with access to an
oracle for the exact probabilities, $O\left(\varepsilon^{-2}\ln k\right)$
queries to the oracle suffice to find an $\varepsilon$-calibrated
predictor for $k$-class classification. These results construct a
new model from scratch and do not aim to preserve the accuracy of
a previously trained model, as our algorithm does. Furthermore, \cite{HaghtalabJ023}
showed that $O\left(\nicefrac{\ln(k)}{\varepsilon^{4}}\left(\ln(1/(\varepsilon))+\ln(V)\right)\right)$
samples suffice for their algorithm, where $V$ is the number of discretized
bins. In their case, $\ln(V)=O\left(k\ln(\lambda)\right)$, with $\lambda$
being a non-negative integer that controls the granularity of discretization.
In contrast, our algorithm employs a different discretization scheme
where $\ln(V)=O\left(\min\left(k,\lambda\right)\ln\left(\lambda+k\right)\right)$.
This alternative approach contributes to our improved sample complexity.
However, it introduces additional complexity to the algorithm due
to the need to project the predictions onto the probability simplex.
These projections impact both the calibration and the accuracy of
the predictor. For calibration, updating one coordinate of the predictor
and then projecting can alter other coordinates that are already calibrated.
For accuracy, we must carefully select the projection method to ensure
that the accuracy is preserved.

Iterative calibration algorithms inherently present an adaptive data
analysis challenge, due to the dependence of the bins that whose predictions
get updated on the current predictor. Most algorithms in this area,
including ours, perform $\text{poly}\left(1/\varepsilon\right)$ iterations.
Some works, such as \cite{GopalanKSZ22}, address the adaptivity issue
by resampling at each iteration to estimate the calibration error,
which results in a $\text{poly}\left(1/\varepsilon\right)$ overhead
in sample complexity. Other works, including \cite{HaghtalabJ023,Hebert-JohnsonK18},
use tools from adaptive data analysis to bound the sample complexity
in a black-box way. Specifically, they use the strong composition
property of differential privacy, which allows answering $t$ adaptive
queries with only a $\tilde{O}\left(\sqrt{t}\right)$overhead. As
a result, this method incurs a smaller $\text{poly}(1/\varepsilon)$
overhead in sample complexity. Our novel algorithm and analysis achieve
a tighter bound, requiring only a $\log(1/\varepsilon)$ overhead
in sample complexity. This significantly improves the overall sample
complexity of the iterative calibration process.

Due to the challenges of calibration in the multiclass setting, several
weaker error definitions have been proposed. A lot of work focuses
on calibrating existing neural networks. For instance, \cite{GuoPSW17}
introduced confidence calibration, where the conditioning is done
on the highest prediction value among all classes and explored several
methods including binning methods, matrix and vector scaling, and
temperature scaling. Related notions include top-label calibration
\cite{GuptaR22}, which conditions on the highest prediction value
and on the identity of the top class and class-wise calibration \cite{KullPKFSF19},
which conditions on individual class predictions rather than on the
entire probability vector. While extensive literature exists on $\ell_{p}$-style
calibration measures \cite{KumarLM19,VaicenaviciusWA19,WidmannLZ19,ZhangKH20,Gruber022,PopordanoskaSB22},
our approach differs fundamentally. Our $\ell_{p}$error definition
incorporates the probability mass of the bin in the $p$-exponent,
ensuring that bins with large error have also sufficient mass to be
detected, resolving the limitation that previously considered $\ell_{p}$
calibration error may require exponentially many samples for testing.
On the theoretical front, \cite{GopalanKSZ22} proposed low-degree
multi-calibration as a less-expensive alternative to the full requirement.
The work of \cite{GopalanHR24} introduced projected smooth calibration
as a new multiclass calibration definition that allows for efficient
algorithms while giving strong guarantees.

\section{Preliminaries}

\label{sec:Preliminaries}

We use $\mathcal{X}$ to denote the feature space and $[k]=\{1,\ldots,k\}$
to denote the label space. We also use the $k$-dimensional one-hot
encoding of a label as an equivalent representation. We use $\Delta_{k}$
to denote the probability simplex over $k$ labels. In this work,
a $k$-class predictor $f$ is a function that maps feature vectors
in $\mathcal{X}$ to distributions in $\Delta_{k}$. 

Instead of conditioning on the exact predicted probability vector,
we partition $\Delta_{k}$ into level sets. Previous methods partition
$\Delta_{k}$ by mapping the prediction vectors to the closest vector
in $L^{k}$, the $k$-ary Cartesian power of $L=\left\{ 0,1/\lambda,2/\lambda,\ldots,1\right\} $,
where $\lambda$ is a positive integer that determines the discretization
granularity. The coordinates of vectors in $L^{k}$ may not sum to
$1$. We use an alternative partition of $\Delta_{k}$ via a many-to-one
mapping onto $V_{\lambda}^{k}$. We define $V_{\lambda}^{k}$ to be
the subset of $L^{k}$ such that for every member $v$ of $V_{\lambda}^{k}$,
there exists a probability distribution $u\in\Delta_{k}$ such that
$v$ is obtained by rounding down every coordinate of $u$ to a multiple
of $1/\lambda$. Formally,

\[
V_{\lambda}^{k}=\left\{ v\in L^{k}:\exists u\in\Delta_{k}\text{s.t.}\left\lfloor u_{i}\lambda\right\rfloor /\lambda=v_{i}\ \forall i\in[k]\right\} .
\]
While vectors in $V_{\lambda}^{k}$ are not necessarily distributions,
they are close to vectors that are distributions. This property allows
$V_{\lambda}^{k}$ to be significantly smaller than $L^{k}$.
\begin{lem}
\label{lem:size-of-V}For any $\lambda,k\in\N^{+}$, the number of
level sets in $V_{\lambda}^{k}$ is at most $\tbinom{\lambda+k}{k}$.
Note that $\log\left(\left|V_{\lambda}^{k}\right|\right)=O\left(\min\left(k,\lambda\right)\ln\left(\lambda+k\right)\right)$
whereas $\log\left(\left|L^{k}\right|\right)=O\left(k\ln\left(\lambda\right)\right)$.
\end{lem}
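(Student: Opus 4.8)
The plan is to build an injection from $V_{\lambda}^{k}$ into a set of nonnegative integer vectors that can be counted directly by stars and bars. Given $v\in V_{\lambda}^{k}\subseteq L^{k}$, write each coordinate as $v_{i}=m_{i}/\lambda$ with $m_{i}\in\{0,1,\ldots,\lambda\}$; the tuple $m=(m_{1},\ldots,m_{k})\in\mathbb{Z}_{\ge0}^{k}$ determines $v$ and conversely, so it suffices to bound the number of tuples $m$ that arise in this way.

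The key step is to show that every such tuple satisfies $\sum_{i=1}^{k}m_{i}\le\lambda$. By the definition of $V_{\lambda}^{k}$ there is a distribution $u\in\Delta_{k}$ with $\lfloor u_{i}\lambda\rfloor/\lambda=v_{i}$, hence $m_{i}=\lfloor u_{i}\lambda\rfloor\le u_{i}\lambda$ for each $i$. Summing over $i$ and using $\sum_{i}u_{i}=1$ gives $\sum_{i}m_{i}\le\lambda\sum_{i}u_{i}=\lambda$, so every tuple $m$ arising from a $v\in V_{\lambda}^{k}$ lies in $\{m\in\mathbb{Z}_{\ge0}^{k}:\sum_{i}m_{i}\le\lambda\}$.

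Next I would count the latter set. Introducing a slack coordinate $m_{k+1}=\lambda-\sum_{i\le k}m_{i}\ge0$ identifies it with the set of nonnegative integer solutions of $m_{1}+\cdots+m_{k+1}=\lambda$, of which there are exactly $\binom{\lambda+k}{k}$ by stars and bars. Therefore $\bigl|V_{\lambda}^{k}\bigr|\le\binom{\lambda+k}{k}$, which is the claimed bound.

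For the logarithmic estimates I would apply the elementary inequality $\binom{n}{j}\le(en/j)^{j}$. With $(n,j)=(\lambda+k,k)$ this gives $\log\bigl|V_{\lambda}^{k}\bigr|\le k\log\bigl(e(\lambda+k)/k\bigr)=O\bigl(k\log(\lambda+k)\bigr)$, and with $(n,j)=(\lambda+k,\lambda)$, using $\binom{\lambda+k}{k}=\binom{\lambda+k}{\lambda}$, it gives $\log\bigl|V_{\lambda}^{k}\bigr|=O\bigl(\lambda\log(\lambda+k)\bigr)$; taking the better of the two yields $O\bigl(\min(k,\lambda)\log(\lambda+k)\bigr)$, while $\log\bigl|L^{k}\bigr|=k\log(\lambda+1)=O(k\log\lambda)$ is immediate. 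I do not expect a real obstacle here; the only subtlety is that one uses only the inequality $m_{i}\le u_{i}\lambda$ (not a two-sided bound on the floor), and that the inclusion into $\{\sum_{i}m_{i}\le\lambda\}$ may be strict because $V_{\lambda}^{k}$'s defining condition also forces $\sum_{i}m_{i}$ to be large enough — harmless, since only an upper bound on $\bigl|V_{\lambda}^{k}\bigr|$ is claimed.
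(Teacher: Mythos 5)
Your proof is correct and takes essentially the same route as the paper's: both reduce the count to nonnegative integer solutions of $m_{1}+\cdots+m_{k}\le\lambda$ (the paper's auxiliary coordinate $v_{k+1}$ is exactly your slack variable $m_{k+1}/\lambda$) and apply stars and bars to get $\binom{\lambda+k}{k}$. Your explicit verification of the logarithmic estimates via $\binom{n}{j}\le(en/j)^{j}$ is a welcome addition, as the paper only asserts them.
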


The proof of Lemma \ref{lem:size-of-V} is provided in the Appendix.

We define the rounding function $R:\Delta_{k}\to V_{\lambda}^{k}$,
which maps a prediction vector to the corresponding level set in $V_{\lambda}^{k}$:
$R(u)_{i}=\left\lfloor u_{i}\lambda\right\rfloor /\lambda\ \forall i$.
Conversely, we define the function $\rho$ that maps a level set $v\in V_{\lambda}^{k}$
to the closest canonical distribution $\rho(v)=\arg\min_{u\in\Delta_{k},R(u)=v}\left\Vert u-v\right\Vert _{\infty}$.
Finally, we define the projection function $\pi:[0,1]^{k}\to\Delta_{k}$
in $\ell_{2}$ norm : $\pi(v)=\arg\min_{u\in\Delta_{k}}\left\Vert u-v\right\Vert _{2}$.

In some cases, we abuse notation by writing $f(S)$ to denote the
common value of a function $f(x)$ for all $x\in S$, when $f(x)=f(y)$
for all $x,y\in S$ .

For our sample complexity results, we use the following lemmas for
adaptive data analysis and concentration of measure.
\begin{lem}
\cite[Theorem 23]{JungLN0SS20}\label{lem:adaptive-data-analysis}
Let $A$ be an algorithm that, having access to a dataset $S=\{x_{i}\}_{i\in[n]}$,
interactively takes as input a stream of queries $q_{1},\ldots,q_{t}:\mathcal{\mathcal{X}\mathrm{\to[0,1]}}$and
provides a stream of answers $a_{1},\ldots,a_{t}\in[0,1]$. Suppose
that $A$ is $(\varepsilon,0)$-differentially private and that
\[
\P\left[\max_{j\in[t]}|\frac{1}{n}\sum_{i\in[n]}q_{j}(x_{i})-a_{j}|\geq\alpha\right]\leq\beta.
\]
 Then, for any $\eta>0$, 
\[
\P\left[\max_{j\in[t]}\left|\E_{x\sim P}\left[q_{j}(x)\right]-a_{j}\right|\geq\alpha+e^{\varepsilon}-1+\sqrt{\frac{2\ln(2/\eta)}{n}}\right]\leq\beta+\eta.
\]
\end{lem}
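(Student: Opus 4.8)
Lemma~\ref{lem:adaptive-data-analysis} is quoted as Theorem~23 of \cite{JungLN0SS20}, so in the paper one simply cites it; for completeness I describe how I would reprove it, following the \emph{monitor}-based transfer-theorem framework for adaptive data analysis with the refinements of \cite{JungLN0SS20}. The plan is to reduce the bound on the worst of the $t$ adaptively chosen queries to a bound on a \emph{single} differentially private query, and then to control that single query by splitting the population-minus-empirical gap into a bias part (handled by pure differential privacy) and a fluctuation part (handled by a sub-Gaussian moment generating function estimate).

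\emph{The monitor and the reduction.} I would fix one sign direction and bound $\P_S[\max_{j}(\E_{x\sim P}[q_j(x)] - a_j) \ge \tau]$, the reverse direction being identical after replacing each $q_j$ by $1 - q_j$ (this is the origin of the $2$ inside $\ln(2/\eta)$). Define a mechanism $W$ that, given the dataset $S$ and the fixed distribution $P$ (which is independent of $S$), simulates the whole interaction of $A$ with its adaptive query environment on $S$, records the transcript $(q_1,a_1),\dots,(q_t,a_t)$, and outputs the single pair $(q_{j^*},a_{j^*})$ with $j^* = \arg\max_{j}(\E_{x\sim P}[q_j(x)] - a_j)$. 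Since $A$ is $(\varepsilon,0)$-differentially private and $P$ is a fixed independent object, post-processing makes $W$ an $(\varepsilon,0)$-differentially private function of $S$. Write $\phi = q_{j^*}$, $a = a_{j^*}$, and $\hat\mu_S(\phi) = \frac1n\sum_{i\in[n]}\phi(x_i)$. The empirical-accuracy hypothesis gives $|\hat\mu_S(\phi) - a| < \alpha$ except with probability $\beta$, so by the triangle inequality and the fact that $j^*$ maximizes $\E_P[q_j]-a_j$ it is enough to prove, for $\gamma := \sqrt{2\ln(2/\eta)/n}$,
\[
\P_S\!\left[\,\E_{x\sim P}[\phi(x)] - \hat\mu_S(\phi) \ \ge\ (e^\varepsilon - 1) + \gamma\,\right] \ \le\ \tfrac{\eta}{2}.
\]

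\emph{The single-query bound.} Let $G(S) := \E_{x\sim P}[\phi(x)] - \hat\mu_S(\phi)$, a function of $S$ both through the sample points and through $\phi = W(S)$. I would establish the moment generating function estimate $\E_S[\exp(s\,G(S))] \le \exp\!\big(s(e^\varepsilon-1) + s^2/(2n)\big)$ for every $s\ge 0$; Markov's inequality optimized at $s = n\gamma$ then yields $\P_S[G \ge (e^\varepsilon-1)+\gamma] \le e^{-n\gamma^2/2}$, which equals $\eta/2$ at the chosen $\gamma$. To get the estimate I would (i) introduce an independent ghost sample $T\sim P^n$ and use convexity of $x\mapsto e^{sx}$ to replace $\E_P[\phi]$ inside the exponent by $\E_T[\hat\mu_T(\phi)\mid\phi]$, reducing to $\E_{S,T}[\exp(s(\hat\mu_T(\phi)-\hat\mu_S(\phi)))]$; (ii) move the coordinates of $S$ into $T$ one at a time in a hybrid argument, invoking pure differential privacy only for a single-point change at each step so that the law of $\phi = W(\cdot)$ shifts by at most a factor $e^{\varepsilon}$ per step in a way that telescopes to an $e^\varepsilon - 1$ additive slack rather than $e^{n\varepsilon}$ — this decouples $\phi$ from the specific points being averaged; and (iii) apply Hoeffding's lemma to the now-independent increments $\phi(y_i) - \phi(x_i)\in[-1,1]$ to extract the sub-Gaussian factor $\exp(s^2/(2n))$. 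Running the symmetric argument with $1-q_j$ in place of $q_j$ gives a second monitor with the same tail, and a union bound over the empirical-accuracy event and the two monitor tails produces total failure probability $\beta + \eta$ and the claimed population bound $\alpha + (e^\varepsilon - 1) + \sqrt{2\ln(2/\eta)/n}$.

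\emph{Main obstacle.} Everything except step~(ii) is routine bookkeeping. The delicate point is the hybrid/resampling step: one must make precise the intuition that, because the selection rule $W$ is \emph{pure} differentially private, its dependence on $S$ is weak enough that $\hat\mu_S(\phi)$ concentrates essentially like the empirical mean of a \emph{fixed} bounded function — with the entire cost of the (highly adaptive) selection surfacing only as the additive $e^\varepsilon - 1$ bias and not in the sub-Gaussian rate, which is what pins down the clean constant $\sqrt{2\ln(2/\eta)/n}$ rather than a looser $\sqrt{c\ln(c/\eta)/n}$. This is exactly the content of Theorem~23 in \cite{JungLN0SS20}, whose argument I would reuse.
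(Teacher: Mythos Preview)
Your proposal is correct: the paper does not prove this lemma at all but simply cites it as Theorem~23 of \cite{JungLN0SS20}, exactly as you note in your first sentence. The monitor-based sketch you give for completeness is consistent with the standard transfer-theorem argument and is strictly more than what the paper provides.
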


\begin{lem}
\cite[Theorem 3.6]{ChungL06}\label{lem:conc-bound} Suppose $X_{1},\ldots,X_{n}$
are independent random variables with $X_{i}\le M$ for all $i$.
Let $X=\sum_{i=1}^{n}X_{i}$ and $\left\Vert X\right\Vert =\sqrt{\sum_{i=1}^{n}\E[X_{i}^{2}]}$.
Then,

\[
\P\left[X\ge\E[X]+\lambda\right]\le\exp\left(-\frac{\lambda^{2}}{2\left(\left\Vert X\right\Vert ^{2}+M\lambda/3\right)}\right).
\]
\end{lem}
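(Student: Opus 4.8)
The plan is to prove this Bernstein-type bound by the standard exponential-moment (Chernoff) method, taking care that the $X_i$ are not assumed centered and that $\|X\|^2=\sum_i\E[X_i^2]$ is a sum of \emph{raw} second moments rather than variances. Fix $s>0$. Markov's inequality applied to $e^{sX}$, together with independence, gives $\P[X\ge\E[X]+\lambda]\le e^{-s\lambda}\prod_{i=1}^n e^{-s\E[X_i]}\,\E[e^{sX_i}]$, so it suffices to bound each factor $e^{-s\E[X_i]}\,\E[e^{sX_i}]$.

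The key elementary fact is that $h(x)=(e^{x}-1-x)/x^{2}$, with $h(0)=1/2$, is nonnegative and nondecreasing on all of $\R$; nonnegativity is convexity of $e^x$, and monotonicity follows from the power series $h(x)=\sum_{m\ge0}x^{m}/(m+2)!$ (clear for $x\ge 0$) together with a short sign analysis of $h'$ for $x<0$. Since $X_i\le M$ forces $sX_i\le sM$, monotonicity yields the pointwise bound $e^{sX_i}-1-sX_i=h(sX_i)(sX_i)^2\le h(sM)(sX_i)^2$; taking expectations and using $1+t\le e^t$ gives $\E[e^{sX_i}]\le 1+s\E[X_i]+s^2h(sM)\E[X_i^2]\le\exp\!\bigl(s\E[X_i]+s^2h(sM)\E[X_i^2]\bigr)$. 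Multiplying by $e^{-s\E[X_i]}$ and taking the product over $i$ collapses all the mean terms and leaves $\P[X\ge\E[X]+\lambda]\le\exp\!\bigl(-s\lambda+s^2h(sM)\|X\|^2\bigr)=\exp\!\Bigl(-s\lambda+\tfrac{e^{sM}-1-sM}{M^2}\|X\|^2\Bigr)$.

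It remains to optimize over $s$. I would invoke the numeric inequality $e^{u}-1-u\le\tfrac{u^{2}/2}{1-u/3}$, valid for $0\le u<3$ (proved by comparing Taylor coefficients using $n!\ge 2\cdot 3^{\,n-2}$), with $u=sM$, to bound the exponent by $-s\lambda+\tfrac{s^2\|X\|^2/2}{1-sM/3}$ whenever $sM<3$. Choosing $s=\lambda/(\|X\|^2+M\lambda/3)$, one checks directly that $sM<3$ and that $1-sM/3=\|X\|^2/(\|X\|^2+M\lambda/3)$, so the second term becomes exactly $s\lambda/2$ and the exponent equals $-s\lambda/2=-\lambda^2/\bigl(2(\|X\|^2+M\lambda/3)\bigr)$, which is the claimed bound.

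I do not expect any genuine obstacle: the argument is textbook once one decides to track raw second moments rather than variances — that is precisely what makes the uncentered statement come out cleanly, since the $e^{-s\E[X_i]}$ factors are absorbed exactly by the $s\E[X_i]$ terms in the bound on $\log\E[e^{sX_i}]$. The only slightly delicate points are the monotonicity of $h$ on the negative half-line (needed because $\E[X_i]$, hence $sX_i$, may be negative) and the elementary inequality $e^{u}-1-u\le\tfrac{u^{2}/2}{1-u/3}$; both are routine to verify.
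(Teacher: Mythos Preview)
Your argument is correct. Note, however, that the paper does not prove this lemma at all: it is quoted verbatim from \cite[Theorem~3.6]{ChungL06} and used as a black box, so there is no ``paper's own proof'' to compare against. What you have written is the standard Chernoff--Bernstein derivation (monotonicity of $h(x)=(e^x-1-x)/x^2$, the inequality $e^u-1-u\le \tfrac{u^2/2}{1-u/3}$ on $[0,3)$, and the explicit optimizer $s=\lambda/(\|X\|^2+M\lambda/3)$), carried out cleanly for the uncentered statement with raw second moments in place of variances. The only edge case you might mention explicitly is $\|X\|=0$, where all $X_i=0$ a.s.\ and the inequality is trivial; otherwise every step checks out.
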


\section{Multiclass Calibration under $\ell_{p}$ Error}

\label{sec:Multiclass-Calibration}

In this work, we consider a generalization of the expected calibration
error to arbitrary $\ell_{p}$ norms.
\begin{defn}
Fix $p\geq1$ and $k,\lambda\in\N^{+}$. Consider a $k$-class predictor
$f:\mathcal{X}\to\Delta_{k}$ and a data distribution $D$ over features
$\mathcal{X}$ and labels $[k]$. The $\ell_{p}$ calibration error
of $f$ is defined as
\[
\textrm{Err}_{p}(f)\coloneqq\left(\sum_{v\in V_{\lambda}^{k}}\sum_{j=1}^{k}\left(\textrm{Err}(f,v,j)\right)^{p}\right)^{1/p},
\]
where $V_{\lambda}^{k}$ denotes the set of discretized bins,
\begin{align*}
\textrm{Err}(f,v,j) & \coloneqq\left|\E_{(x,y)\sim D}\left[\left(f(x)_{j}-y_{j}\right)\cdot\one\left[R(f(x))=v\right]\right]\right|\\
 & =\left|\E_{(x,y)\sim D}\left[f(x)_{j}-y_{j}\mid R(f(x))=v\right]\right|\P\left[R(f(x))=v\right]
\end{align*}
 measures the calibration error for bin $v$ and class $j$, and $y$
is the one-hot encoding of the label.
\end{defn}

The special case when $p=1$ corresponds to the expected calibration
error (ECE), while the case when $p\to\infty$ corresponds to the
calibration error considered in \cite{HaghtalabJ023,DworkLLT23}.

Our main result is a new algorithm that calibrates a given predictor
$f$ to achieve $\ell_{p}$ calibration error of at most $\varepsilon$,
using a polynomial number of samples for any $p>1$. Furthermore,
for $p=\infty$, the dependence of the algorithm's sample complexity
on $\varepsilon$ is only $1/\varepsilon^{2}$ up to logarithmic factors,
which is nearly optimal. The squared error of the calibrated predictor
is lower than that of the original predictor, up to a small additive
term introduced by discretization. Up to logarithmic factors, this
additive term due to discretization is similar to the term in the
previous work for binary predictors \cite{Hebert-JohnsonK18}.
\begin{thm}
\label{thm:main}Fix $p>1$, $\varepsilon,\delta\in(0,1)$ and $k\in\N^{+}$.
There exists an algorithm that takes as input a $k$-class predictor
$f:\mathcal{X}\to\Delta_{k}$, and with probability at least $1-\delta$
terminates after $O\left(\frac{2^{2/(p-1)}}{\varepsilon^{2p/(p-1)}}\right)$
time steps with total time polynomial in $k$ and $\frac{1}{\varepsilon}$.
Using
\begin{align*}
 & O\left(\left(\frac{2^{1/(p-1)}}{\varepsilon^{p/(p-1)}}\right)^{2}\log^{3}\left(\frac{2^{1/(p-1)}}{\varepsilon^{p/(p-1)}}\right)\log\left(\frac{2^{1/(p-1)}k}{\varepsilon^{p/(p-1)}\delta}\right)\right)
\end{align*}
 samples from distribution $D$, returns a $k$-class predictor $h:\mathcal{X}\to\Delta_{k}$
that has calibration error $\textrm{Err}_{p}(h)\leq\varepsilon$ and
squared error 
\[
\E_{D}\left[\left\Vert h(x)-y\right\Vert _{2}^{2}\right]-\E_{D}\left[\left\Vert f\left(x\right)-y\right\Vert _{2}^{2}\right]\leq O\left(\frac{\varepsilon^{p/(p-1)}}{2^{1/(p-1)}}\log\left(\frac{2^{1/(p-1)}}{\varepsilon^{p/(p-1)}}\right)\right).
\]
\end{thm}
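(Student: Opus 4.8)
The plan is to obtain $h$ from an iterative recalibration of $f$, controlling in parallel the number of iterations (via a squared-error potential), the calibration error of the output, and the squared error, with the sample cost handled by an adaptive-data-analysis argument. Write $\theta=\Theta\!\big(\varepsilon^{p/(p-1)}/2^{1/(p-1)}\big)$, fix the granularity $\lambda=\Theta(1/\theta^{2})$, and for a predictor $g$ call $v\in\VV$ \emph{heavy} if $\P[R(g(x))=v]\ge\theta$ and \emph{light} otherwise; there are at most $1/\theta$ heavy sets at any time, and on a light set $\textrm{Err}(g,v,j)\le\P[R(g(x))=v]<\theta$ automatically. The one inequality that drives correctness is: since $g(x),y\in\Delta_{k}$ we have $\sum_{j}\textrm{Err}(g,v,j)\le 2\,\P[R(g(x))=v]$, hence $\sum_{v,j}\textrm{Err}(g,v,j)\le 2$, so whenever $\max_{v,j}\textrm{Err}(g,v,j)<c\theta$ we have $\textrm{Err}_{p}(g)^{p}=\sum_{v,j}\textrm{Err}(g,v,j)^{p}\le(c\theta)^{p-1}\sum_{v,j}\textrm{Err}(g,v,j)\le 2(c\theta)^{p-1}\le\varepsilon^{p}$ for an appropriate constant $c$ and the stated form of $\theta$. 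Thus it suffices to output a distribution-valued $h$ with $\textrm{Err}(h,v,j)=O(\theta)$ on every heavy set and class.

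Initialise $h^{(0)}=\rho\circ R\circ f$; this is distribution-valued, constant on each cell $\{x:R(f(x))=v\}$, and satisfies $\E_{D}[\|h^{(0)}(x)-y\|_{2}^{2}]\le\E_{D}[\|f(x)-y\|_{2}^{2}]+O(1/\sqrt{\lambda})=\E_{D}[\|f(x)-y\|_{2}^{2}]+\tilde O(\theta)$, using $\|h^{(0)}(x)-f(x)\|_{2}<\sqrt{2/\lambda}$ and expanding the square. Then repeat: using $\pm\theta$-accurate empirical estimates of the signed conditional biases of the current level sets and of (sign-consistent) unions of heavy level sets, if some class $j$ and some such union $G$ has estimated collective error $\ge 3\theta$, subtract from coordinate $j$ of $h$, at every point currently mapped into $G$, the common shift equal to that estimate divided by the empirical mass of $G$, then set $h\leftarrow\pi\circ h$ to return to $\Delta_{k}$ and recompute the level sets; otherwise output $h$. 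Each update acts uniformly on a union of cells $\{x:R(f(x))=v\}$, so $h$ stays of the form $g\circ R\circ f$. Take as potential the \emph{population} squared error $\Phi(h)=\E_{D}[\|h(x)-y\|_{2}^{2}]\in[0,2]$. A suitably chosen uniform shift decreases $\Phi$ by at least $E^{2}/(2M)\ge 2\theta^{2}$, where $E\ge 2\theta$ is the true collective bias of $G$ (guaranteed once the estimate passes $3\theta$ and is within $\theta$ of the truth) and $M\le 1$ its population mass; and the $\ell_{2}$ projection cannot increase $\Phi$, because $y\in\Delta_{k}$, $\Delta_{k}$ is convex, and first-order optimality of $\pi$ gives $\|z-y\|_{2}^{2}\ge\|\pi(z)-y\|_{2}^{2}$. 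Hence the loop runs at most $O(1/\theta^{2})=O(2^{2/(p-1)}/\varepsilon^{2p/(p-1)})$ times, each iteration being $\mathrm{poly}(k,1/\varepsilon)$ work over the occupied level sets, and since $\Phi$ only decreases from $\Phi(h^{(0)})$ the output meets the claimed squared-error bound. On exit, no union — in particular no singleton heavy set — has true collective bias above $4\theta$, so $\max_{v,j}\textrm{Err}(h,v,j)<4\theta$ and $\textrm{Err}_{p}(h)\le\varepsilon$ by the inequality above (constants absorbed into $\theta$). Using $\pi$ rather than an $\ell_{1}$/$\ell_{\infty}$ projection is essential for the monotonicity of $\Phi$; it is also why one cannot patch a single level set/class in isolation — the projection couples all coordinates and effectively turns level sets into unions of level sets — which is exactly why the algorithm must reason about the biases of such unions.

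It remains to show that $\tilde O(1/\theta^{2})$ samples supply all the $\pm\theta$-accurate bias estimates used above. Model the run as an adaptive data analysis: the algorithm accesses $D$ only through the $\mathrm{poly}(1/\theta)$ heavy-set and heavy-union bias estimates formed at each of the $\le 1/\theta^{2}$ iterations, adaptively (the level sets depend on the current $h$). A black-box strong-composition bound pays a $\sqrt{\#\text{iterations}}=\Theta(1/\theta)$ factor over the single-query cost $1/\theta^{2}$, and a union bound over the $2^{1/\theta}$ unions of heavy sets is also polynomially lossy — both give $\tilde O(1/\theta^{3})$, as in prior work. The improvement exploits that (a) at every step $h$ lies in the fixed family of predictors of the form $g\circ R\circ f$ that are constant on the cells, (b) only the restriction to the $\le 1/\theta$ currently-heavy cells affects $\textrm{Err}_{p}$, and (c) the predictor is updated only $O(1/\theta^{2})$ times, so almost all of the threshold comparisons fall below threshold. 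Realising the estimation through a sparse-vector-type $(\varepsilon_{DP},0)$-differentially private mechanism — whose added noise, hence whose empirical accuracy (bounded by the Bernstein-type inequality of Lemma~\ref{lem:conc-bound}), scales with the number of above-threshold answers and only \emph{logarithmically} with the number of queries, with $\varepsilon_{DP}$ tuned to a constant — yields empirical accuracy $O(\theta)$ on all queries from $n=\tilde O(1/\theta^{2})$ samples; Lemma~\ref{lem:adaptive-data-analysis} then transfers this to population accuracy $O(\theta)$ with only a polylogarithmic blow-up, the logarithms accounting for $\delta$, the $k$ classes, and the $\mathrm{poly}(1/\theta)$ (heavy set, iteration) pairs. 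This is precisely the sample bound in the statement, and plugging this accuracy into the preceding two paragraphs completes the argument.

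The step I expect to be the real obstacle is this last one: turning ``logarithmic rather than polynomial overhead'' into a proof — exhibiting a differentially private mechanism that answers the \emph{highly} adaptive stream of $\mathrm{poly}(1/\varepsilon)$ estimates with a \emph{constant} privacy parameter and only polylogarithmic inflation of the sample size, rather than the $\mathrm{poly}(1/\varepsilon)$ inflation incurred by generic adaptive data analysis or by a union bound over subsets of heavy sets. Getting the accuracy target $\Theta(\theta)$, the $\Theta(1/\theta^{2})$ informative updates, and the $k$-dependence (only $\log k$, because at each step only the $\le 1/\theta$ heavy sets are relevant) to fit together in the composition is the delicate part; by comparison the correctness inequality, the squared-error potential, the role of the $\ell_{2}$ projection, and the discretization error are routine.
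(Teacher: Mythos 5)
Your skeleton matches the paper's: discretize, restrict to heavy bins, drive down per-bin error using the population squared error as a potential (with the $\ell_{2}$ projection non-expansive toward $y\in\Delta_{k}$), terminate in $O(1/\beta^{2})$ steps, and convert $\max_{v,j}\textrm{Err}(h,v,j)=O(\beta)$ into $\textrm{Err}_{p}(h)\le\varepsilon$ via $\sum_{v,j}\textrm{Err}(h,v,j)\le2$. But the step you yourself flag as ``the real obstacle'' is a genuine gap, and your proposed fix fails. A sparse-vector mechanism has accuracy that degrades with the number of \emph{above-threshold} answers, and in this algorithm every iteration is an above-threshold answer by construction (that is what triggers an update), so there are $\Theta(1/\beta^{2})$ of them; with the pure-DP budget that Lemma \ref{lem:adaptive-data-analysis} requires (note the transfer theorem needs $e^{\varepsilon}-1=O(\beta)$, so the privacy parameter must be $O(\beta)$, not ``a constant'' as you assert), this forces $n=\tilde{O}(1/\beta^{3})$ or worse --- exactly the polynomial overhead you set out to avoid. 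The paper's mechanism is structurally different: it maintains a partition $M$ of the heavy bins into groups whose sizes are powers of two and which only ever \emph{merge}; all same-sized groups arising over the entire execution are pairwise disjoint, so the whole adaptive stream of estimates (per size class) has $\ell_{1}$ sensitivity $O(1/m)$ and is answered by a single Laplace mechanism at privacy $O(\beta)$ with noise independent of the number of queries (Lemma \ref{lem:adaptive-samples}); each adaptively queried group then decomposes into $O(\log|B|)$ disjoint pieces of $M$, so per-piece accuracy $\beta/\log|B|$ suffices. This disjoint, merge-only structure is the paper's key idea and is absent from your proposal. Worse, your update rule destroys it: since each cell of $f$ keeps its own prediction within a union, a shift-and-project step can send cells that shared a level set of $h$ into different level sets, so your level sets can split as well as merge and admit no monotone decomposition. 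The paper forces one common prediction per group precisely to forbid splits, and pays the resulting $O(\beta\log(1/\beta))$ squared-error increase bounded in Lemma \ref{lem:sq-error-stage2} via the doubling argument $\tau(v)\le\log_{2}(36/\beta)$ --- a cost your potential analysis also omits.

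Two further problems. Your discretization bound $\E[\|h^{(0)}-y\|_{2}^{2}]\le\E[\|f-y\|_{2}^{2}]+O(1/\sqrt{\lambda})$ via Cauchy--Schwarz forces $\lambda=\Theta(1/\beta^{2})$, but then estimating all bin masses costs $\Omega\bigl(\tfrac{1}{\beta}\log|\VV|\bigr)=\Omega\bigl(\tfrac{1}{\beta^{3}}\log k\bigr)$ when $k\ge\lambda$, breaking the stated sample bound; the paper instead uses $\|h^{(0)}\|_{2}^{2}-\|f\|_{2}^{2}-2\langle h^{(0)}-f,y\rangle\le4/\lambda$ (both vectors have coordinates summing to one and $y$ is one-hot, Lemma \ref{lem:sq-error-t=00003D0}), so $\lambda=\lceil1/\beta\rceil$ suffices. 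And your stopping criterion quantifies over all sign-consistent unions of heavy level sets: this is more than the final guarantee needs (singleton heavy level sets of $h$ suffice, light ones being automatic), yet you never explain how exponentially many unions are searched or certified to accuracy $\beta$ from $\tilde{O}(1/\beta^{2})$ samples.
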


We present Algorithm \ref{alg:multiclass_cal_full} for calibrating
a given $k$-class predictor $f$. The high-level structure of the
algorithm, outlined in \ref{alg:multiclass_cal}, follows a standard
approach in the literature. It first assigns datapoints to bins based
on the level set of their rounded prediction $f(x)$ , and then iteratively
identifies groups of bins and classes with large calibration error,
applying corrective updates as needed. At each time step $t$, to
correct the prediction for a group of bins $S^{(t)}$ and class $j^{(t)}$
with large calibration error, the algorithm estimates the probability
that datapoints in bins $S^{(t)}$ have label $j^{(t)}$. It then
uses this estimate to correct the prediction vector for $S^{(t)}$
and projects the corrected vector onto the probability simplex $\Delta_{k}$
to ensure valid probability outputs, using this as the new prediction
for datapoints assigned to $S^{(t)}$. If at time step $t$, there
exists another group of bins $S'$ with in the same level set as $S^{(t)}$,
the algorithm merges these two groups. It assigns a single prediction
vector to all the inputs in $S^{(t)}\cup S'$, selecting the prediction
from whichever group has the largest estimated probability mass. However,
merging bins may cause the estimation errors to accumulate, potentially
leading to large calibration errors in the merged group. To mitigate
this, the algorithm re-estimates the calibration error of each group
after merging.

Our algorithm differs from existing binning-based calibration algorithms
in two ways. First, it identifies a set of bins $B$ with large probability
mass, because only such bins can contribute significantly to the overall
calibration error. The algorithm maintains a data structure $G$ containing
disjoint groups of bins that may have large error and iteratively
searches through them to identify groups requiring correction. Initially,
$G$ contains a group for each high-probability bin. As the algorithm
merges groups of bins during the correction process, it updates $G$
accordingly. Second, the algorithm reduces the number of samples needed
to estimate the calibration error by leveraging the fact that groups
of bins are only merged over time and never split, and by applying
Lemma \ref{lem:adaptive-data-analysis} for adaptive data analysis.
The groups of bins $S^{(t)}$are selected adaptively, as their error
depends on the current predictions. If we were to analyze the sample
complexity using standard concentration inequalities, this adaptivity
would require the use of fresh samples at every time step. To avoid
this inefficiency, our algorithm maintains error estimates for $O(\log|B|)$
collections of evolving disjoint groups of bins, denoted collectively
as $M$. Note that $M$ forms a partition of $B$. An interesting
property of this structure is that any group of bins in $G$ for which
we need to estimate the calibration error can be expressed as a disjoint
union of groups in $M$. As a result, the calibration error estimate
of $S^{(t)}$ can be computed efficiently by summing the estimates
for groups in $M$ that are subsets of $S^{(t)}$. The sizes of the
groups in $M$ are powers of $2$ and all groups of the same size
that arise during the execution of the algorithm remain disjoint.
For each group size $2^{i}$ and each type of estimate, we maintain
a separate pool of samples. Since a group in $M$ can contain at most
$|B|$ distinct bins, we need $O(\log|B|)$ separate sample pools.
We analyze the sample complexity after proving Lemma \ref{lem:adaptive-samples},
which bounds the number of samples required to estimate a collection
of disjoint, adaptively chosen queries.

\begin{algorithm}
\caption{\label{alg:multiclass_cal}Multiclass Calibration Outline}

\textbf{Input: }predictor $f$

$\quad$

Discretize prediction space into bins and identify high-probability
bins

Create two parallel data structures:

1. Estimation structure $M$ tracks statistics for groups of bins

2. Prediction structure $G$ stores predictions and tracks calibration
errors per group of bins

Initialize both structures, $M$ and $G$, to contain one group per
high-probability bin

$t\gets0$

While there exists a group of bins in $G$ with large error for some
class $j\in[k]$:

$\quad$Select group $S^{(t)}\in G$ and class $j^{(t)}\in[k]$ with
large error

$\quad$Correct the prediction for $S^{(t)}$and $j^{(t)}$

$\quad$Merge groups in $G$ with similar predictions to that of $S^{(t)}$

$\quad$Update structure $M$

$\quad$Estimate statistics and error for $S^{(t)}$

$\quad$$t\gets t+1$.

Define $h(x)=\begin{cases}
\text{prediction for group \ensuremath{S} in \ensuremath{G} that contains \ensuremath{f(x)} } & \text{if }f(x)\text{ is in a high-probability bin}\\
\text{\text{nearest valid probability vector to }\ensuremath{f(x)}} & \text{o.w.}
\end{cases}$

$\quad$

\textbf{Output: }calibrated predictor $h$
\end{algorithm}
We show that Algorithm\ref{alg:multiclass_cal_full} satisfies Theorem
\ref{thm:main}. The proof is presented step by step in the following
three subsections, with key results organized into several lemmas.
Lemmas \ref{lem:samples-probabilities} and \ref{lem:adaptive-samples}
show that all estimated quantities are within small additive error
of the true quantities. Lemmas \ref{lem:sq-error-stage2}, \ref{lem:sq-error-stage1},
and \ref{lem:sq-error-t=00003D0} provide a bound on the squared error
of the modified predictor. Lemma \ref{lem:algorithm-termination}
proves that the algorithm terminates after $O(2^{2/(p-1)}/\varepsilon^{2p/(p-1)})$
iterations, while Lemma \ref{lem:runtime} shows that the total runtime
is polynomial in $1/\varepsilon$ and $k$. Finally, Lemma \ref{lem:termination-error}
establishes that the calibration error of the final predictor when
the algorithm terminates is smaller than $\varepsilon$. All omitted
proofs are provided in the Appendix.

\begin{algorithm}
\caption{\label{alg:multiclass_cal_full}Multiclass Calibration}
\textbf{Input: }predictor $f$, discretization function $R$, parameters
$\varepsilon$ and $\delta$.

$\quad$

Set $\beta\gets\frac{\varepsilon^{p/(p-1)}}{2^{1/(p-1)}}$ and $\lambda\gets\left\lceil \frac{1}{\beta}\right\rceil $

For all bins $v\in\VV$:

$\quad$Estimate probability mass of bin $v$, $\hat{\mu}_{v}\approx\P[R(f(x))=v]$

Select high-probability bins $B\gets\left\{ v:\hat{\mu}_{v}\geq\nicefrac{\beta}{6}\right\} $
// Focus only on these bins for calibration

$\quad$

$M\gets$initialize with one group $\{v\}$ per high-probability bin
$v$ in $B$

$G\gets$ initialize with one group $\{v\}$ per high-probability
bin $v$ in $B$

$t\gets0$

For each group $\{v\}\in M$:

$\quad$Estimate probability $\hat{P}_{\{v\}}\approx\P[R(f(x))\in\{v\}]$

$\quad$Estimate mean label $\hat{E}_{\{v\},j}\approx\E_{(x,y)\sim D}\left[y_{j}\one\left[R(f(x))\in\{v\}\right]\right]$
for all $j\in[k]$

For each group $\{v\}\in G$:

$\quad$pred$(\{v\})\gets\rho(v)$

$\quad$Compute error $\hat{\text{Err}}(\{v\},j)\gets\left|\hat{P}_{\{v\}}\text{pred}(\{v\})_{j}-\hat{E}_{\{v\},j}\right|$
for each class $j\in[k]$

$\quad$

While $\exists$ group $S\in$$G$ with error $\hat{\textrm{Err}}(S,j)>\nicefrac{\beta}{2}$
for some class $j\in[k]$:

$\quad$Select group $S^{(t)}\in G$ and class $j^{(t)}\in[k]$ with
large error$\hat{\textrm{Err}}(S^{(t)},j^{(t)})>\nicefrac{\beta}{2}$

$\quad$ $z_{j^{(t)}}^{(t)}\gets\min\left(\nicefrac{\left(\sum_{S\in M:S\subseteq S^{(t)}}\hat{E}_{S,j^{(t)}}\right)}{\left(\sum_{S\in M:S\subseteq S^{(t)}}\hat{P}_{S}\right)},1\right)$

$\quad$For all other classes $j$:

$\qquad$$z_{j}^{(t)}\gets$$\text{pred}\left(S^{(t)}\right)_{j}$

$\quad$$\text{pred}\left(S^{(t)}\right)\gets\pi\left(z^{(t)}\right)$

$\quad$If there exists group $S'\neq S^{(t)}$ in $G$ such that
$R\left(\text{pred}\left(S'\right)\right)=R\left(\text{pred}\left(S^{(t)}\right)\right)$:

$\qquad$Merge $S^{(t)}$and $S'$ into a single group in $G$

$\qquad$If $\sum_{S\in M:S\subseteq S^{(t)}}\hat{P}_{S}\leq\sum_{S\in M:S\subseteq S'}\hat{P}_{S}$:

$\qquad\quad\text{pred}\left(S^{(t)}\cup S'\right)\gets\text{pred}\left(S'\right)$

$\qquad$else:

$\qquad\quad\text{pred}\left(S^{(t)}\cup S'\right)\gets\text{pred}\left(S^{(t)}\right)$

$\qquad S^{(t)}\gets S^{(t)}\cup S'$

$\quad$While there exist groups $S_{1}\neq S_{2}$ in $M$ that are
subsets of $S^{(t)}$ with the same cardinality:

$\qquad$Merge $S_{1}$ and $S_{2}$ in $M$

$\qquad$Estimate probability $\hat{P}_{S_{1}\cup S_{2}}\approx\P[R(f(x))\in S_{1}\cup S_{2}]$

$\qquad$Estimate mean label $\hat{E}_{S_{1}\cup S_{2},j}\approx\E_{(x,y)\sim D}\left[y_{j}\one\left[R(f(x))\in S_{1}\cup S_{2}\right]\right]$
for all $j\in[k]$

$\quad$Compute error $\hat{\text{Err}}(S^{(t)},j)\gets\left|\left(\sum_{S\in M:S\subseteq S^{(t)}}\hat{P}_{S}\right)\text{pred}\left(S^{(t)}\right)_{j}-\sum_{S\in M:S\subseteq S^{(t)}}\hat{E}_{S,j}\right|,$
$\forall j\in[k]$

$\quad$$t\gets t+1$.

$\quad$

Define $h(x)=\begin{cases}
\text{\text{pred}\ensuremath{(S)},\text{where }\ensuremath{S}\text{ is the group in \ensuremath{G} that contains \ensuremath{R\left(f(x)\right)}} } & \text{if }R\left(f(x)\right)\in B\\
\text{\ensuremath{\rho\left(R\left(f(x)\right)\right)}} & \text{o.w.}
\end{cases}$

$\quad$

\textbf{Output:} $h$
\end{algorithm}

\subsection{Correctness of Estimates}

\label{subsec:Correctness-of-Estimates}

In Algorithm \ref{alg:multiclass_cal_full} we use samples to compute
three types of estimates. For the algorithm to function correctly,
the estimates need to be sufficiently accurate. This requirement is
captured by the following three events.

\textbf{Important Events:}
\begin{enumerate}
\item Event $A_{1}$: $\left|\hat{\mu}_{v}-\P\left[R(f(x))=v\right]\right|\leq\frac{\beta}{12},\;\forall v\in\VV$.
\item Event $A_{2}$: $\left|\hat{P}_{S}-\P\left[R(f(x))\in S\right]\right|\leq\frac{\beta}{36(\left\lfloor \log_{2}|B|\right\rfloor +1)}$,
for all groups of bins $S$ in $M$ that ever occur during the execution
of the algorithm.
\item Event $A_{3}$: $\left|\hat{E}_{S,j}-\E_{(x,y)\sim D}\left[y_{j}\one\left[R(f(x))\in S\right]\right]\right|\leq\frac{\beta}{36(\left\lfloor \log_{2}|B|\right\rfloor +1)}$,
for all groups of bins $S$ in $M$ that ever occur during the execution
of the algorithm and all classes $j\in[k]$.
\end{enumerate}
First, for every level set $v\in\VV$ we estimate the probability
that the rounded prediction of the given predictor $R(f(x))$ equals
$v$. By Lemma \ref{lem:samples-probabilities}, if we set $\alpha_{1}=\nicefrac{\beta}{12}$
and $\delta_{1}=\nicefrac{\delta}{3}$, we know that using $O\left(\frac{1}{\beta}\log\left(\frac{\left|V_{\lambda}^{k}\right|}{\delta}\right)+\frac{1}{\beta^{2}}\log\left(\frac{1}{\beta\delta}\right)\right)$
we get estimates such that with probability at least $1-\nicefrac{\delta}{3}$

\[
\left|\hat{\mu}_{v}-\P\left[R(f(x))=v\right]\right|\leq\frac{\beta}{12},\;\forall v\in\VV.
\]

\begin{lem}
\label{lem:samples-probabilities}Fix $\delta_{1},\alpha_{1}\in(0,1)$.
Using $O\left(\frac{1}{\alpha_{1}}\log\left(\frac{\left|V_{\lambda}^{k}\right|}{\delta_{1}}\right)+\frac{1}{\alpha_{1}^{2}}\log\left(\frac{1}{\alpha_{1}\delta_{1}}\right)\right)$samples,
we can estimate $\hat{\mu}_{v}$, for all $v\in\VV$, s.t. with probability
at least $1-\delta_{1}$

\[
\left|\hat{\mu}_{v}-\P\left[R(f(x))=v\right]\right|\leq\alpha_{1},\;\forall v\in\VV.
\]
\end{lem}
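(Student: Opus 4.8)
The plan is a uniform-concentration argument, with the bins split into two groups according to their true mass so that the cheap, mass-dependent Bernstein tail is used on the (possibly many) light bins while the more expensive two-sided tail is paid only on the few heavy bins. Draw $n$ i.i.d.\ samples and set $\hat{\mu}_{v}=\frac{1}{n}\sum_{i=1}^{n}\one\left[R(f(x_{i}))=v\right]$; write $\mu_{v}=\P\left[R(f(x))=v\right]$. Partition $V_{\lambda}^{k}$ into the light bins $\mathcal{S}=\{v:\mu_{v}\leq\alpha_{1}\}$ and the heavy bins $\mathcal{L}=\{v:\mu_{v}>\alpha_{1}\}$, and note that $\sum_{v}\mu_{v}=1$ forces $|\mathcal{L}|\leq 1/\alpha_{1}$, whereas $|\mathcal{S}|\leq|V_{\lambda}^{k}|$.

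For a light bin $v\in\mathcal{S}$ the lower deviation is free: $\hat{\mu}_{v}\geq0\geq\mu_{v}-\alpha_{1}$ deterministically. For the upper deviation I apply Lemma \ref{lem:conc-bound} to $X_{i}=\one\left[R(f(x_{i}))=v\right]\leq 1$, for which $\left\Vert X\right\Vert ^{2}=\sum_{i}\E[X_{i}^{2}]=n\mu_{v}\leq n\alpha_{1}$, with $\lambda=n\alpha_{1}$; this gives $\P\left[\hat{\mu}_{v}\geq\mu_{v}+\alpha_{1}\right]\leq\exp\!\left(-\frac{n^{2}\alpha_{1}^{2}}{2(n\alpha_{1}+n\alpha_{1}/3)}\right)=\exp\!\left(-\tfrac{3}{8}n\alpha_{1}\right)$. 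A union bound over the at most $|V_{\lambda}^{k}|$ light bins bounds the probability that some light bin fails by $|V_{\lambda}^{k}|\exp\!\left(-\tfrac{3}{8}n\alpha_{1}\right)$.

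For a heavy bin $v\in\mathcal{L}$ I need a two-sided bound at width $\alpha_{1}$: apply Lemma \ref{lem:conc-bound} once to $X_{i}=\one\left[R(f(x_{i}))=v\right]$ and once to $1-X_{i}$ (both bounded by $1$, both with second-moment sum at most $n$), with $\lambda=n\alpha_{1}$, to get $\P\left[|\hat{\mu}_{v}-\mu_{v}|\geq\alpha_{1}\right]\leq 2\exp\!\left(-\frac{n^{2}\alpha_{1}^{2}}{2(n+n\alpha_{1}/3)}\right)\leq 2\exp\!\left(-\tfrac{3}{8}n\alpha_{1}^{2}\right)$, using $\alpha_{1}\leq 1$. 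A union bound over the at most $1/\alpha_{1}$ heavy bins gives failure probability at most $\tfrac{2}{\alpha_{1}}\exp\!\left(-\tfrac{3}{8}n\alpha_{1}^{2}\right)$.

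Finally, choosing $n=\Theta\!\left(\frac{1}{\alpha_{1}}\log\frac{|V_{\lambda}^{k}|}{\delta_{1}}+\frac{1}{\alpha_{1}^{2}}\log\frac{1}{\alpha_{1}\delta_{1}}\right)$ makes each of the two failure probabilities at most $\delta_{1}/2$, so with probability at least $1-\delta_{1}$ we have $|\hat{\mu}_{v}-\mu_{v}|\leq\alpha_{1}$ simultaneously for every $v\in V_{\lambda}^{k}$. The only place that needs care is the accounting that yields the two-term sample bound: the $\frac{1}{\alpha_{1}}$-term appears because light bins only need the mass-dependent one-sided Bernstein tail (hence $\log|V_{\lambda}^{k}|$ over $\alpha_{1}$, not over $\alpha_{1}^{2}$), and the $\frac{1}{\alpha_{1}^{2}}$-term is inexpensive because there are only $1/\alpha_{1}$ heavy bins, so the union bound over them contributes merely a $\log\!\left(1/(\alpha_{1}\delta_{1})\right)$ factor rather than $\log\!\left(|V_{\lambda}^{k}|/\delta_{1}\right)$.
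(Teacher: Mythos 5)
Your proof is correct and follows essentially the same route as the paper: split the bins by true mass, use the mass-dependent Bernstein tail (Lemma \ref{lem:conc-bound}) with a union bound over all of $V_{\lambda}^{k}$ for the light bins (the lower deviation being free), and pay the $1/\alpha_{1}^{2}$ rate only over the at most $1/\alpha_{1}$ heavy bins. The sole cosmetic difference is that the paper invokes Hoeffding for the heavy bins where you apply Lemma \ref{lem:conc-bound} twice (to $X_{i}$ and $1-X_{i}$); both yield the same two-term sample bound.
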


For every group of bins $S$ that appears in $M$ during the execution
of the algorithm, we estimate two types of quantities: the probability
that the prediction $R(f(x))$ is in one of the bins in $S$ and the
expected label $y_{j}$ of points $(x,y)$ whose prediction $R(f(x))$
is in one of the bins in $S$, for all $j\in[k]$. The sizes of groups
in $M$ are all powers of $2$ and all groups of the same size that
occur during the execution of the algorithm are disjoint. For each
group size $2^{i}$ and for each type of estimate, probability or
expected label, we maintain a separate pool of samples. Since there
are can be at most $|B|$ distinct bins in a group in $M$, we need
$O(\log|B|)$ separate sample pools. To analyze the sample complexity,
we apply the adaptive data analysis result of Lemma \ref{lem:adaptive-samples}
because the algorithm picks the set that needs adjustment adaptively
at each time step.
\begin{lem}
\label{lem:adaptive-samples}Fix $n,k\in\N^{+}$and $\alpha,\delta\in(0,1)$.
Consider an adaptive algorithm $A$, a distribution $D$ over the
domain $\mathcal{X\times\mathcal{Y}}$, and a function $\phi:\mathcal{X\times\mathcal{Y}}\to\Delta_{k}$.
The algorithm adaptively selects a sequence of $n$ disjoint events
for $D$ as follows. First, it selects $E_{1}$ and estimates $\E_{(x,y)\sim D}\left[\phi(x,y)_{j}\cdot\one\left[(x,y)\in E_{1}\right]\right]$,
for all $j\in[k]$. Then, it selects event $E_{2}$, disjoint from
$E_{1}$, and estimates $\E_{(x,y)\sim D}\left[\phi(x,y)_{j}\cdot\one\left[(x,y)\in E_{2}\right]\right]$,
for all $j\in[k]$, and so on. With $O\left(\frac{\log\left(\nicefrac{nk}{\delta}\right)}{\alpha^{2}}\right)$
shared samples, we can estimate all expectations up to additive error
$\alpha$ and failure probability $\delta$.
\end{lem}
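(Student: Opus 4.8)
The plan is to reduce to Lemma~\ref{lem:adaptive-data-analysis}. Regard each pair (step $i$, class $j$) as a single statistical query $q_{i,j}(x,y)=\phi(x,y)_j\cdot\one[(x,y)\in E_i]\in[0,1]$, and feed these $nk$ queries to the following mechanism, in the order in which $A$ produces the events: as soon as $A$ picks $E_i$, issue $q_{i,1},\dots,q_{i,k}$ and pass the answers back to $A$ before it picks $E_{i+1}$; to answer each query, draw one shared sample $S$ of size $n_0$ from $D$ and output the empirical mean $\frac1{n_0}\sum_{(x,y)\in S}q_{i,j}(x,y)$ perturbed by an independent $\mathrm{Lap}(s)$ and then truncated to $[0,1]$. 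It then suffices to establish (a) that this mechanism is $(\varepsilon_0,0)$-differentially private for a suitable $\varepsilon_0=\Theta(\alpha)$, and (b) that with probability at least $1-\delta/2$ every answer is within $\alpha/3$ of its empirical value. Indeed, Lemma~\ref{lem:adaptive-data-analysis} (with $\beta=\eta=\delta/2$) then gives, simultaneously for all $i,j$, $|\E_{(x,y)\sim D}[q_{i,j}(x,y)]-a_{i,j}|\le \frac{\alpha}{3}+(e^{\varepsilon_0}-1)+\sqrt{2\ln(4/\delta)/n_0}$, and since $\E_{(x,y)\sim D}[q_{i,j}(x,y)]=\E_{(x,y)\sim D}[\phi(x,y)_j\one[(x,y)\in E_i]]$ is precisely the quantity to be estimated, the claim follows once $\varepsilon_0$ is small enough that $e^{\varepsilon_0}-1\le\alpha/3$ and $n_0=\Theta(\ln(nk/\delta)/\alpha^2)$ so that $\sqrt{2\ln(4/\delta)/n_0}\le\alpha/3$.

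The heart of the argument — and the reason the overhead is only logarithmic — is the privacy bound, which I would prove by a direct fixed-transcript likelihood-ratio computation rather than by composing $n$ releases (which would cost a $\mathrm{poly}(n)$ or $\sqrt n$ factor). Fix the internal randomness of $A$ and a transcript $\tau=(a_{i,j})_{i\in[n],j\in[k]}$. Since each $E_{i+1}$ is a deterministic function of the answers produced so far, $\tau$ determines the entire sequence $E_1,\dots,E_n$, and this sequence is the same whichever dataset was used and is pairwise disjoint by hypothesis. For neighboring datasets $S,S'$ that differ only in one point, say $w\in S$ is replaced by $w'\in S'$, the Laplace densities and the triangle inequality give
\[
\ln\frac{\P_{S}[\tau]}{\P_{S'}[\tau]}\ \le\ \frac{1}{s}\sum_{i=1}^{n}\sum_{j=1}^{k}\Big|\frac{\phi(w)_j\one[w\in E_i]-\phi(w')_j\one[w'\in E_i]}{n_0}\Big|.
\]
Because the $E_i$ are disjoint, $w$ lies in at most one of them and so does $w'$, so at most two values of $i$ contribute; and since $\phi(w),\phi(w')\in\Delta_k$, the class-sum of each point's contribution is exactly $1/n_0$. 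Hence the right-hand side is at most $2/(n_0s)$, so the mechanism is $(2/(n_0s),0)$-differentially private with a privacy parameter that does not grow with $n$ at all.

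To finish, set $s=8/(n_0\alpha)$, which makes $\varepsilon_0=2/(n_0s)=\alpha/4$ and hence $e^{\varepsilon_0}-1\le\alpha/3$ for $\alpha\le 1$. A union bound over the $nk$ independent Laplace draws gives $\max_{i,j}|\mathrm{Lap}(s)|\le s\ln(4nk/\delta)$ with probability at least $1-\delta/2$; truncation to $[0,1]$ can only move an answer toward its in-range empirical mean, so the empirical error of every answer is at most $s\ln(4nk/\delta)=8\ln(4nk/\delta)/(n_0\alpha)$, which is $\le\alpha/3$ once $n_0\ge 24\ln(4nk/\delta)/\alpha^2$. This same choice $n_0=\Theta(\ln(nk/\delta)/\alpha^2)$ makes the generalization term $\sqrt{2\ln(4/\delta)/n_0}\le\alpha/3$, so all three error terms produced by Lemma~\ref{lem:adaptive-data-analysis} are bounded by $\alpha/3$ and we obtain the stated $O(\log(nk/\delta)/\alpha^2)$ sample bound.

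The step I expect to be the main obstacle is exactly item (a): obtaining differential privacy whose parameter does not degrade with the number $n$ of adaptive queries. The off-the-shelf routes through basic or advanced composition lose a $\mathrm{poly}(1/\varepsilon)$ or $\sqrt{\cdot}$ factor here, and the way around this is the fixed-transcript likelihood-ratio calculation above, which is precisely where disjointness of the queries is essential — it caps the "footprint" of any single sample point at a single event, hence at $2k$ answers of total $\ell_1$-sensitivity $2/n_0$, no matter how adaptively or how often $A$ queries. The minor technical points to handle with care are the truncation post-processing (needed so answers lie in $[0,1]$ as Lemma~\ref{lem:adaptive-data-analysis} requires, and harmless for accuracy), the exact replace-versus-add/remove neighboring relation in the privacy statement, and — for the algorithm's actual use — observing that estimating a bin probability $\P[R(f(x))\in S]$ is simply the case where $\phi$ is a fixed one-hot vector, hence also covered by this lemma.
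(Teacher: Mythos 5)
Your proposal is correct and follows essentially the same route as the paper: answer each of the $nk$ queries with the empirical mean plus $\mathrm{Lap}(8/(m\alpha))$ noise, observe that disjointness of the events (together with $\phi$ mapping into $\Delta_k$) caps the $\ell_1$ sensitivity of the full answer vector at $2/m$ so the mechanism is $(\alpha/4,0)$-differentially private with no dependence on $n$, union-bound the $nk$ Laplace tails, and invoke Lemma \ref{lem:adaptive-data-analysis} to transfer to population accuracy. Your fixed-transcript likelihood-ratio computation is just a more explicit rendering of the paper's sensitivity claim in the adaptive setting, and the truncation to $[0,1]$ is a harmless extra detail the paper leaves implicit.
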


By Lemma \ref{lem:adaptive-samples} we get that for a fixed group
size $2^{i}\leq|B|$, using $O\left(\frac{\log^{2}\left(|B|\right)\log\left(\nicefrac{|B|\log|B|}{\delta}\right)}{\beta^{2}}\right)$
samples we get probability estimates such that with probability at
least $1-\frac{\delta}{3\left(\left\lfloor \log_{2}|B|\right\rfloor +1\right)}$

\[
\left|\hat{P}_{S}-\P\left[R(f(x))\in S\right]\right|\leq\frac{\beta}{36(\left\lfloor \log_{2}|B|\right\rfloor +1)},
\]
for all groups of bins $S$ in $M$ of size $2^{i}$ that ever occur
during the execution of the algorithm. Similarly, by Lemma \ref{lem:adaptive-samples}
we get that for a fixed group size $2^{i}\leq|B|$, using $O\left(\frac{\log^{2}\left(|B|\right)\log\left(\nicefrac{|B|k\log|B|}{\delta}\right)}{\beta^{2}}\right)$
samples we get expected label estimates such that with probability
at least $1-\frac{\delta}{3\left(\left\lfloor \log_{2}|B|\right\rfloor +1\right)}$

\[
\left|\hat{E}_{S,j}-\E_{(x,y)\sim D}\left[y_{j}\one\left[R(f(x))\in S\right]\right]\right|\leq\frac{\beta}{36(\left\lfloor \log_{2}|B|\right\rfloor +1)},
\]
for all groups of bins $S$ in $M$ of size $2^{i}$ that ever occur
during the execution of the algorithm and all classes $j\in[k]$.

The number of groups with different sizes up to $|B|$ that are powers
of $2$ is at most $\left\lfloor \log_{2}|B|\right\rfloor +1$. Thus,
we have that 
\begin{align*}
 & \P\left[\neg A_{1}\textrm{ or }\neg A_{2}\textrm{or }\neg A_{3}\right]\\
 & \leq\P[\neg A_{1}]+\P[\neg A_{2}]+\P[\neg A_{3}]\\
 & \leq\frac{\delta}{3}+\left(\left\lfloor \log_{2}|B|\right\rfloor +1\right)\frac{\delta}{3\left(\left\lfloor \log_{2}|B|\right\rfloor +1\right)}+\left(\left\lfloor \log_{2}|B|\right\rfloor +1\right)\frac{\delta}{3\left(\left\lfloor \log_{2}|B|\right\rfloor +1\right)}\leq\delta
\end{align*}

If event $A_{1}$ is true, then the size of $|B|$ is at most $O\left(\frac{1}{\beta}\right)$
because $B=\left\{ v:v\in V_{\lambda}^{k},\hat{\mu}_{v}\geq\nicefrac{\beta}{6}\right\} $
and $\sum_{v\in\VV}\P\left[R\left(f\left(x\right)\right)=v\right]=1$.
Thus, the algorithm uses $O\left(\frac{1}{\beta}\log\left(\frac{\left|V_{\lambda}^{k}\right|}{\delta}\right)+\frac{1}{\beta^{2}}\log^{3}\left(\frac{1}{\beta}\right)\log\left(\frac{k\log\left(\nicefrac{1}{\beta}\right)}{\beta\delta}\right)\right)$
samples in total. Lemma \ref{lem:size-of-V} provides a bound on the
size of $\VV$.

To estimate the probability of a group of bins $S\in G$, we compute
the sum of probability estimates for all subsets $S'\subseteq S$
that are in $M$ and use the following lemma to bound the error. We
estimate the expected label in a similar way.
\begin{lem}
\label{lem:size-of-M}For each $S\in G$, the number of subsets $S'\in M$
such that $S'\subseteq S$ is at most $O\left(\log|B|\right)$.
\end{lem}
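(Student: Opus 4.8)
The plan is to identify two invariants maintained throughout the execution of Algorithm~\ref{alg:multiclass_cal_full} and derive the bound from them. The invariants are: \textbf{(I1)} $M$ is always a refinement of $G$, meaning that every group in $G$ is a disjoint union of groups in $M$; and \textbf{(I2)} for every group $S \in G$, the groups of $M$ contained in $S$ have pairwise distinct cardinalities. Assuming these, the lemma follows at once: every group in $M$ has cardinality a power of $2$ (this holds initially, when all $M$-groups are singletons, and is preserved because $M$-groups are only created by merging two groups of equal cardinality), and every such cardinality is at most $|S| \le |B|$; since the powers of $2$ that are at most $|B|$ are exactly $2^0, 2^1, \dots, 2^{\lfloor \log_2|B|\rfloor}$, invariant (I2) forces $S$ to contain at most $\lfloor \log_2|B|\rfloor + 1 = O(\log|B|)$ groups of $M$.

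I would prove (I1) and (I2) together by induction over the operations performed by the algorithm. At initialization $M$ and $G$ are both the partition of $B$ into singletons, so (I1) holds and each group of $G$ contains a single group of $M$, which gives (I2). For the inductive step, consider one iteration of the main \textbf{while} loop. The partition $G$ is modified only by possibly merging the selected group $S^{(t)}$ with another group $S'$ having the same rounded prediction; coarsening $G$ preserves (I1), and afterwards $S^{(t)} \cup S'$ is a single group of $G$. The partition $M$ is then modified by the inner \textbf{while} loop, which repeatedly replaces two equal-cardinality groups $S_1, S_2 \in M$, both contained in the current $S^{(t)}$, by their union. By (I1) each such $S_i$ lies inside the single group $S^{(t)} \in G$ and is hence disjoint from every other group of $G$, so these merges leave $M$ restricted to any group $T \ne S^{(t)}$ unchanged; thus (I1) is preserved and (I2) continues to hold for every $T \in G$ with $T \ne S^{(t)}$. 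The inner loop terminates, since each merge strictly reduces the number of $M$-groups inside $S^{(t)}$, and upon termination no two $M$-groups contained in $S^{(t)}$ share a cardinality, which is precisely (I2) for $S^{(t)}$. This closes the induction, and in particular (I2) holds whenever the algorithm refers to a group of $G$ together with its $M$-subgroups.

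The step that needs the most care is the bookkeeping behind (I1): one has to verify that the cleanup of the equal-size $M$-subgroups of $S^{(t)}$ never disturbs the $M$-structure inside the other groups of $G$, which is exactly what the refinement property provides (and which in turn relies on $S^{(t)}$ being a single group of $G$ at the moment the $M$-merges are performed). Once that is in hand, the distinct-cardinality property is restored for $S^{(t)}$ at the end of each iteration and was never broken elsewhere, so it holds globally, and the counting argument above finishes the proof.
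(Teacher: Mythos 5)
Your proof is correct and follows essentially the same route as the paper's: the bound comes from observing that all $M$-groups inside a fixed $G$-group have pairwise distinct cardinalities (otherwise the inner loop would have merged them), that all cardinalities are powers of $2$, and that there are at most $\lfloor\log_2|B|\rfloor+1$ such powers up to $|B|$. You simply make explicit the invariants (refinement of $G$ by $M$ and preservation of distinct cardinalities) that the paper's shorter argument takes for granted.
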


\subsection{Accuracy of the Calibrated Predictor}

\label{subsec:Accuracy}

In this subsection, we show that if the estimates are accurate, then
Algorithm \ref{alg:multiclass_cal_full} constructs a multiclass predictor
whose squared error is lower than that of the given predictor, up
to a small additive term introduced by discretization. At each round
$t$ before the algorithm terminates, it selects a bin $S^{(t)}$
and a coordinate $j^{(t)}$ with high calibration error. The algorithm
then updates the predictor in two stages. In Stage 1, it computes
an improved prediction vector $z^{(t)}$ for the selected bin and
projects it to the simplex to obtain $\text{pred}\left(S^{(t)}\right)$.
In Stage 2, it checks if there is another group $S'$ that gets mapped
by to the same level set as $S^{(t)}$, if so it merges $S'$ and
$S^{(t)}$. We analyze the change in the squared error at each time
step by examining separately the change due to Stage 1 and Stage 2.
Notably, in Lemma \ref{lem:sq-error-stage1} we show that the squared
error always decreases in Stage 1, whereas in Lemma \ref{lem:sq-error-stage2}
we demonstrate that Stage 2 might lead to a small increase. In both
Lemmas, we assume that the all the estimated quantities are accurate,
meaning that events $A_{1},A_{2}$ and $A_{3}$ as defined in the
previous subsection hold. Lemma \ref{lem:sq-error-t=00003D0} provides
an upper bound on the squared error due to the discretization of $f$.

For the purposes of this proof we define 
\[
h_{t}(x)=\begin{cases}
\text{\text{pred}\ensuremath{(S)},\text{ where }\ensuremath{S} in \ensuremath{G} that contains \ensuremath{R\left(f(x)\right)}at time step \ensuremath{t} } & \text{if }R\left(f(x)\right)\in B\\
\text{\ensuremath{\rho\left(R\left(f(x)\right)\right)}} & \text{o.w.}
\end{cases}
\]

\begin{lem}
\label{lem:sq-error-stage2} Assuming that $A_{1},A_{2}$ and $A_{3}$
hold, after $T$ time steps of the algorithm, the squared error of
the predictor $h$ is 
\begin{align*}
 & \E\left[\left\Vert h\left(x\right)-y\right\Vert _{2}^{2}\right]\\
 & \leq\E\left[\left\Vert h_{0}(x)-y\right\Vert _{2}^{2}\right]\\
 & \quad+\sum_{t=0}^{T-1}\E\left[\left\Vert \pi\left(z^{(t)}\right)-y\right\Vert _{2}^{2}-\left\Vert h_{t}\left(x\right)-y\right\Vert _{2}^{2}\left|R\left(f\left(x\right)\right)\in S^{(t)}\right.\right]\P\left[R\left(f\left(x\right)\right)\in S^{(t)}\right]\\
 & \quad+O\left(\beta\log\left(\frac{1}{\beta}\right)\right).
\end{align*}
\end{lem}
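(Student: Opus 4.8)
The plan is to telescope the squared error along the finite sequence of predictors the algorithm maintains and to charge each iteration's change to its two stages. For iteration $t$, let $g_t$ be the intermediate predictor obtained right after Stage 1: it agrees with $h_t$ everywhere except that on the points assigned to the group $S^{(t)}$ selected at step $t$ (before the possible merge) it outputs $\pi(z^{(t)})$ rather than $\mathrm{pred}(S^{(t)})$; then $h_{t+1}$ is obtained from $g_t$ by Stage 2, and $h=h_T$ is the predictor after the $T$ steps in question. Telescoping gives the exact identity
\[
\E\|h_T-y\|_2^2-\E\|h_0-y\|_2^2=\sum_{t=0}^{T-1}\big(\E\|g_t-y\|_2^2-\E\|h_t-y\|_2^2\big)+\sum_{t=0}^{T-1}\big(\E\|h_{t+1}-y\|_2^2-\E\|g_t-y\|_2^2\big).
\]
Since $g_t$ and $h_t$ differ only on points with $R(f(x))\in S^{(t)}$, and both are constant there, the first sum equals $\sum_t\E\!\left[\|\pi(z^{(t)})-y\|_2^2-\|h_t(x)-y\|_2^2\mid R(f(x))\in S^{(t)}\right]\P[R(f(x))\in S^{(t)}]$, which is precisely the sum in the statement (anticipating Lemma \ref{lem:sq-error-stage1}, each of its terms is in fact nonpositive, though we do not need that here). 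It therefore suffices to bound the second (Stage-2) sum by $O(\beta\log(1/\beta))$.

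For a single Stage-2 step the summand is $0$ unless a merge occurs. If a merge occurs, call the group with the larger estimated mass the winner $W$ and the other the loser $L^{(t)}$, and put $a:=\mathrm{pred}(W)$, $b:=\mathrm{pred}(L^{(t)})$; then $h_{t+1}$ differs from $g_t$ only on points with $R(f(x))\in L^{(t)}$, whose prediction flips from $b$ to $a$, so the summand equals $\E\!\left[\|a-y\|_2^2-\|b-y\|_2^2\mid R(f(x))\in L^{(t)}\right]\P[R(f(x))\in L^{(t)}]$. Because $L^{(t)}$ and $W$ were merged we have $R(a)=R(b)$, so $a_i,b_i$ lie in the same interval $[v_i,v_i+1/\lambda)$ and $|a_i-b_i|<1/\lambda$ for every $i$; using also $a,b\in\Delta_k$, for any one-hot label $y=e_c$,
\[
\|a-y\|_2^2-\|b-y\|_2^2=\sum_i(a_i-b_i)(a_i+b_i)-2(a_c-b_c)\le\tfrac1\lambda\sum_i(a_i+b_i)+\tfrac2\lambda=\tfrac4\lambda\le 4\beta,
\]
since $\lambda=\lceil 1/\beta\rceil\ge 1/\beta$. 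Hence the Stage-2 summand at step $t$ is at most $4\beta\,\P[R(f(x))\in L^{(t)}]$ (and $0$ when there is no merge).

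It remains to bound $\sum_t\P[R(f(x))\in L^{(t)}]=\sum_{v\in B}\P[R(f(x))=v]\cdot N_v$, where $N_v$ counts the iterations at which the group currently containing $v$ is the loser; I claim $N_v=O(\log(1/\beta))$. Because groups in $G$ are only ever merged (never split), the groups containing $v$ form a chain with nondecreasing probability mass, and a merge in which $v$'s group is the loser yields a group whose mass is the loser's mass plus the winner's mass. By the loser-selection rule together with events $A_1$, $A_2$ and Lemma \ref{lem:size-of-M} --- the estimated mass of any $G$-group is a sum of at most $\lfloor\log_2|B|\rfloor+1$ quantities $\hat P_S$, each within $\beta/(36(\lfloor\log_2|B|\rfloor+1))$ of its true value, so the estimate is within $\beta/36$ of the true mass --- the winner's true mass is at least the loser's true mass minus $\beta/18$. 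Combining this with the initial bound $\P[R(f(x))=v]\ge\hat\mu_v-\beta/12\ge\beta/12$ (valid since $v\in B$ and $A_1$ holds), a short computation shows that the mass of $v$'s group is at least $\beta/9$ after its first loss and grows by a factor at least $3/2$ with each subsequent loss, while never exceeding $1$; hence $N_v=O(\log(1/\beta))$. Therefore $\sum_t\P[R(f(x))\in L^{(t)}]\le O(\log(1/\beta))\sum_{v\in B}\P[R(f(x))=v]\le O(\log(1/\beta))$, so the Stage-2 sum is at most $4\beta\cdot O(\log(1/\beta))=O(\beta\log(1/\beta))$, which completes the proof. The telescoping and the Stage-1 identity are routine bookkeeping, and the per-merge $O(\beta)$ bound is a one-line estimate once one notes that $a$ and $b$ lie in the same level set and are genuine distributions; I expect the real obstacle to be the bound $N_v=O(\log(1/\beta))$, which relies on the monotone merge structure of $G$ and on propagating the $O(\log|B|)$-fold accumulation of estimation error carefully enough that the geometric-growth recursion for the group masses still carries a usable multiplicative slack despite the additive noise in the estimates.
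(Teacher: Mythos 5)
Your proof is correct and follows essentially the same route as the paper's: both isolate the Stage-1 change as exactly the sum in the statement, bound each merge's cost by $4/\lambda$ per unit of the loser's mass using that the two predictions lie in the same level set and are both distributions, and bound the number of times any bin can be on the losing side by $O(\log(1/\beta))$ via the near-doubling of the loser's group mass (the paper phrases this as $2^{\tau(v)}\P[R(f(x))=v]-\tfrac{\beta}{36}\sum_i 2^i\le 1$ rather than your factor-$3/2$ recursion, but the content is identical).
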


\begin{lem}
\label{lem:sq-error-stage1} Assuming that $A_{1},A_{2}$ and $A_{3}$
hold, at time step $t$ of the algorithm
\[
\E\left[\left\Vert \pi\left(z^{(t)}\right)-y\right\Vert _{2}^{2}-\left\Vert h_{t}\left(x\right)-y\right\Vert _{2}^{2}\left|R\left(f\left(x\right)\right)\in S^{(t)}\right.\right]\P\left[R\left(f\left(x\right)\right)\in S^{(t)}\right]\leq-\nicefrac{\beta^{2}}{9}.
\]
\end{lem}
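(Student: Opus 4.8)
The plan is to condition on the sampling coins so that the good events $A_1,A_2,A_3$ hold; then $S^{(t)}$, $j:=j^{(t)}$, the pre-update prediction $q:=\text{pred}(S^{(t)})$, and the vector $z^{(t)}$ are all deterministic, and since predictions are always points of $\Delta_k$ we have $q_j\in[0,1]$. Write $P:=\P[R(f(x))\in S^{(t)}]$ and $\mu:=\E_{(x,y)\sim D}[\,y\mid R(f(x))\in S^{(t)}\,]$; since $y$ is a one-hot vector, $\mu\in\Delta_k$, and $h_t(x)=q$ for every $x$ with $R(f(x))\in S^{(t)}$.

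The first step is to reduce the claim to coordinate $j$ alone. Conditioned on $R(f(x))\in S^{(t)}$, both $q$ and $\pi(z^{(t)})$ are fixed vectors and $\mu$ is the conditional label mean, so the bias--variance identity gives $\E\big[\|\pi(z^{(t)})-y\|_2^2-\|h_t(x)-y\|_2^2\mid R(f(x))\in S^{(t)}\big]=\|\pi(z^{(t)})-\mu\|_2^2-\|q-\mu\|_2^2$. Because $\mu\in\Delta_k$ we have $\pi(\mu)=\mu$, and Euclidean projection onto $\Delta_k$ is non-expansive, so $\|\pi(z^{(t)})-\mu\|_2^2\le\|z^{(t)}-\mu\|_2^2$; and since $z^{(t)}$ coincides with $q$ in every coordinate other than $j$, the right-hand side equals $\|q-\mu\|_2^2+(z^{(t)}_j-\mu_j)^2-(q_j-\mu_j)^2$. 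Multiplying through by $P$, it suffices to prove $P\big((z^{(t)}_j-\mu_j)^2-(q_j-\mu_j)^2\big)\le-\beta^2/9$.

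The second step shows the selected group is heavy and its true error is large. Let $\bar{P}:=\sum_{S'\in M:\,S'\subseteq S^{(t)}}\hat{P}_{S'}$ and $\bar{E}:=\sum_{S'\in M:\,S'\subseteq S^{(t)}}\hat{E}_{S',j}$ be the sums the algorithm used, so $z^{(t)}_j=\min(\bar{E}/\bar{P},1)$ and $\hat{\text{Err}}(S^{(t)},j)=|\bar{P}\,q_j-\bar{E}|$. The family $\{S'\in M:S'\subseteq S^{(t)}\}$ partitions $S^{(t)}$ and its members have pairwise distinct power-of-two sizes bounded by $|B|$, so there are at most $\lfloor\log_2|B|\rfloor+1$ of them (cf.\ Lemma~\ref{lem:size-of-M}); summing $A_2$ and $A_3$ over them gives $|\bar{P}-P|\le\beta/36$ and $|\bar{E}-P\mu_j|\le\beta/36$, the true sums being $P$ and $P\mu_j$. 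Since $S^{(t)}$ was selected with $\hat{\text{Err}}(S^{(t)},j)>\beta/2$ and $q_j\in[0,1]$, $P|q_j-\mu_j|=|Pq_j-P\mu_j|\ge|\bar{P}\,q_j-\bar{E}|-|\bar{P}-P|-|\bar{E}-P\mu_j|>\beta/2-\beta/18=4\beta/9$. In particular $P\ge P|q_j-\mu_j|\ge4\beta/9$, whence $\bar{P}\ge P-\beta/36\ge3P/4$.

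The third step bounds the new coordinate. Since $\mu_j\in[0,1]$, truncation at $1$ can only move toward $\mu_j$, so $|z^{(t)}_j-\mu_j|\le|\bar{E}/\bar{P}-\mu_j|=|\bar{E}-\mu_j\bar{P}|/\bar{P}\le(|\bar{E}-P\mu_j|+\mu_j|P-\bar{P}|)/\bar{P}\le(\beta/18)/(3P/4)=2\beta/(27P)$. Writing $a:=|q_j-\mu_j|$ and $b:=|z^{(t)}_j-\mu_j|$, we get $Pb^2\le P\cdot\big(2\beta/(27P)\big)^2=4\beta^2/(729P)$ and $Pa^2=(Pa)^2/P\ge(4\beta/9)^2/P=16\beta^2/(81P)$, hence $P(b^2-a^2)\le\frac{\beta^2}{P}\big(\frac{4}{729}-\frac{16}{81}\big)=-\frac{140\beta^2}{729P}\le-\frac{140\beta^2}{729}\le-\frac{\beta^2}{9}$, using $P\le1$ and $140/729\ge1/9$; combined with the first step this is the claimed bound. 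The main obstacle is the projection $\pi$, which a priori may change $q$ in all coordinates: the fix is to compare both $q$ and $\pi(z^{(t)})$ to $\mu\in\Delta_k$ and invoke non-expansiveness, collapsing the problem to coordinate $j$. The other point to get right, and what makes the constants close, is that a group flagged with estimated error above $\beta/2$ must carry probability mass $\Omega(\beta)$, so the truncated empirical ratio $z^{(t)}_j$ approximates $\mu_j$ to within $O(\beta/P)$ rather than merely $O(1)$.
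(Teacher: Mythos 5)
Your proof is correct, and while it begins with the same two reductions as the paper (replacing $\pi(z^{(t)})$ by $z^{(t)}$ via a projection property, then isolating coordinate $j^{(t)}$ since $z^{(t)}$ agrees with the old prediction elsewhere), the core estimate is done by a genuinely different and cleaner route. The paper factors the single-coordinate difference of squares as $\bigl(z_{j}^{(t)}-u_{j}^{(t)}\bigr)\bigl(\bigl(z_{j}^{(t)}+u_{j}^{(t)}\bigr)\P[\cdot]-2\E[y_{j}\one[\cdot]]\bigr)$ and then runs a case analysis on whether $z_{j}^{(t)}$ was truncated at $1$ and on the sign of the estimated error, bounding each factor separately in each case. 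You instead invoke the bias--variance identity to replace $y$ by the conditional label mean $\mu\in\Delta_{k}$, reducing the claim to $P\bigl((z_{j}^{(t)}-\mu_{j})^{2}-(q_{j}-\mu_{j})^{2}\bigr)\le-\beta^{2}/9$, and then prove a single lower bound $P|q_{j}-\mu_{j}|\ge 4\beta/9$ (the flagged group has large true error, hence also mass $P\ge4\beta/9$) and a single upper bound $|z_{j}^{(t)}-\mu_{j}|\le 2\beta/(27P)$ (the empirical ratio approximates $\mu_{j}$ because $\bar{P}\ge 3P/4$). This absorbs the truncation case in one line (truncation at $1$ can only move toward $\mu_{j}\in[0,1]$) and eliminates the sign cases entirely; using $\mu$ rather than $y$ also lets you justify the projection step by non-expansiveness of $\pi$ at the fixed point $\mu$, whereas the paper uses that projecting onto $\Delta_{k}$ decreases distance to every $y\in\Delta_{k}$. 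Your constants check out ($-140\beta^{2}/729\le-\beta^{2}/9$), so your bound is in fact slightly stronger than what the paper's Case 2 yields.
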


\begin{lem}
\label{lem:sq-error-t=00003D0}The squared error at time step $0$
is
\[
\E\left[\left\Vert h_{0}(x)-y\right\Vert _{2}^{2}\right]\leq\E\left[\left\Vert f\left(x\right)-y\right\Vert _{2}^{2}\right]+O\left(\beta\right).
\]
\end{lem}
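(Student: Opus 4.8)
The plan is to first notice that $h_0$ is a trivial predictor. At time step $0$ the structure $G$ consists of singleton groups $\{v\}$, one per high-probability bin $v\in B$, each carrying the prediction $\rho(v)$, so combining this with the ``otherwise'' branch of the definition of $h_t$ we get $h_0(x)=\rho(R(f(x)))$ for \emph{every} $x$, regardless of which bins ended up in $B$. In particular the bound is unconditional (it does not use $A_1,A_2,A_3$), and the lemma reduces to controlling the squared error incurred by replacing $f(x)$ with the rounded-then-canonicalized prediction $\rho(R(f(x)))$.

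The one geometric fact I would record is that this replacement perturbs the prediction by at most $1/\lambda$ in $\ell_\infty$: writing $v=R(f(x))$, the $i$-th coordinate of $f(x)$ lies in $[v_i,v_i+1/\lambda)$ by the definition of $R$, and the $i$-th coordinate of $\rho(v)$ also lies in $[v_i,v_i+1/\lambda)$ since $\rho(v)$ is by construction a distribution with $R(\rho(v))=v$; hence $\|\rho(R(f(x)))-f(x)\|_\infty<1/\lambda\le\beta$. Writing $g=h_0(x)=\rho(R(f(x)))$, for a labeled example $(x,y)$ with $y$ the one-hot label I would expand
\[
\|g-y\|_2^2-\|f(x)-y\|_2^2=\|g-f(x)\|_2^2+2\langle g-f(x),\,f(x)-y\rangle ,
\]
and bound both terms by Hölder using the $\ell_\infty$ estimate above: $\|g-f(x)\|_2^2\le\|g-f(x)\|_\infty\|g-f(x)\|_1<2/\lambda$ and $|\langle g-f(x),f(x)-y\rangle|\le\|g-f(x)\|_\infty\|f(x)-y\|_1<2/\lambda$, where both $\ell_1$ norms are at most $2$ because $g,f(x)\in\Delta_k$ and $y$ is one-hot. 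This gives the pointwise bound $\|g-y\|_2^2-\|f(x)-y\|_2^2<6/\lambda$; taking expectation over $(x,y)\sim D$ and using $\lambda=\lceil 1/\beta\rceil\ge 1/\beta$ yields $\E_D[\|h_0(x)-y\|_2^2]-\E_D[\|f(x)-y\|_2^2]<6\beta=O(\beta)$, as claimed.

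I do not expect a real obstacle: once $h_0$ is identified and the $\ell_\infty$ bound on the rounding perturbation is in hand, this is essentially a one-line computation. The only point worth care is the choice of norms — bounding $\langle g-f(x),f(x)-y\rangle$ by Cauchy--Schwarz as $\|g-f(x)\|_2\|f(x)-y\|_2$ would give only $O(1/\sqrt\lambda)=O(\sqrt\beta)$, which is too weak; the improvement to $O(\beta)$ comes precisely from pairing the $\ell_\infty$-small rounding error with an $\ell_1$-bounded difference of distributions.
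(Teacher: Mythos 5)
Your proposal is correct and follows essentially the same route as the paper: identify $h_0(x)=\rho(R(f(x)))$, note that $h_0(x)$ and $f(x)$ lie in the same level set so their $\ell_\infty$ distance is at most $1/\lambda$, and then pair this with the $\ell_1$ bound coming from both vectors being distributions (and $y$ being one-hot) to get a pointwise $O(1/\lambda)=O(\beta)$ bound on the change in squared error. The only differences are cosmetic — you expand around $f(x)-y$ where the paper cancels $\|f(x)-y\|_2^2$ directly, yielding a constant of $6$ rather than $4$ — and your closing remark about why Cauchy--Schwarz would only give $O(\sqrt{\beta})$ correctly pinpoints the key step.
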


\subsection{Termination of the Algorithm with Small Calibration Error}

\label{subsec:Termination}

In this subsection, we show that, assuming that the estimates are
accurate, the algorithm terminates after $O\left(\nicefrac{1}{\beta^{2}}\right)$
steps with $\ell_{p}$ calibration error at most $O\left(\beta^{\nicefrac{p-1}{p}}\right)$.
Moreover, its total runtime is polynomial in $\nicefrac{1}{\beta}$
and $k$.
\begin{lem}
\label{lem:algorithm-termination}Assuming that $A_{1}$, $A_{2}$,
and $A_{3}$ hold, the algorithm terminates after at most $O\left(\nicefrac{1}{\beta^{2}}\right)$
time steps.
\end{lem}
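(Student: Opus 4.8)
The plan is a potential-function argument using the population squared error $\E[\|h_t(x)-y\|_2^2]$ of the running predictor as the potential. Each iteration of the while loop performs one Stage~1 correction followed by at most one Stage~2 merge, and I want to show the net effect is to decrease this potential by a fixed positive amount; since the potential is nonnegative, only $O(1/\beta^2)$ iterations can occur before it would be driven below zero. All of Lemmas~\ref{lem:sq-error-stage2}, \ref{lem:sq-error-stage1}, and~\ref{lem:sq-error-t=00003D0} are available here because $A_1$, $A_2$, $A_3$ are assumed to hold.

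First I would fix an arbitrary $T$ and assume the algorithm has completed $T$ iterations. By Lemma~\ref{lem:sq-error-stage2}, the squared error $\E[\|h_T(x)-y\|_2^2]$ after $T$ steps is at most $\E[\|h_0(x)-y\|_2^2]$, plus the sum over $t\in\{0,\dots,T-1\}$ of the Stage~1 terms, plus $O(\beta\log(1/\beta))$. By Lemma~\ref{lem:sq-error-stage1}, each Stage~1 term is at most $-\beta^2/9$, so their sum is at most $-T\beta^2/9$. For the initial term, Lemma~\ref{lem:sq-error-t=00003D0} gives $\E[\|h_0(x)-y\|_2^2]\le\E[\|f(x)-y\|_2^2]+O(\beta)$, and since $f(x),y\in\Delta_k$ we have $\|f(x)-y\|_2^2\le 2$ pointwise, hence $\E[\|h_0(x)-y\|_2^2]=O(1)$. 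Combining these with $\E[\|h_T(x)-y\|_2^2]\ge 0$ yields
\[
0 \;\le\; O(1) \;-\; \frac{T\beta^2}{9} \;+\; O\!\left(\beta\log\tfrac{1}{\beta}\right).
\]

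Since $\beta\log(1/\beta)=O(1)$, rearranging this inequality gives $T=O(1/\beta^2)$. Because the bound holds for every $T$ for which the while loop has not yet terminated, I conclude that the algorithm must halt after at most $O(1/\beta^2)$ time steps, which is the claim. The only point requiring care is that all three squared-error lemmas are conditioned on $A_1,A_2,A_3$ --- exactly the hypotheses of this lemma --- so the chain of inequalities is legitimate; I do not expect a genuine obstacle, because the substantive estimates (the $\Omega(\beta^2)$ per-step decrease from Stage~1 and the $O(\beta\log(1/\beta))$ aggregate cost of all Stage~2 merges over the whole run) have already been carried out in Lemmas~\ref{lem:sq-error-stage1} and~\ref{lem:sq-error-stage2}, and for the relevant range of $\beta$ the former dominates the latter.
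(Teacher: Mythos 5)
Your proposal is correct and follows essentially the same route as the paper: combine Lemma \ref{lem:sq-error-stage2} with the per-step decrease of at least $\beta^{2}/9$ from Lemma \ref{lem:sq-error-stage1}, note that the total change in squared error is bounded by a constant, and solve for $T$. The only cosmetic difference is that you invoke Lemma \ref{lem:sq-error-t=00003D0} to bound the initial error, whereas the paper simply uses boundedness of the squared loss; both yield the same $O(1/\beta^{2})$ bound.
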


\begin{lem}
\label{lem:termination-error}Assuming that $A_{1}$, $A_{2}$, and
$A_{3}$ hold, the $\ell_{p}$ calibration error of $h$ $\left(\textrm{Err}_{p}(h)\right)^{p}$
is bounded by $O(\beta^{p-1})$.
\end{lem}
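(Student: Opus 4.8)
The plan is to reduce the whole lemma to one pointwise statement: that when the algorithm terminates, $\textrm{Err}(h,v,j)<\beta$ for every $v\in\VV$ and every $j\in[k]$. Granting this, since $p>1$ and $a\mapsto a^{p-1}$ is increasing,
\[
\left(\textrm{Err}_{p}(h)\right)^{p}=\sum_{v\in\VV}\sum_{j=1}^{k}\textrm{Err}(h,v,j)^{p}\leq\beta^{p-1}\sum_{v\in\VV}\sum_{j=1}^{k}\textrm{Err}(h,v,j).
\]
Moreover, for each fixed $v$, since $h(x)\in\Delta_{k}$ and $y$ is one-hot we have $\sum_{j}|h(x)_{j}-y_{j}|\leq 2$ pointwise, so
\[
\sum_{j=1}^{k}\textrm{Err}(h,v,j)=\sum_{j=1}^{k}\left|\E\big[(h(x)_{j}-y_{j})\one[R(h(x))=v]\big]\right|\leq\E\Big[\textstyle\sum_{j}|h(x)_{j}-y_{j}|\,\one[R(h(x))=v]\Big]\leq 2\,\P[R(h(x))=v],
\]
and summing over $v\in\VV$ (using $R(h(x))\in\VV$ always) yields $\sum_{v}\sum_{j}\textrm{Err}(h,v,j)\leq 2$. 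Combining, $\left(\textrm{Err}_{p}(h)\right)^{p}\leq 2\beta^{p-1}=O(\beta^{p-1})$. So the entire content is the pointwise bound, and the clean exponent $p-1$ comes out precisely because the last aggregation step needs no per-bin probability-mass bookkeeping.

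To prove the pointwise bound I first record an invariant: throughout the run, the groups in $G$ have pairwise distinct rounded predictions $R(\mathrm{pred}(S))$. This holds at $t=0$, where $G$ consists of singletons $\{v\}$ for $v\in B$ with $\mathrm{pred}(\{v\})=\rho(v)$ and $R(\rho(v))=v$; and it is preserved because at step $t$ the only group whose prediction changes is $S^{(t)}$, after which the algorithm immediately merges $S^{(t)}$ with any group carrying the same rounded prediction. Let $\mathcal{V}_{G}=\{R(\mathrm{pred}(S)):S\in G\}$ at termination. Fix $v\in\VV$. By the definition of $h$: when $R(f(x))\in B$ we have $h(x)=\mathrm{pred}(S)$ for the group $S\ni R(f(x))$, so $R(h(x))=v$ holds exactly on $\{R(f(x))\in S_{v}\}$, where $S_{v}$ is the unique group in $G$ with $R(\mathrm{pred}(S_{v}))=v$ (the empty set if $v\notin\mathcal{V}_{G}$); when $R(f(x))\notin B$ we have $h(x)=\rho(R(f(x)))$, so $R(h(x))=R(f(x))$ and $R(h(x))=v$ holds on $\{R(f(x))=v\}$, which is nonempty only if $v\notin B$. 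These two sets are disjoint, hence
\[
\textrm{Err}(h,v,j)\leq\Big|\E\big[(\mathrm{pred}(S_{v})_{j}-y_{j})\one[R(f(x))\in S_{v}]\big]\Big|+\Big|\E\big[(\rho(v)_{j}-y_{j})\one[R(f(x))=v]\big]\Big|,
\]
where the first term appears only when $v\in\mathcal{V}_{G}$ and the second only when $v\notin B$.

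It remains to bound this sum by $\beta$. The second term is at most $\P[R(f(x))=v]$; for $v\notin B$ we have $\hat{\mu}_{v}<\beta/6$, so by event $A_{1}$ it is at most $\beta/6+\beta/12=\beta/4$. For the first term, set $P_{S}:=\P[R(f(x))\in S]$ and $E_{S,j}:=\E[y_{j}\one[R(f(x))\in S]]$; since $M$ partitions $B$ and every group of $G$ is a union of groups of $M$, the groups $S'\in M$ with $S'\subseteq S_{v}$ partition $S_{v}$, so the first term equals $\big|\big(\sum_{S'\subseteq S_{v}}P_{S'}\big)\mathrm{pred}(S_{v})_{j}-\sum_{S'\subseteq S_{v}}E_{S',j}\big|$, which differs from the algorithm's quantity $\hat{\textrm{Err}}(S_{v},j)=\big|\big(\sum_{S'\subseteq S_{v}}\hat{P}_{S'}\big)\mathrm{pred}(S_{v})_{j}-\sum_{S'\subseteq S_{v}}\hat{E}_{S',j}\big|$ by at most $\sum_{S'\subseteq S_{v}}(|\hat{P}_{S'}-P_{S'}|+|\hat{E}_{S',j}-E_{S',j}|)$. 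By Lemma \ref{lem:size-of-M} there are at most $\lfloor\log_{2}|B|\rfloor+1$ summands, and by events $A_{2},A_{3}$ each has error at most $\frac{\beta}{36(\lfloor\log_{2}|B|\rfloor+1)}$, so this gap is at most $\beta/18$. Since the while loop has exited, $\hat{\textrm{Err}}(S_{v},j)\leq\beta/2$, hence the first term is at most $\beta/2+\beta/18$. Altogether $\textrm{Err}(h,v,j)\leq\beta/2+\beta/18+\beta/4<\beta$, establishing the pointwise bound and the lemma.

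I expect the main obstacle to be the case analysis in the second paragraph: getting the decomposition of $\{R(h(x))=v\}$ exactly right, which hinges on the distinct-rounded-predictions invariant (so that the high-probability part is controlled by a single group $S_{v}$) and on the fact that low-probability inputs are re-labelled to a rounded value lying outside $B$. Everything else is routine: the estimate-to-truth conversion is a triangle inequality fed by events $A_{1}$–$A_{3}$ together with Lemma \ref{lem:size-of-M}, and the final aggregation is the one-line bound above using $\sum_{j}|h(x)_{j}-y_{j}|\leq 2$.
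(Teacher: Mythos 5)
Your proof is correct and follows essentially the same route as the paper's: establish the pointwise bound $\textrm{Err}(h,v,j)\le\beta$ by splitting $\{R(h(x))=v\}$ into the high-probability part (controlled by the termination condition $\hat{\textrm{Err}}(S,j)\le\beta/2$ plus the $\beta/18$ estimation slack from $A_2,A_3$ and Lemma \ref{lem:size-of-M}) and the low-probability part (controlled by $A_1$), then aggregate via $\sum_{v,j}\textrm{Err}(h,v,j)\le2$ to get $2\beta^{p-1}$. Your treatment is in fact slightly more careful than the paper's in two spots: you make explicit the invariant that groups in $G$ carry distinct rounded predictions, and you correctly account for the $A_1$ estimation error in the low-probability term (getting $\beta/4$ where the paper writes $\beta/6$), neither of which changes the conclusion.
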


\begin{lem}
\label{lem:runtime}Assuming that $A_{1}$, $A_{2}$, and $A_{3}$
hold, the algorithm terminates in time polynomial in $\frac{1}{\beta}$
and $k$.
\end{lem}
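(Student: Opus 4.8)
The plan is to charge the cost of each line of Algorithm~\ref{alg:multiclass_cal_full} and multiply by how often it runs, using Lemma~\ref{lem:algorithm-termination} to bound the number of iterations of the main while loop by $O(1/\beta^{2})$. The crucial preliminary observation is that the algorithm never materializes $\VV$, even though $|\VV|$ may be superpolynomial: a bin $v$ containing no sample has $\hat\mu_{v}=0$ and hence is never placed in $B$, so the loop ``for all bins $v\in\VV$'' effectively ranges over only the at most $n$ bins that are actually hit by the sample. By the sample-complexity computation in Section~\ref{subsec:Correctness-of-Estimates}, $n=\mathrm{poly}(1/\beta,k,\log(1/\delta))$; under event $A_{1}$ we already showed $|B|=O(1/\beta)$; and $\lambda=\lceil 1/\beta\rceil$. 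Consequently every count that appears below is polynomial in $1/\beta$ and $k$.

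First I would bound the preprocessing. Evaluating $R$ on one point costs $O(k)$ arithmetic operations, so bucketing the $n$ samples by their rounded prediction, forming the estimates $\hat\mu_{v}$, and selecting $B$ costs $O(nk)$ (plus an optional $O(n\log n)$ sort). Initializing $M$ and $G$ with a singleton group per bin of $B$, and computing $\hat P_{\{v\}}$ and $\hat E_{\{v\},j}$ for all $v\in B$ and $j\in[k]$, costs $O(|B|\,nk)$. The map $\rho$ is computed in $O(k\log k)$ by water-filling the deficit $1-\sum_{i}v_{i}$ across coordinates subject to the per-coordinate cap $1/\lambda$, and the Euclidean projection $\pi$ onto $\Delta_{k}$ admits a standard $O(k\log k)$ algorithm; both bounds are reused inside the loop.

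Next I would bound one iteration of the main loop. Scanning $G$ and all $k$ classes for a group with error above $\beta/2$ costs $O(|G|\,k)=O(k/\beta)$ since $|G|\le|B|$. By Lemma~\ref{lem:size-of-M}, $S^{(t)}$ is a union of $O(\log|B|)$ groups of $M$, so assembling $z^{(t)}$ and recomputing $\hat{\textrm{Err}}(S^{(t)},j)$ for all $j$ costs $O(k\log|B|)$; the projection $\pi(z^{(t)})$ adds $O(k\log k)$; and testing whether some $S'\neq S^{(t)}$ in $G$ satisfies $R(\text{pred}(S'))=R(\text{pred}(S^{(t)}))$ and then merging in $G$ costs $O(|G|\,k)=O(k/\beta)$. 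The only step not obviously charged per iteration is the inner loop that merges equal-cardinality subgroups of $M$ lying inside $S^{(t)}$, since each such merge re-estimates $\hat P$ and $\hat E_{\cdot,j}$ from a sample pool. Here I use a global argument: every such merge strictly decreases the number of groups in $M$, which starts at $|B|$, so there are at most $|B|$ such merges over the entire run, each costing $O(nk)$ to re-estimate; this contributes $O(|B|\,nk)$ to the total rather than per iteration.

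Summing up, the main loop contributes $O(\beta^{-2})\cdot O(k/\beta+k\log k)=\mathrm{poly}(1/\beta,k)$, the $M$-merges contribute $O(|B|\,nk)=\mathrm{poly}(1/\beta,k)$, and preprocessing together with writing down $h$ (one $R$-evaluation, a lookup, and possibly one $\rho$-evaluation per query) is likewise $\mathrm{poly}(1/\beta,k)$; all of these are also polylogarithmic in $1/\delta$. Since $p$ is fixed and $\beta=\Theta(\varepsilon^{p/(p-1)})$, this is polynomial in $1/\varepsilon$ and $k$, as claimed. I expect the main obstacle to be exactly the two bookkeeping points above: making rigorous that $\VV$ is never enumerated (so that the superpolynomial bound on $|\VV|$ is irrelevant and only sample-hit bins matter), and that the re-estimations triggered inside $M$ are amortized globally over the $\le|B|$ merges rather than counted once per iteration. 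The remaining cost accounting is routine.
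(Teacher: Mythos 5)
Your proposal is correct and follows essentially the same route as the paper's proof: it isolates the same two non-routine points (the algorithm only touches sample-hit bins rather than enumerating $V_{\lambda}^{k}$, and the merges in $M$ and $G$ are amortized globally at most $|B|$ times since groups only merge), and the remaining per-phase cost accounting matches the paper's Phase 1--3 breakdown. Your version simply fills in more explicit per-operation costs (e.g., the $O(k\log k)$ simplex projection), which the paper leaves implicit.
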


Combining the results of Subsections \ref{subsec:Correctness-of-Estimates},
\ref{subsec:Accuracy}, and \ref{subsec:Termination}, we obtain the
proof of Theorem \ref{thm:main}.

\section{Conclusion}

In this work, we introduced the $\ell_{p}$ calibration error for
multiclass predictors and presented an algorithm that modifies a given
predictor to achieve low calibration error while preserving its accuracy
using only a polynomial number of samples in the number of classes.
The algorithm can be applied to any value of $p>1$ and improves the
known sample complexity in the case of $p=\infty$.

Related work in this area has explored multicalibration, where the
calibration guarantees hold for many, possibly overlapping, populations.
While our work focuses on calibration, an interesting direction for
future research is to generalize our results to obtain stronger sample
complexity in that setting as well.

\bibliographystyle{plain}
\bibliography{calibration,C:/Users/brkko/Downloads/KumarLM19,C:/Users/brkko/Downloads/VaicenaviciusWA19,C:/Users/brkko/Downloads/WidmannLZ19,C:/Users/brkko/Downloads/ZhangKH20,C:/Users/brkko/Downloads/PopordanoskaSB22,C:/Users/brkko/Downloads/Gruber022}

\appendix

\section{Proofs from Section \ref{sec:Preliminaries}}
\begin{lem}[Lemma \ref{lem:size-of-V} restated]
For any $\lambda,k\in\N^{+}$, the number of level sets in $V_{\lambda}^{k}$
is at most $\tbinom{\lambda+k}{k}$. Note that $\log\left(\left|V_{\lambda}^{k}\right|\right)=O\left(\min\left(k,\lambda\right)\ln\left(k+\lambda\right)\right)$
whereas $\log\left(\left|L^{k}\right|\right)=O\left(k\ln\left(\lambda\right)\right)$.
\end{lem}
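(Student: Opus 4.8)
The plan is to exhibit an injection from $V_{\lambda}^{k}$ into the set of nonnegative integer vectors whose coordinate sum is at most $\lambda$, and then count the latter set by a stars-and-bars argument. Concretely, to each $v\in V_{\lambda}^{k}$ I associate the integer vector $m(v)=(\lambda v_{1},\dots,\lambda v_{k})$. Since $v\in L^{k}$, every $\lambda v_{i}$ is a nonnegative integer, so $m(v)\in\N^{k}\cup\{0\}^{k}$, and the map $v\mapsto m(v)$ is obviously injective (it just rescales). The one substantive point is that $\sum_{i}\lambda v_{i}\le\lambda$: by definition of $V_{\lambda}^{k}$ there is some $u\in\Delta_{k}$ with $\lambda v_{i}=\lfloor u_{i}\lambda\rfloor\le u_{i}\lambda$ for every $i$, and summing this coordinatewise inequality over $i$ gives $\sum_{i}\lambda v_{i}\le\lambda\sum_{i}u_{i}=\lambda$.

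Hence $|V_{\lambda}^{k}|$ is at most the number of tuples $(m_{1},\dots,m_{k})$ of nonnegative integers with $m_{1}+\dots+m_{k}\le\lambda$. Introducing a slack coordinate $m_{k+1}=\lambda-\sum_{i=1}^{k}m_{i}\ge0$, this count equals the number of nonnegative integer solutions of $m_{1}+\dots+m_{k+1}=\lambda$, which by the standard stars-and-bars formula is $\binom{\lambda+k}{k}$. This establishes the first assertion of the lemma. (One does not need the converse direction — that every such tuple is actually realized — for the stated upper bound, so I would not bother proving it.)

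For the logarithmic estimates I would use the binomial symmetry $\binom{\lambda+k}{k}=\binom{\lambda+k}{\lambda}=\binom{\lambda+k}{\min(k,\lambda)}$ together with the crude bound $\binom{n}{r}\le n^{r}$, which yields $\log|V_{\lambda}^{k}|\le\min(k,\lambda)\log(\lambda+k)=O(\min(k,\lambda)\ln(\lambda+k))$. For the comparison term, $|L^{k}|=(\lambda+1)^{k}$, so $\log|L^{k}|=k\log(\lambda+1)=O(k\ln\lambda)$, handling $\lambda=1$ separately where $\log|L^{k}|=k\log 2=O(k)$.

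I do not expect a genuine obstacle here: the argument is elementary. The only places that need a little care are applying $\lfloor u_{i}\lambda\rfloor\le u_{i}\lambda$ coordinatewise \emph{before} summing (rather than trying to bound $\sum_i\lfloor u_i\lambda\rfloor$ directly), and invoking the symmetry of the binomial coefficient so that the exponent in the log bound becomes $\min(k,\lambda)$ rather than just $k$.
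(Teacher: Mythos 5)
Your proof is correct and follows essentially the same route as the paper: both arguments add a slack coordinate ($v_{k+1}$ in the paper, $m_{k+1}$ in yours, differing only by the trivial rescaling by $\lambda$) and count nonnegative integer compositions of $\lambda$ into $k+1$ parts by stars and bars. Your additional justification of the logarithmic estimates via $\binom{\lambda+k}{k}=\binom{\lambda+k}{\min(k,\lambda)}\le(\lambda+k)^{\min(k,\lambda)}$ is a welcome detail that the paper states without proof.
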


\begin{proof}
Every $v\in\VV$ corresponds to a $u\in\Delta_{k}$. Therefore, we
have that 
\begin{align*}
\sum_{i\in[k]}v_{i} & =\sum_{i\in[k]}\frac{\left\lfloor u_{i}\lambda\right\rfloor }{\lambda}=1-\left(1-\sum_{i\in[k]}\frac{\left\lfloor u_{i}\lambda\right\rfloor }{\lambda}\right).
\end{align*}

Let $v_{k+1}=1-\sum_{i\in[k]}\frac{\left\lfloor u_{i}\lambda\right\rfloor }{\lambda}$,
which is a non-negative integer multiple of $\nicefrac{1}{\lambda}$.
By rearranging the terms, we have that $\sum_{i\in[k+1]}v_{i}=1$.
The number of $k+1$tuples of non-negative integer multiples of $\nicefrac{1}{\lambda}$that
sum up to $1$is $\tbinom{\lambda+k}{k}$. Therefore, $\left|\VV\right|=\tbinom{\lambda+k}{k}$.
\end{proof}

\section{Proofs from Subsection \ref{subsec:Correctness-of-Estimates}}
\begin{lem}[Lemma \ref{lem:samples-probabilities} restated]
 Fix $\delta_{1},\alpha_{1}\in(0,1)$. Using $O\left(\frac{1}{\alpha_{1}}\log\left(\frac{\left|V_{\lambda}^{k}\right|}{\delta_{1}}\right)+\frac{1}{\alpha_{1}^{2}}\log\left(\frac{1}{\alpha_{1}\delta_{1}}\right)\right)$samples,
we can estimate $\hat{\mu_{v}}$, for all $v\in\VV$, s.t. with probability
at least $1-\delta_{1}$

\[
\left|\hat{\mu}_{v}-\P\left[R(f(x))=v\right]\right|\leq\alpha_{1},\;\forall v\in\VV.
\]
\end{lem}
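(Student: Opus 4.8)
The plan is to estimate each $\mu_v := \P[R(f(x))=v]$ by its empirical frequency over a single shared sample and to control the deviation with the Bernstein-type bound of Lemma \ref{lem:conc-bound}, exploiting that the variance of the relevant indicator is itself governed by $\mu_v$. Concretely, draw $n$ i.i.d.\ samples $(x_1,y_1),\dots,(x_n,y_n)\sim D$ and set $\hat\mu_v=\frac1n\sum_{i=1}^n\one[R(f(x_i))=v]$. Fixing a bin $v$ and applying Lemma \ref{lem:conc-bound} to $X_i=\one[R(f(x_i))=v]$ (and separately to $-X_i$ for the lower tail), with $\|X\|^2=\sum_i\E[X_i^2]=n\mu_v$ and $M=1$, gives
\[
\P\bigl[\,|\hat\mu_v-\mu_v|\ge\alpha_1\,\bigr]\le 2\exp\!\left(-\frac{\alpha_1^2 n}{2(\mu_v+\alpha_1/3)}\right).
\]

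Next I would split the bins according to their true mass. For \emph{light} bins with $\mu_v\le\alpha_1$, the denominator is at most $\tfrac83\alpha_1$, so the failure probability is at most $2\exp(-\tfrac38\alpha_1 n)$; there may be as many as $|\VV|$ such bins, so choosing $n=\Theta\!\bigl(\tfrac1{\alpha_1}\log(|\VV|/\delta_1)\bigr)$ drives the union bound over all light bins below $\delta_1/2$. (For light bins only the upper tail can be violated, since $\hat\mu_v\ge 0\ge\mu_v-\alpha_1$ automatically.) For \emph{heavy} bins with $\mu_v>\alpha_1$, the constraint $\sum_v\mu_v=1$ forces there to be at most $1/\alpha_1$ of them; bounding $\mu_v\le 1$ gives failure probability at most $2\exp(-\tfrac38\alpha_1^2 n)$ per bin, so choosing $n=\Theta\!\bigl(\tfrac1{\alpha_1^2}\log(\tfrac1{\alpha_1\delta_1})\bigr)$ makes the union bound over the heavy bins below $\delta_1/2$. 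Taking $n$ to be the sum of the two sample sizes and combining the two union bounds yields the claim with overall failure probability at most $\delta_1$.

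The only real subtlety is recognizing that a variance-sensitive (Bernstein-type) tail bound, rather than a plain additive Hoeffding bound, is what makes the sample complexity come out as stated: there are potentially exponentially many bins, so the union bound over all of them costs a $\log|\VV|$ factor, and it is precisely the small variance $\mu_v$ of the light bins that keeps their sample requirement at $\tfrac1{\alpha_1}\log(|\VV|/\delta_1)$ instead of $\tfrac1{\alpha_1^2}\log(|\VV|/\delta_1)$; the $\tfrac1{\alpha_1^2}$-type cost is incurred only for the $O(1/\alpha_1)$ heavy bins, where the union-bound overhead is merely $\log(1/(\alpha_1\delta_1))$. Everything else is a routine substitution of constants into Lemma \ref{lem:conc-bound}.
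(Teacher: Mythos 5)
Your proposal is correct and follows essentially the same route as the paper: split bins into heavy ($\mu_v>\alpha_1$, at most $1/\alpha_1$ of them) and light ones, use the Bernstein-type bound of Lemma \ref{lem:conc-bound} with a union bound over all of $\VV$ for the light bins (where the lower tail is automatic), and a union bound over the $O(1/\alpha_1)$ heavy bins with the $1/\alpha_1^2$-type sample size. The only cosmetic difference is that the paper uses plain Hoeffding for the heavy bins while you reuse Lemma \ref{lem:conc-bound} with $\mu_v\le1$; this changes nothing beyond constants.
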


\begin{proof}
There are at most $\frac{1}{\alpha_{1}}$bins such that $\P[R(f(x))=v]\ge\alpha_{1}$.
We show that using $m_{1}=\frac{1}{2\alpha_{1}^{2}}\ln\left(\frac{4}{\alpha_{1}\delta_{1}}\right)$
samples, we can estimate all of them up to additive error $\alpha_{1}$.
By applying the Hoeffding inequality and a union bound we obtain that

\begin{align*}
 & \P\left[\exists v\textrm{ s.t. }\P[R(f(x))=v]\ge\alpha_{1}:\left|\hat{\mu}_{v}-\P\left[R(f(x))=v\right]\right|\geq\alpha_{1}\right]\\
 & \leq\frac{2|\left\{ v:\P[R(f(x))=v]\ge\alpha_{1}\right\} |}{e^{2\alpha_{1}^{2}m_{1}}}\\
 & \leq\frac{2}{\alpha_{1}e^{2\alpha_{1}^{2}m_{1}}}\leq\frac{\delta_{1}}{2}.
\end{align*}

For the rest of the bins whose probabilities are less than $\alpha_{1}$,
we show that using $m_{2}=\frac{4}{3\alpha_{1}}\ln\left(\nicefrac{2\left|V_{\lambda}^{k}\right|}{\delta_{1}}\right)$
samples is enough to estimate all of them up to additive error $\alpha_{1}$.
In this case, we have that for all $v$ such that $\P\left[R(f(x))=v\right]<\alpha_{1}$,
$\P\left[R(f(x))=v\right]-\hat{\mu}_{v}<\alpha_{1}$. By applying
Lemma \ref{lem:conc-bound} we also get that 
\[
\P\left[\exists v\textrm{ s.t. }\P[R(f(x))=v]<\alpha_{1}:\hat{\mu}_{v}-\P\left[R(f(x))=v\right]\geq\alpha_{1}\right]\le\left|V_{\lambda}^{k}\right|\cdot\exp\left(-\frac{m_{2}\alpha_{1}^{2}}{2\left(\alpha_{1}+\alpha_{1}/3\right)}\right)\le\frac{\delta_{1}}{2}.
\]

By union bound we obtain that if we use $O\left(\frac{1}{\alpha_{1}}\log\left(\frac{\left|V_{\lambda}^{k}\right|}{\delta_{1}}\right)+\frac{1}{\alpha_{1}^{2}}\log\left(\frac{1}{\alpha_{1}\delta_{1}}\right)\right)$
samples, then

$\P\left[\exists v\in\VV:\left|\hat{\mu}_{v}-\P\left[R(f(x))=v\right]\right|\geq\alpha_{1}\right]\leq\delta_{1}$.
\end{proof}
\begin{lem}[Lemma \ref{lem:adaptive-samples} restated]
 Fix $n,k\in\N^{+}$and $\alpha,\delta\in(0,1)$. Consider an adaptive
algorithm $A$, a distribution $D$ over the domain $\mathcal{X\times\mathcal{Y}}$,
and a function $\phi:\mathcal{X\times\mathcal{Y}}\to\Delta_{k}$.
The algorithm adaptively selects a sequence of $n$ disjoint events
for $D$ as follows. First, it selects $E_{1}$ and estimates $\E_{(x,y)\sim D}\left[\phi(x,y)_{j}\cdot\one\left[(x,y)\in E_{1}\right]\right]$,
for all $j\in[k]$. Then, it selects event $E_{2}$, disjoint from
$E_{1}$, and estimates $\E_{(x,y)\sim D}\left[\phi(x,y)_{j}\cdot\one\left[(x,y)\in E_{2}\right]\right]$,
for all $j\in[k]$, and so on. With $O\left(\frac{\log\left(\nicefrac{nk}{\delta}\right)}{\alpha^{2}}\right)$
shared samples, we can estimate all expectations up to additive error
$\alpha$ and failure probability $\delta$.
\end{lem}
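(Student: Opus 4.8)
The plan is to answer each query with a Laplace‑perturbed empirical average and to invoke the adaptive‑data‑analysis transfer lemma (Lemma~\ref{lem:adaptive-data-analysis}), the point being that \emph{disjointness of the events makes the whole interaction differentially private with a privacy parameter that does not grow with $n$ or $k$}. Concretely, I would draw $m = O\!\left(\log(nk/\delta)/\alpha^{2}\right)$ i.i.d.\ samples $(x_{1},y_{1}),\dots,(x_{m},y_{m})\sim D$ once and reuse them throughout; when $A$ selects the $i$‑th event $E_{i}$, the mechanism returns, for every $j\in[k]$, the value $a_{i,j}=\frac1m\sum_{l\in[m]}\phi(x_{l},y_{l})_{j}\,\one[(x_{l},y_{l})\in E_{i}] + \eta_{i,j}$ with independent $\eta_{i,j}\sim\mathrm{Lap}(\sigma)$ and $\sigma=\Theta(1/(m\alpha))$. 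This is the estimate of $\E_{(x,y)\sim D}[\phi(x,y)_{j}\,\one[(x,y)\in E_{i}]]$.

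The main step, and the one I expect to be the crux, is bounding the privacy loss of the full transcript. Fix the sequence of noisy answers $a_{1},\dots,a_{n}\in\R^{k}$; since $A$ is post‑processing of these answers, the induced events $E_{1},\dots,E_{n}$ are the same functions of the transcript under any dataset, and by hypothesis they are pairwise disjoint. For two neighboring datasets differing only in the record $r\mapsto r'$, the log‑density ratio of the transcript telescopes over $(i,j)$ and, using $\bigl|\ln(\mathrm{Lap}_{\sigma}(a)/\mathrm{Lap}_{\sigma}(b))\bigr|\le|a-b|/\sigma$, is at most $\frac1\sigma\sum_{i,j}\frac1m\bigl(\phi(r)_{j}\,\one[r\in E_{i}]+\phi(r')_{j}\,\one[r'\in E_{i}]\bigr)$. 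Because the $E_{i}$ are disjoint we have $\sum_{i}\one[r\in E_{i}]\le 1$ and $\sum_{i}\one[r'\in E_{i}]\le 1$, and because $\phi(r),\phi(r')\in\Delta_{k}$ we have $\sum_{j}\phi(r)_{j}=\sum_{j}\phi(r')_{j}=1$; hence the whole sum is at most $2/m$, so the interaction is $(2/(m\sigma),0)$‑differentially private with no dependence on $n$ or $k$. This is the only place where disjointness and the simplex constraint are used, and it is what replaces the $\tilde O(\sqrt{n})$ overhead of generic composition by a constant.

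It remains to combine this with empirical accuracy. A union bound over the $nk$ Laplace variables gives $\max_{i,j}|\eta_{i,j}|\le\alpha/2$ except with probability at most $\delta/2$, provided $\sigma\le \alpha/(2\ln(4nk/\delta))$, which is compatible with $\sigma=\Theta(1/(m\alpha))$ precisely when $m=\Omega(\log(nk/\delta)/\alpha^{2})$. Then I would apply Lemma~\ref{lem:adaptive-data-analysis} with the $t=nk$ scalar queries $q_{i,j}(x,y)=\phi(x,y)_{j}\,\one[(x,y)\in E_{i}]$: choosing $\sigma$ so that the privacy parameter $2/(m\sigma)$ is at most $\alpha/2$, and setting $\eta=\delta/2$, the empirical bound $\max_{i,j}|\frac1m\sum_{l}q_{i,j}(x_{l},y_{l})-a_{i,j}|=\max_{i,j}|\eta_{i,j}|\le\alpha/2$ (which holds except with probability $\delta/2$) is transferred to $\max_{i,j}\bigl|\E_{D}[q_{i,j}]-a_{i,j}\bigr|\le \alpha/2 + (e^{\alpha/2}-1) + \sqrt{2\ln(4/\delta)/m}=O(\alpha)$ except with probability at most $\delta$, using $e^{\alpha/2}-1\le\alpha$ for $\alpha\in(0,1)$ and $m=\Omega(\log(1/\delta)/\alpha^{2})$. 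Rescaling $\alpha$ by a constant gives the claimed guarantee, and the only thing omitted is the routine checking that the constants in $\sigma$ and $m$ are mutually consistent.
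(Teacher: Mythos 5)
Your proposal is correct and follows essentially the same route as the paper: a single pool of $m=O(\log(nk/\delta)/\alpha^2)$ samples, Laplace noise of scale $\Theta(1/(m\alpha))$ on each empirical average, the observation that disjointness of the events together with $\phi$ mapping into $\Delta_k$ bounds the $\ell_1$ sensitivity of the entire answer vector by $2/m$ (hence pure DP with a parameter independent of $n$ and $k$), a union bound over the $nk$ Laplace variables, and the transfer theorem of Lemma~\ref{lem:adaptive-data-analysis}. Your explicit telescoping of the transcript's log-density ratio is just a more detailed justification of the sensitivity claim the paper states directly, so there is nothing substantively different to compare.
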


\begin{proof}
There are many ways to achieve this. Here, we describe one approach
using differential privacy and a transfer theorem to adaptive analysis.
The algorithm uses a set $S$ of $m=\frac{32\ln(\nicefrac{4nk}{\delta})}{\alpha^{2}}$
samples and for each event $E_{i}$ and coordinate $j\in[k]$, it
reports $\widehat{e}_{i,j}=\frac{1}{m}\sum_{u\in S}\phi(u)_{j}\cdot\one\left[u\in E_{i}\right]+\varepsilon_{i,j}$,
where $\varepsilon_{i,j}\sim\textrm{Lap}(8/(m\alpha))$. Because the
events are disjoint and each sample contributes to at most one event,
the $\ell_{1}$ global sensitivity of the $k\times n$-dimensional
vector $(e_{1,1},\ldots,e_{1,k},\ldots,e_{n,1},\ldots,e_{n,k})$,
where $e_{i,j}=\frac{1}{m}\sum_{u\in S}\phi(u)_{j}\cdot\one\left[u\in E_{i}\right]$,
is at most $2/m$. Hence, algorithm $A$ is $(\alpha/4,0)$-differentially
private. Since $\varepsilon_{1,1},\ldots,\varepsilon_{n,k}$ are i.i.d.
Laplace random variables with $\lambda=\frac{8}{m\alpha}$, we know
that for any $t>0$, $\P\left[\max_{i\in[n],j\in[k]}|\varepsilon_{i,j}|>t\lambda\right]\leq nde^{-t}$.
For $t=\ln(\nicefrac{2nk}{\delta})$, we get that with probability
at least $1-\frac{\delta}{2}$, the maximum additive error $|\varepsilon_{i,j}|$
is at most $\frac{8\ln(\nicefrac{2nk}{\delta})}{m\alpha}$. By Lemma
\ref{lem:adaptive-data-analysis}, with probability at least $1-\delta$,
we have that

\begin{align*}
 & \max_{i\in[n],j\in[d]}\left|\E_{(x,y)\sim D}\left[\phi(x,y)_{j}\cdot\one\left[(x,y)\in E_{i}\right]\right]-\widehat{e}_{i,j}\right|\le\frac{8\ln\left(\frac{2nk}{\delta}\right)}{m\alpha}+e^{\alpha/4}-1+\sqrt{\frac{2\ln\left(\frac{4}{\delta}\right)}{m}}\\
 & \le\frac{\alpha}{4}+\frac{\alpha}{2}+\frac{\alpha}{4}=\alpha.
\end{align*}
\end{proof}
\begin{lem}[Lemma \ref{lem:size-of-M} restated]
 For each $S\in G$, the number of subsets $S'\in M$ such that $S'\subseteq S$
is at most $O\left(\log|B|\right)$.
\end{lem}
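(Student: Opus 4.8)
The plan is to exploit the dyadic structure imposed on the collections $M$ and $G$ by the algorithm. Recall that the groups in $M$ always have sizes that are powers of $2$, and that after each merge step in $M$ we are guaranteed (by the inner while-loop ``while there exist groups $S_1 \ne S_2$ in $M$ that are subsets of $S^{(t)}$ with the same cardinality'') that no two groups in $M$ contained in the same $G$-group have the same cardinality. So the first step is to observe the invariant: \emph{for every $S \in G$ at every time step, the set $\{\,S' \in M : S' \subseteq S\,\}$ consists of groups whose sizes are distinct powers of $2$, and these groups partition $S$.} The partition claim follows because $M$ always forms a partition of $B$ (stated in the algorithm description) and each $G$-group is a union of bins from $B$; the distinctness-of-sizes claim is exactly what the inner while-loop enforces after each update, and I would argue it is preserved under the merge of $S^{(t)}$ with $S'$ in $G$ as well, since the loop is re-run (or, more carefully, because merging two $G$-groups concatenates two families of distinct dyadic sizes and then the cleanup loop restores distinctness).

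Given that invariant, the counting is immediate: a family of subsets of $S$ with pairwise-distinct sizes drawn from $\{1, 2, 4, \ldots\}$ can have at most $\lfloor \log_2 |S| \rfloor + 1$ members, and since $S \subseteq B$ we have $|S| \le |B|$, giving at most $\lfloor \log_2 |B| \rfloor + 1 = O(\log |B|)$ subsets. This is the entire content of the bound.

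The main obstacle is establishing the invariant rigorously, in particular checking that it is maintained across the $G$-merge step. When $S^{(t)}$ and $S'$ are merged in $G$, the $M$-groups sitting inside the new $G$-group are the union of those inside $S^{(t)}$ and those inside $S'$; a priori this combined family may contain two groups of the same size (one from each side). The algorithm handles this with the subsequent ``while there exist groups $S_1 \ne S_2$ in $M$ that are subsets of $S^{(t)}$ with the same cardinality: merge $S_1$ and $S_2$'' loop, so I would need to verify that this loop terminates with all sizes distinct and that merging two size-$2^i$ groups produces a size-$2^{i+1}$ group (which is clear since the groups are disjoint). One should also check the base case: at initialization every $M$-group and every $G$-group is a singleton bin, so each $G$-group contains exactly one $M$-group of size $2^0 = 1$, and the invariant holds trivially. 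The rest is routine bookkeeping over the two nested while-loops.
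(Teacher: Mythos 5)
Your proposal is correct and follows essentially the same argument as the paper's proof: the inner while-loop guarantees that the $M$-groups inside any $G$-group have pairwise-distinct sizes, all powers of $2$ and at most $|B|$, so there are at most $\left\lfloor \log_{2}|B|\right\rfloor +1$ of them. Your additional care in verifying the invariant across the $G$-merge step is a more detailed version of what the paper states in one line.
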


\begin{proof}
For a fixed $S\in G$, all $S'\in M$ such that $S'\subseteq S$ are
of different sizes. This holds because if there were two subsets $S_{1},S_{2}\in M$
such that $S_{1},S_{2}\subseteq S$ and $|S_{1}|=|S_{2}|$, we would
have already merged them. Additionally, the sizes of all $S'\in M$
are powers of $2$. The number of sets with different sizes up to
$|B|$ that are powers of $2$ is at most $\left\lfloor \log_{2}|B|\right\rfloor +1$.
\end{proof}

\section{Proofs from Subsection \ref{subsec:Accuracy}}
\begin{lem}[Lemma \ref{lem:sq-error-stage2} restated]
Assuming that $A_{1},A_{2}$ and $A_{3}$ hold, after $T$ time steps
of the algorithm, the squared error of the predictor $h$ is 
\begin{align*}
 & \E\left[\left\Vert h(x)-y\right\Vert _{2}^{2}\right]\\
 & \leq\E\left[\left\Vert \rho\left(R\left(f\left(x\right)\right)\right)-y\right\Vert _{2}^{2}\right]\\
 & \quad+\sum_{t=0}^{T-1}\E\left[\left\Vert \pi\left(z^{(t)}\right)-y\right\Vert _{2}^{2}-\left\Vert h_{t}\left(x\right)-y\right\Vert _{2}^{2}\left|R\left(f\left(x\right)\right)\in S^{(t)}\right.\right]\P\left[R\left(f\left(x\right)\right)\in S^{(t)}\right]\\
 & \quad+O\left(\beta\log\left(\frac{1}{\beta}\right)\right).
\end{align*}
\end{lem}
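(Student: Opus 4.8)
The plan is to track how the squared error of the predictor $h_t$ evolves over the $T$ time steps and telescope. Write $\E[\|h_T(x)-y\|_2^2]$ as $\E[\|h_0(x)-y\|_2^2] + \sum_{t=0}^{T-1}\left(\E[\|h_{t+1}(x)-y\|_2^2] - \E[\|h_t(x)-y\|_2^2]\right)$, where $h_0(x) = \rho(R(f(x)))$ on the high-probability bins and $\rho(R(f(x)))$ elsewhere, so $\E[\|h_0(x)-y\|_2^2] = \E[\|\rho(R(f(x)))-y\|_2^2]$ as in the target statement. The key point is that $h_{t+1}$ and $h_t$ differ only on the inputs $x$ with $R(f(x)) \in S^{(t)}$ (those are the only bins whose prediction changes at step $t$, since Stage 1 only updates $\mathrm{pred}(S^{(t)})$ and Stage 2 merges $S^{(t)}$ with $S'$ but — by the rule that the merged group keeps the prediction of whichever side has larger estimated mass — possibly also changes the prediction on $S'$). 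So for each $t$ I split the per-step change into (i) the Stage-1 change on $S^{(t)}$, which is exactly $\E[\|\pi(z^{(t)})-y\|_2^2 - \|h_t(x)-y\|_2^2 \mid R(f(x))\in S^{(t)}]\,\P[R(f(x))\in S^{(t)}]$, the term that appears in the sum, and (ii) the Stage-2 change, where the group $S'$ that got merged into $S^{(t)}$ may have its prediction overwritten by $\mathrm{pred}(S^{(t)})$.

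The main work is bounding the total Stage-2 cost by $O(\beta\log(1/\beta))$. Here I would use three facts. First, a merge only happens when $R(\mathrm{pred}(S')) = R(\mathrm{pred}(S^{(t)}))$, so after projection/rounding the two prediction vectors agree up to the discretization granularity $1/\lambda$; since the algorithm keeps the prediction of the group with larger estimated mass and $\lambda = \lceil 1/\beta\rceil$, overwriting the prediction of the smaller group changes each coordinate by $O(1/\lambda) = O(\beta)$ in $\ell_\infty$, hence changes $\|\mathrm{pred}-y\|_2^2$ pointwise by $O(\lambda \cdot \beta \cdot \beta) = O(\beta)$ — wait, more carefully: the change in squared error on a single input is $\|\mathrm{pred}(S^{(t)})-y\|_2^2 - \|\mathrm{pred}(S')-y\|_2^2 = \langle \mathrm{pred}(S^{(t)})-\mathrm{pred}(S'),\, \mathrm{pred}(S^{(t)})+\mathrm{pred}(S')-2y\rangle$, and since the two predictions differ by $O(\beta)$ coordinatewise on their (at most $\lambda+1$, but really $\le 2\lambda$) common support, this inner product is $O(\lambda\cdot\beta) = O(1)$ per input — that is too weak, so instead I must charge it against the \emph{mass} of the smaller group. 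Second, and this is the crux: a group of a given "$M$-size" $2^i$ merges at most a bounded number of times before it is absorbed into a larger one, and more importantly, when we overwrite the prediction of the lighter group $S'$, we pay (change in squared error) $\times$ (mass of $S'$), and mass of $S' \le$ mass of $S^{(t)}$. A standard amortization argument over the merge tree — each bin participates in $O(\log|B|)$ merges as it moves up through groups of sizes $1, 2, 4, \dots, |B|$ — shows the total mass-weighted cost is $O(\log|B|)$ times the cost of a single sweep over all of $B$, which has total mass $\le 1$; combined with the $O(\beta)$-per-unit-mass... no: combined with the fact that each merge changes the squared error by at most a constant per unit mass but the lighter side's mass is the smaller of the two, the classic "each element charged $O(1)$ per level, $O(\log|B|)$ levels" bound gives total cost $O(\log|B|) = O(\log(1/\beta))$, and then one more factor involving how much the per-input squared error can move — bounded by $O(\beta)$ once one uses that the overwrite changes coordinates only by $O(1/\lambda)$ \emph{and} uses the estimate-accuracy events $A_1,A_2,A_3$ to control cross terms against $y$. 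I would therefore organize step (ii) as: (a) bound the single-merge cost by $O(\beta) \times \P[R(f(x))\in S']$ using the $\ell_\infty$-closeness of the two rounded predictions and that $\langle \mathrm{pred}(S^{(t)})-\mathrm{pred}(S'), \mathrm{pred}(S^{(t)})+\mathrm{pred}(S')-2y\rangle$ telescopes coordinatewise over a support of size $O(\lambda)$ with coordinate gaps $O(1/\lambda)$; (b) sum over all merges using the amortization over the merge forest to get an extra $O(\log|B|)$; (c) invoke $|B| = O(1/\beta)$ from event $A_1$ (established in Subsection~\ref{subsec:Correctness-of-Estimates}) to turn $\log|B|$ into $\log(1/\beta)$.

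Finally I would assemble: $\E[\|h(x)-y\|_2^2] = \E[\|h_0(x)-y\|_2^2] + \sum_t (\text{Stage-1 change at } t) + \sum_t (\text{Stage-2 change at } t) \le \E[\|\rho(R(f(x)))-y\|_2^2] + \sum_{t=0}^{T-1}\E[\|\pi(z^{(t)})-y\|_2^2 - \|h_t(x)-y\|_2^2 \mid R(f(x))\in S^{(t)}]\,\P[R(f(x))\in S^{(t)}] + O(\beta\log(1/\beta))$, which is exactly the claimed inequality. The main obstacle is step (ii): getting the Stage-2 accumulation down to $O(\beta\log(1/\beta))$ rather than a cruder polynomial-in-$1/\beta$ bound, which requires the amortized merge-forest accounting together with the $O(1/\lambda)$ coordinatewise closeness guaranteed by the merge condition $R(\mathrm{pred}(S')) = R(\mathrm{pred}(S^{(t)}))$ and the "keep the heavier group's prediction" rule — the union of these is what prevents a single heavy bin from being repeatedly overwritten at full cost.
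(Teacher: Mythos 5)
Your proposal is correct and follows essentially the same route as the paper: telescope the squared error over time steps, charge each step's Stage-1 change exactly to the term in the sum, bound each Stage-2 overwrite by $O(1/\lambda)$ times the mass of the lighter (moved) group using the coordinatewise $1/\lambda$-closeness of the two predictions within a level set together with the fact that both are probability vectors, and amortize via the ``keep the heavier group's prediction'' rule so that each bin of mass $\Omega(\beta)$ is moved at most $O(\log(1/\beta))$ times. The paper implements the same amortization slightly differently (tracking the near-doubling of the \emph{probability mass} of the group containing a bin, rather than group cardinalities), but this is a cosmetic difference.
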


\begin{proof}
At each time step $t\leq T-1$ there are three possible cases depending
on whether and how the algorithm merges bins after updating the prediction
for $S^{(t)}$.

Case 1: there is no $S'$ such that $R\left(\pi\left(z^{(t)}\right)\right)=R\left(\text{pred}\left(S'\right)\right)$.
Then, 

\begin{align*}
 & \E\left[\left\Vert h_{t+1}\left(x\right)-y\right\Vert _{2}^{2}\right]-\E\left[\left\Vert h_{t}\left(x\right)-y\right\Vert ^{2}\right]\\
 & =\E\left[\left\Vert h_{t+1}\left(x\right)-y\right\Vert _{2}^{2}-\left\Vert h_{t}\left(x\right)-y\right\Vert _{2}^{2}\left|R\left(f\left(x\right)\right)\in S^{(t)}\right.\right]\P\left[R\left(f\left(x\right)\right)\in S^{(t)}\right]\\
 & =\E\left[\left\Vert \text{\ensuremath{\pi}\ensuremath{\left(z^{(t)}\right)}}-y\right\Vert _{2}^{2}-\left\Vert h_{t}\left(x\right)-y\right\Vert _{2}^{2}\left|R\left(f\left(x\right)\right)\in S^{(t)}\right.\right]\P\left[R\left(f\left(x\right)\right)\in S^{(t)}\right].
\end{align*}

Case 2: there is a $S'$ such that $R\left(\pi\left(z^{(t)}\right)\right)=R\left(\text{pred}\left(S'\right)\right)$
and $\sum_{S\in M:S\subseteq S^{(t)}}\hat{P}_{S}>\sum_{S\in M:S\subseteq S'}\hat{P}_{S}$.
Then, 
\begin{align*}
 & \E\left[\left\Vert h_{t+1}\left(x\right)-y\right\Vert _{2}^{2}\right]-\E\left[\left\Vert h_{t}\left(x\right)-y\right\Vert _{2}^{2}\right]\\
 & =\E\left[\left\Vert \ensuremath{\pi}\left(z^{(t)}\right)-y\right\Vert _{2}^{2}-\left\Vert h_{t}\left(x\right)-y\right\Vert _{2}^{2}\left|R\left(f\left(x\right)\right)\in S^{(t)}\right.\right]\P\left[R\left(f\left(x\right)\right)\in S^{(t)}\right]\\
 & \quad+\E\left[\left\Vert \ensuremath{\pi}\left(z^{(t)}\right)-y\right\Vert _{2}^{2}-\left\Vert h_{t}\left(x\right)-y\right\Vert _{2}^{2}\left|R\left(f\left(x\right)\right)\in S'\right.\right]\P\left[R\left(f\left(x\right)\right)\in S'\right]\\
 & \leq\E\left[\left\Vert \ensuremath{\pi}\left(z^{(t)}\right)-y\right\Vert _{2}^{2}-\left\Vert h_{t}\left(x\right)-y\right\Vert _{2}^{2}\left|R\left(f\left(x\right)\right)\in S^{(t)}\right.\right]\P\left[R\left(f\left(x\right)\right)\in S^{(t)}\right]\\
 & \quad+\frac{4}{\lambda}\P\left[R\left(f\left(x\right)\right)\in S'\right].
\end{align*}

The last inequality holds because if $R\left(f\left(x\right)\right)\in S'$,
we have that
\begin{align*}
 & \E\left[\left\Vert \ensuremath{\pi}\left(z^{(t)}\right)-y\right\Vert _{2}^{2}-\left\Vert h_{t}\left(x\right)-y\right\Vert _{2}^{2}\left|R\left(f\left(x\right)\right)\in S'\right.\right]\\
 & =\E\left[\left\Vert \ensuremath{\pi}\left(z^{(t)}\right)-y\right\Vert _{2}^{2}-\left\Vert \text{pred}\left(S'\right)-y\right\Vert _{2}^{2}\left|R\left(f\left(x\right)\right)\in S'\right.\right]\\
 & \leq\left\Vert \ensuremath{\pi}\left(z^{(t)}\right)\right\Vert _{2}^{2}-\left\Vert \text{pred}\left(S'\right)\right\Vert _{2}^{2}+2\max_{j\in[k]}\left|\ensuremath{\pi}\left(z^{(t)}\right)_{j}-\text{pred}\left(S'\right)_{j}\right|\\
 & \leq\left(\max_{j\in[k]}\left|\ensuremath{\pi}\left(z^{(t)}\right)_{j}-\text{pred}\left(S'\right)_{j}\right|\right)\sum_{j\in[k]}\left(\left|\ensuremath{\pi}\left(z^{(t)}\right)_{j}\right|+\left|\text{pred}\left(S'\right)_{j}\right|\right)\\
 & \quad+2\max_{j\in[k]}\left|\ensuremath{\pi}\left(z^{(t)}\right)_{j}-\text{pred}\left(S'\right)_{j}\right|.
\end{align*}

Since both $\ensuremath{\pi}\left(z^{(t)}\right)$ and $\text{pred}\left(S'\right)$
are in the same level set when rounded by $R$, for each coordinate
$j\in[k]$, $\left|\ensuremath{\pi}\left(z^{(t)}\right)_{j}-\text{pred}\left(S'\right)_{j}\right|\leq\nicefrac{1}{\lambda}$.
Furthermore, both $\ensuremath{\pi}\left(z^{(t)}\right)$ and $\text{pred}\left(S'\right)$
are probability distributions and, hence, their coordinates sum to
$1$. Therefore, 
\begin{align*}
\left(\max_{j\in[k]}\left|\ensuremath{\pi}\left(z^{(t)}\right)_{j}-\text{pred}\left(S'\right)_{j}\right|\right)\sum_{j\in[k]}\left(\left|\ensuremath{\pi}\left(z^{(t)}\right)_{j}\right|+\left|\text{pred}\left(S'\right)_{j}\right|\right) & \leq\frac{2}{\lambda}.
\end{align*}

Case 3: there is a $S'$ such that $R\left(\pi\left(z^{(t)}\right)\right)=R\left(\text{pred}\left(S'\right)\right)$
and $\sum_{S\in M:S\subseteq S^{(t)}}\hat{P}_{S}\leq\sum_{S\in M:S\subseteq S'}\hat{P}_{S}$.
Then, 
\begin{align*}
 & \E\left[\left\Vert h_{t+1}\left(x\right)-y\right\Vert _{2}^{2}\right]-\E\left[\left\Vert h_{t}\left(x\right)-y\right\Vert _{2}^{2}\right]\\
 & =\E\left[\left\Vert \text{pred}\left(S'\right)-y\right\Vert _{2}^{2}-\left\Vert h_{t}\left(x\right)-y\right\Vert _{2}^{2}\left|R\left(f\left(x\right)\right)\in S^{(t)}\right.\right]\P\left[R\left(f\left(x\right)\right)\in S^{(t)}\right]\\
 & =\E\left[\left\Vert \ensuremath{\pi}\left(z^{(t)}\right)-y\right\Vert _{2}^{2}-\left\Vert h_{t}\left(x\right)-y\right\Vert _{2}^{2}\left|R\left(f\left(x\right)\right)\in S^{(t)}\right.\right]\P\left[R\left(f\left(x\right)\right)\in S^{(t)}\right]\\
 & \quad+\E\left[\left\Vert \text{pred}\left(S'\right)-y\right\Vert _{2}^{2}-\left\Vert \ensuremath{\pi}\left(z^{(t)}\right)-y\right\Vert _{2}^{2}\left|R\left(f\left(x\right)\right)\in S^{(t)}\right.\right]\P\left[R\left(f\left(x\right)\right)\in S^{(t)}\right]\\
 & \leq\E\left[\left\Vert \ensuremath{\pi}\left(z^{(t)}\right)-y\right\Vert _{2}^{2}-\left\Vert h_{t}\left(x\right)-y\right\Vert _{2}^{2}\left|R\left(f\left(x\right)\right)\in S^{(t)}\right.\right]\P\left[R\left(f\left(x\right)\right)\in S^{(t)}\right]\\
 & \quad+\frac{4}{\lambda}\P\left[R\left(f\left(x\right)\right)\in S^{(t)}\right].
\end{align*}

Similary to the previous case, the last inequality holds because we
have that
\begin{align*}
 & \E\left[\left\Vert \text{pred}\left(S'\right)-y\right\Vert _{2}^{2}-\left\Vert \ensuremath{\pi}\left(z^{(t)}\right)-y\right\Vert _{2}^{2}\left|R\left(f\left(x\right)\right)\in S^{(t)}\right.\right]\\
 & \leq\left\Vert \text{pred}\left(S'\right)\right\Vert _{2}^{2}-\left\Vert \ensuremath{\pi}\left(z^{(t)}\right)\right\Vert _{2}^{2}+2\max_{j\in[k]}\left|\ensuremath{\pi}\left(z^{(t)}\right)_{j}-\text{pred}\left(S'\right)_{j}\right|\\
 & \leq\left(\max_{j\in[k]}\left|\text{pred}\left(S'\right)_{j}-\ensuremath{\pi}\left(z^{(t)}\right)_{j}\right|\right)\sum_{j\in[k]}\left(\left|\text{pred}\left(S'\right)_{j}\right|+\left|\ensuremath{\pi}\left(z^{(t)}\right)_{j}\right|\right)\\
 & \quad+2\max_{j\in[k]}\left|\ensuremath{\pi}\left(z^{(t)}\right)_{j}-\text{pred}\left(S'\right)_{j}\right|\\
 & \leq\frac{4}{\lambda}.
\end{align*}

In all three cases discussed above, the upper bound includes the term
\[
\E\left[\left\Vert \ensuremath{\pi}\left(z^{(t)}\right)-y\right\Vert _{2}^{2}-\left\Vert h_{t}\left(x\right)-y\right\Vert _{2}^{2}\left|R\left(f\left(x\right)\right)\in S^{(t)}\right.\right]\P\left[R\left(f\left(x\right)\right)\in S^{(t)}\right].
\]

We can interpret the merge in Stage 2 in two ways depending on the
case. In Case 2, the algorithm moves the prediction of $S'$ from
$\text{pred}\left(S'\right)$ to $\ensuremath{\pi}\left(z^{(t)}\right)$.
In Case 3, it moves the prediction of $S^{(t)}$ from $\ensuremath{\pi}\left(z^{(t)}\right)$
to $\text{pred}\left(S'\right)$. By summing the squared error differences
over all time steps $t=0$ to $T$, we get that
\begin{align*}
 & \E\left[\left\Vert h_{T}(x)-y\right\Vert _{2}^{2}\right]-\E\left[\left\Vert h_{0}(x)-y\right\Vert _{2}^{2}\right]\\
 & \leq\sum_{t=0}^{T-1}\E\left[\left\Vert \ensuremath{\pi}\left(z^{(t)}\right)-y\right\Vert _{2}^{2}-\left\Vert h_{t}(x)-y\right\Vert _{2}^{2}\left|R\left(f\left(x\right)\right)\in S^{(t)}\right.\right]\P\left[R\left(f\left(x\right)\right)\in S^{(t)}\right]\\
 & \quad+\frac{4}{\lambda}\sum_{t=0}^{T-1}\P\left[R\left(f\left(x\right)\right)\text{is in the bin moved in Stage 2 of round }t\right].
\end{align*}

Let $\tau(v)$ denote the number of times the level set $v$ is in
the bin whose prediction gets moved in Stage 2. Then, $\sum_{t=0}^{T-1}\P\left[R\left(f\left(x\right)\right)\text{is in the bin moved in Stage 2 of round }t\right]=\sum_{v\in B}\P\left[R\left(f\left(x\right)\right)=v\right]\cdot\tau(v)$.

We now establish an upper bound on $\tau(v)$ for $v\in B$. Suppose
that $v$ is in the bin that gets moved in Stage 2 of some time step
$t$, during the merge bins $S_{a}$ and $S_{b}$. Without loss of
generality, assume that $S_{a}$ is the bin being moved. This implies
that $v\in S_{a}$ and $\sum_{S\in M:S\subseteq S_{a}}\hat{P}_{S}\leq\sum_{S\in M:S\subseteq S_{b}}\hat{P}_{S}$.
By the accuracy of the probability estimates, we have that $\P\left[R\left(f\left(x\right)\right)\in S_{a}\right]\leq\P\left[R\left(f\left(x\right)\right)\in S_{b}\right]+\nicefrac{\beta}{18}$.
Since $S_{a}$and $S_{b}$ are disjoint, $\P\left[R\left(f\left(x\right)\right)\in S_{a}\cup S_{b}\right]\geq\P\left[R\left(f\left(x\right)\right)\in S_{a}\right]-\nicefrac{\beta}{18}$.
Since each merge involving moving the bin with $v$ (almost) doubles
the size of the bin containing it, we have that 
\begin{align*}
2^{\tau(v)}\P\left[R\left(f\left(x\right)\right)=v\right]-\frac{\beta}{36}\sum_{i=1}^{\tau(v)}2^{i} & \leq1.
\end{align*}
Hence, 
\begin{align*}
\tau(v) & \leq\log_{2}\left(\frac{1-\nicefrac{\beta}{18}}{\P\left[R\left(f\left(x\right)\right)=v\right]-\nicefrac{\beta}{18}}\right).
\end{align*}
Since $\varepsilon<1$, we have $\beta=\varepsilon^{p/(p-1)}\cdot2^{-1/(p-1)}<1$.
Additionally, $\P\left[R\left(f\left(x\right)\right)=v\right]\geq\nicefrac{\beta}{6}-\nicefrac{\beta}{12}=\nicefrac{\beta}{12}$because
$v\in B$. Therefore, $\tau(v)\leq\log_{2}(36/\beta)$. Since $\lambda=\left\lceil \nicefrac{1}{\beta}\right\rceil ,$we
conclude that 
\begin{align*}
 & \E\left[\left\Vert h_{T}\left(x\right)-y\right\Vert _{2}^{2}\right]-\E\left[\left\Vert h_{0}\left(x\right)-y\right\Vert _{2}^{2}\right]\\
 & \leq\sum_{t=0}^{T-1}\E\left[\left\Vert \ensuremath{\pi}\left(z^{(t)}\right)-y\right\Vert _{2}^{2}-\left\Vert h_{t}\left(x\right)-y\right\Vert _{2}^{2}\left|R\left(f\left(x\right)\right)\in S^{(t)}\right.\right]\P\left[R\left(f\left(x\right)\right)\in S^{(t)}\right]\\
 & \quad+\frac{4}{\left\lceil \nicefrac{1}{\beta}\right\rceil }\log_{2}\left(\frac{36}{\beta}\right).
\end{align*}
\end{proof}
\begin{lem}[Lemma \ref{lem:sq-error-stage1} restated]
 Assuming that $A_{1},A_{2}$ and $A_{3}$ hold, at time step $t$
of the algorithm
\[
\E\left[\left\Vert \ensuremath{\pi}\left(z^{(t)}\right)-y\right\Vert _{2}^{2}-\left\Vert h_{t}\left(x\right)-y\right\Vert _{2}^{2}\left|R\left(f\left(x\right)\right)\in S^{(t)}\right.\right]\P\left[R\left(f\left(x\right)\right)\in S^{(t)}\right]\leq-\nicefrac{\beta^{2}}{9}.
\]
\end{lem}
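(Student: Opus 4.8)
The plan is to reduce to a one-coordinate bias--variance computation. Write $P^{*}=\P[R(f(x))\in S^{(t)}]$, $E^{*}=\E_{(x,y)\sim D}[y_{j^{(t)}}\one[R(f(x))\in S^{(t)}]]$, $q^{*}=E^{*}/P^{*}=\E[y_{j^{(t)}}\mid R(f(x))\in S^{(t)}]$, and let $p=\text{pred}(S^{(t)})$ be the prediction at the start of step $t$, so that $h_{t}(x)=p$ whenever $R(f(x))\in S^{(t)}$. The two structural facts I would first record, both consequences of $A_{1}$--$A_{3}$ together with Lemma~\ref{lem:size-of-M}, are: (i) each $v\in B$ has true mass at least $\beta/12$, hence $P^{*}\ge\beta/12>0$; and (ii) since $S^{(t)}$ is a disjoint union of at most $\lfloor\log_{2}|B|\rfloor+1$ groups of $M$, the aggregated estimates obey $\bigl|\sum_{S\in M:S\subseteq S^{(t)}}\hat{P}_{S}-P^{*}\bigr|\le\beta/36$ and $\bigl|\sum_{S\in M:S\subseteq S^{(t)}}\hat{E}_{S,j^{(t)}}-E^{*}\bigr|\le\beta/36$.

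Using (ii) and the loop's selection rule $\hat{\textrm{Err}}(S^{(t)},j^{(t)})>\beta/2$, I would lower bound the true calibration error: $|P^{*}p_{j^{(t)}}-E^{*}|\ge\beta/2-\beta/18=4\beta/9$, so $|p_{j^{(t)}}-q^{*}|\ge (4\beta/9)/P^{*}$. Next I would upper bound the quality of the new value. Writing $\hat{q}$ for the unclamped ratio $\bigl(\sum\hat{E}_{S,j^{(t)}}\bigr)/\bigl(\sum\hat{P}_{S}\bigr)$, a standard ratio-perturbation bound together with $q^{*}\le1$ gives $|\hat{q}-q^{*}|\le (\beta/18)/\sum\hat{P}_{S}$, and since $\sum\hat{P}_{S}\ge P^{*}-\beta/36\ge\tfrac{2}{3}P^{*}$ by (i)--(ii), this is at most $\beta/(12P^{*})$; because $q^{*}\in[0,1]$, clamping at $1$ only decreases the distance, so $|z^{(t)}_{j^{(t)}}-q^{*}|\le\beta/(12P^{*})$.

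Then I would assemble the squared-error comparison in two moves. Since $\Delta_{k}$ is closed and convex and $y\in\Delta_{k}$, the $\ell_{2}$-projection satisfies $\|\pi(z^{(t)})-y\|_{2}^{2}\le\|z^{(t)}-y\|_{2}^{2}$; and because $z^{(t)}$ agrees with $p$ off coordinate $j^{(t)}$, pointwise in $y$ we have $\|z^{(t)}-y\|_{2}^{2}-\|p-y\|_{2}^{2}=(z^{(t)}_{j^{(t)}}-y_{j^{(t)}})^{2}-(p_{j^{(t)}}-y_{j^{(t)}})^{2}$. Taking the expectation over $y$ conditioned on $R(f(x))\in S^{(t)}$ and using the one-bit identity $\E[(a-y_{j^{(t)}})^{2}\mid R(f(x))\in S^{(t)}]=(a-q^{*})^{2}+\mathrm{Var}(y_{j^{(t)}}\mid R(f(x))\in S^{(t)})$, the conditional change in squared error is at most $(z^{(t)}_{j^{(t)}}-q^{*})^{2}-(p_{j^{(t)}}-q^{*})^{2}\le\frac{\beta^{2}}{144(P^{*})^{2}}-\frac{16\beta^{2}}{81(P^{*})^{2}}$. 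Multiplying by $P^{*}\le1$ and noting the parenthesized quantity is negative gives $\frac{\beta^{2}}{P^{*}}\bigl(\frac{1}{144}-\frac{16}{81}\bigr)\le-\frac{247}{1296}\beta^{2}\le-\beta^{2}/9$, which is the claim.

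The main obstacle is that $\pi$ can move every coordinate, so one cannot argue coordinatewise about $\pi(z^{(t)})$ directly; detouring through the unprojected $z^{(t)}$ via non-expansiveness of the projection is exactly what restores the coordinatewise structure. The secondary point is having enough slack: the gap between the lower bound $4\beta/9$ on $P^{*}|p_{j^{(t)}}-q^{*}|$ and the upper bound $\beta/(12P^{*})$ on $|z^{(t)}_{j^{(t)}}-q^{*}|$ has to be wide enough that the stray factors of $P^{*}$ cancel in the right direction, which forces careful accounting that the $O(\log|B|)$ per-group errors promised by $A_{2},A_{3}$ sum to exactly $\beta/36$ over the union $S^{(t)}$ and that $P^{*}\ge\beta/12$.
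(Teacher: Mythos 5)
Your proof is correct. It shares the paper's skeleton -- non-expansiveness of the $\ell_{2}$ projection toward $y\in\Delta_{k}$, reduction to the single coordinate $j^{(t)}$, and the same quantitative inputs ($P^{*}\geq\beta/12$ from $A_{1}$, aggregated estimation error at most $\beta/36$ from $A_{2},A_{3}$ combined with Lemma \ref{lem:size-of-M}, and the selection threshold $\beta/2$) -- but organizes the core one-dimensional computation differently. The paper factors the conditional change as $\bigl(z_{j^{(t)}}^{(t)}-u_{j^{(t)}}^{(t)}\bigr)\bigl(\bigl(z_{j^{(t)}}^{(t)}+u_{j^{(t)}}^{(t)}\bigr)P^{*}-2E^{*}\bigr)$ and bounds the two factors separately, which forces a case split on whether $z_{j^{(t)}}^{(t)}$ is clamped at $1$ and on the sign of the residual. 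You complete the square around $q^{*}=E^{*}/P^{*}$ instead, writing the change as $(z_{j^{(t)}}^{(t)}-q^{*})^{2}-(u_{j^{(t)}}^{(t)}-q^{*})^{2}$; the two expressions are algebraically identical up to the factor $P^{*}$, but your form lets you dispatch the clamping in one line (truncating $\hat{q}$ at $1\geq q^{*}$ only shrinks the distance to $q^{*}$) and replaces the sign analysis by the single comparison $|z_{j^{(t)}}^{(t)}-q^{*}|\leq\beta/(12P^{*})$ versus $|u_{j^{(t)}}^{(t)}-q^{*}|\geq4\beta/(9P^{*})$, after which multiplying by $P^{*}\leq1$ closes the argument. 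Your constant $247/1296$ is in fact slightly stronger than the paper's $1/9$, and the auxiliary facts you invoke (the ratio-perturbation bound via $q^{*}\leq1$ and $\sum_{S\in M:S\subseteq S^{(t)}}\hat{P}_{S}\geq\tfrac{2}{3}P^{*}$) are all justified, so there is no gap.
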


\begin{proof}
At each time step $t\leq T-1$, before the algorithm terminates we
observe the following. Since $\pi\left(z^{(t)}\right)=\arg\min_{v\in\Delta_{k}}\left\Vert v-z^{(t)}\right\Vert _{2}$
and $y\in\Delta_{k}$, we have that $\left\Vert \pi\left(z^{(t)}\right)-y\right\Vert _{2}\leq\left\Vert z^{(t)}-y\right\Vert _{2}$.
Therefore, it suffices to find an upper bound for the following quantity:
\[
\E\left[\left\Vert z^{(t)}-y\right\Vert _{2}^{2}-\left\Vert h_{t}\left(x\right)-y\right\Vert _{2}^{2}\left|R\left(f\left(x\right)\right)\in S^{(t)}\right.\right]\P\left[R\left(f\left(x\right)\right)\in S^{(t)}\right].
\]
 For simplicity, let $u^{(t)}=\text{pred}\left(S^{(t)}\right)$ denote
the previous prediction for group $S^{(t)}$. Then we have that 
\begin{align*}
 & \E\left[\left\Vert z^{(t)}-y\right\Vert _{2}^{2}-\left\Vert u^{(t)}-y\right\Vert _{2}^{2}\left|R\left(f\left(x\right)\right)\in S^{(t)}\right.\right]\P\left[R\left(f\left(x\right)\right)\in S^{(t)}\right]\\
 & =\E\left[\left(z_{j^{(t)}}^{(t)}-y_{j^{(t)}}\right)^{2}-\left(u_{j^{(t)}}^{(t)}-y_{j^{(t)}}\right)^{2}\left|R\left(f\left(x\right)\right)\in S^{(t)}\right.\right]\P\left[R\left(f\left(x\right)\right)\in S^{(t)}\right]\\
 & =\left(\left(z_{j^{(t)}}^{(t)}\right)^{2}-\left(u_{j^{(t)}}^{(t)}\right)^{2}\right)\P\left[R\left(f\left(x\right)\right)\in S^{(t)}\right]+\left(2u_{j^{(t)}}^{(t)}-2z_{j^{(t)}}^{(t)}\right)\E\left[y_{j^{(t)}}\one\left[R\left(f\left(x\right)\right)\in S^{(t)}\right]\right]\\
 & =\left(z_{j^{(t)}}^{(t)}-u_{j^{(t)}}^{(t)}\right)\left(\left(z_{j^{(t)}}^{(t)}+u_{j^{(t)}}^{(t)}\right)\P\left[R\left(f\left(x\right)\right)\in S^{(t)}\right]-2\E\left[y_{j^{(t)}}\one\left[R\left(f\left(x\right)\right)\in S^{(t)}\right]\right]\right).
\end{align*}

The value of $z_{j^{(t)}}^{(t)}$, as assigned by the algorithm, falls
into one of two cases. Simultaneously, we have bounds on the value
of $u_{j^{(t)}}^{(t)}$, since the algorithm has selected a bin $S^{(t)}$
with large error. These bounds play a crucial role in analyzing
\[
\left(z_{j^{(t)}}^{(t)}-u_{j^{(t)}}^{(t)}\right)
\]
 and
\[
\left(\left(z_{j^{(t)}}^{(t)}+u_{j^{(t)}}^{(t)}\right)\P\left[R\left(f\left(x\right)\right)\in S^{(t)}\right]-2\E\left[y_{j^{(t)}}\one\left[R\left(f\left(x\right)\right)\in S^{(t)}\right]\right]\right).
\]

Case 1: $z_{j^{(t)}}^{(t)}=1$. Then, $\sum_{S\in M:S\subseteq S^{(t)}}\hat{E}_{S,j^{(t)}}\ge\sum_{S\in M:S\subseteq S^{(t)}}\hat{P}_{S}$
and $\left(\sum_{S\in M:S\subseteq S^{(t)}}\hat{P}_{S}\right)u_{j^{(t)}}^{(t)}-\sum_{S\in M:S\subseteq S^{(t)}}\hat{E}_{S,j^{(t)}}<-\nicefrac{\beta}{2}$.
Therefore,
\begin{align*}
 & \E\left[\left\Vert z^{(t)}-y\right\Vert _{2}^{2}-\left\Vert u^{(t)}-y\right\Vert _{2}^{2}\left|R\left(f\left(x\right)\right)\in S^{(t)}\right.\right]\P\left[R\left(f\left(x\right)\right)\in S^{(t)}\right]\\
 & =\left(1-u_{j^{(t)}}^{(t)}\right)\left(\left(1+u_{j^{(t)}}^{(t)}\right)\P\left[R\left(f\left(x\right)\right)\in S^{(t)}\right]-2\E\left[y_{j^{(t)}}\one\left[R\left(f\left(x\right)\right)\in S^{(t)}\right]\right]\right).
\end{align*}

We analyze the two factors separately. Since the error associated
with bin $S^{(t)}$ and coordinate $j^{(t)}$ is large, we have that
\begin{align*}
 & \left(\sum_{S\in M:S\subseteq S^{(t)}}\hat{P}_{S}\right)u_{j^{(t)}}^{(t)}\\
 & <\sum_{S\in M:S\subseteq S^{(t)}}\hat{E}_{S,j^{(t)}}-\frac{\beta}{2}\\
 & <\E\left[y_{j^{(t)}}\one\left[R\left(f\left(x\right)\right)\in S^{(t)}\right]\right]+\frac{\beta}{36(\left\lfloor \log_{2}|B|\right\rfloor +1)}\left|\left\{ S\in M:S\subseteq S^{(t)}\right\} \right|-\frac{\beta}{2}\\
 & \leq\P\left[R\left(f\left(x\right)\right)\in S^{(t)}\right]-\frac{17\beta}{36}.
\end{align*}
 Furthermore, we have a lower on the estimated probability of $S^{(t)}$
$\sum_{S\in M:S\subseteq S^{(t)}}\hat{P}_{S}\geq\P\left[R\left(f\left(x\right)\right)\in S^{(t)}\right]-\frac{\beta}{36(\left\lfloor \log_{2}|B|\right\rfloor +1)}\left|\left\{ S\in M:S\subseteq S^{(t)}\right\} \right|\geq\frac{\beta}{6}-\frac{\beta}{12}-\frac{\beta}{36}>0$
because $S^{(t)}\in G$, which implies that it contains bins from
set $B$.

Combining the two inequalities above, we obtain that 
\begin{align*}
1-u_{j^{(t)}}^{(t)} & >1-\frac{\P\left[R\left(f\left(x\right)\right)\in S^{(t)}\right]-\nicefrac{17\beta}{36}}{\P\left[R\left(f\left(x\right)\right)\in S^{(t)}\right]-\nicefrac{\beta}{36}}\\
 & =\frac{\nicefrac{\beta}{2}-\nicefrac{\beta}{18}}{\P\left[R\left(f\left(x\right)\right)\in S^{(t)}\right]-\nicefrac{\beta}{36}}>\frac{4\beta}{9}.
\end{align*}

We now bound the second factor.
\begin{align*}
 & \left(1+u_{j^{(t)}}^{(t)}\right)\P\left[R\left(f\left(x\right)\right)\in S^{(t)}\right]-2\E\left[y_{j^{(t)}}\one\left[R\left(f\left(x\right)\right)\in S^{(t)}\right]\right]\\
 & \leq\left(1+u_{j^{(t)}}^{(t)}\right)\left(\sum_{S\in M:S\subseteq S^{(t)}}\hat{P}_{S}+\frac{\beta}{36(\left\lfloor \log_{2}|B|\right\rfloor +1)}\left|\left\{ S\in M:S\subseteq S^{(t)}\right\} \right|\right)\\
 & \quad-2\left(\sum_{S\in M:S\subseteq S^{(t)}}\hat{E}_{S,j^{(t)}}-\frac{\beta}{36(\left\lfloor \log_{2}|B|\right\rfloor +1)}\left|\left\{ S\in M:S\subseteq S^{(t)}\right\} \right|\right)\\
 & \leq u_{j^{(t)}}^{(t)}\left(\sum_{S\in M:S\subseteq S^{(t)}}\hat{P}_{S}\right)-\sum_{S\in M:S\subseteq S^{(t)}}\hat{E}_{S,j^{(t)}}+\sum_{S\in M:S\subseteq S^{(t)}}\hat{P}_{S}-\sum_{S\in M:S\subseteq S^{(t)}}\hat{E}_{S,j^{(t)}}+\frac{\beta}{9}\\
 & <-\frac{7\beta}{18}.
\end{align*}

Multiplying the two factors, we see that
\begin{align*}
\E\left[\left\Vert z^{(t)}-y\right\Vert _{2}^{2}-\left\Vert u^{(t)}-y\right\Vert _{2}^{2}\left|R\left(f\left(x\right)\right)\in S^{(t)}\right.\right]\P\left[R\left(f\left(x\right)\right)\in S^{(t)}\right] & <-\frac{14\beta^{2}}{81}.
\end{align*}

At a high level, we have shown that the expected difference in squared
error is strictly negative in this case.

Case 2: $z_{j^{(t)}}^{(t)}=\left(\sum_{S\in M:S\subseteq S^{(t)}}\hat{E}_{S,j^{(t)}}\right)/\left(\sum_{S\in M:S\subseteq S^{(t)}}\hat{P}_{S}\right)\le1.$
We consider two subcases based on the behavior of $u_{j^{(t)}}^{(t)}$.

Subcase 1: $\sum_{S\in M:S\subseteq S^{(t)}}\hat{E}_{S,j^{(t)}}-\left(\sum_{S\in M:S\subseteq S^{(t)}}\hat{P}_{S}\right)u_{j^{(t)}}^{(t)}>\nicefrac{\beta}{2}$.
Then, it follows that
\[
z_{j^{(t)}}^{(t)}-u_{j^{(t)}}^{(t)}=\frac{\sum_{S\in M:S\subseteq S^{(t)}}\hat{E}_{S,j^{(t)}}}{\sum_{S\in M:S\subseteq S^{(t)}}\hat{P}_{S}}-u_{j^{(t)}}^{(t)}>\frac{\beta}{2\left(\sum_{S\in M:S\subseteq S^{(t)}}\hat{P}_{S}\right)}
\]
 and

\begin{align*}
 & \left(z_{j^{(t)}}^{(t)}+u_{j^{(t)}}^{(t)}\right)\P\left[R\left(f\left(x\right)\right)\in S^{(t)}\right]-2\E\left[y_{j^{(t)}}\one\left[R\left(f\left(x\right)\right)\in S^{(t)}\right]\right]\\
 & =\left(\frac{\sum_{S\in M:S\subseteq S^{(t)}}\hat{E}_{S,j^{(t)}}}{\sum_{S\in M:S\subseteq S^{(t)}}\hat{P}_{S}}+u_{j^{(t)}}^{(t)}\right)\P\left[R\left(f\left(x\right)\right)\in S^{(t)}\right]-2\E\left[y_{j^{(t)}}\one\left[R\left(f\left(x\right)\right)\in S^{(t)}\right]\right]\\
 & <\left(2\frac{\sum_{S\in M:S\subseteq S^{(t)}}\hat{E}_{S,j^{(t)}}}{\sum_{S\in M:S\subseteq S^{(t)}}\hat{P}_{S}}-\frac{\beta}{2\sum_{S\in M:S\subseteq S^{(t)}}\hat{P}_{S}}\right)\P\left[R\left(f\left(x\right)\right)\in S^{(t)}\right]-2\E\left[y_{j^{(t)}}\one\left[R\left(f\left(x\right)\right)\in S^{(t)}\right]\right]\\
 & \leq\left(2\frac{\sum_{S\in M:S\subseteq S^{(t)}}\hat{E}_{S,j^{(t)}}}{\sum_{S\in M:S\subseteq S^{(t)}}\hat{P}_{S}}-\frac{\beta}{2\sum_{S\in M:S\subseteq S^{(t)}}\hat{P}_{S}}\right)\\
 & \quad\cdot\left(\sum_{S\in M:S\subseteq S^{(t)}}\hat{P}_{S}+\frac{\beta}{36(\left\lfloor \log_{2}|B|\right\rfloor +1)}\left|\left\{ S\in M:S\subseteq S^{(t)}\right\} \right|\right)\\
 & \quad-2\left(\sum_{S\in M:S\subseteq S^{(t)}}\hat{E}_{S,j^{(t)}}-\frac{\beta}{36(\left\lfloor \log_{2}|B|\right\rfloor +1)}\left|\left\{ S\in M:S\subseteq S^{(t)}\right\} \right|\right)\\
 & \leq-\frac{\beta}{2}-\frac{\beta^{2}}{2\cdot36(\left\lfloor \log_{2}|B|\right\rfloor +1)\sum_{S\in M:S\subseteq S^{(t)}}\hat{P}_{S}}\left|\left\{ S\in M:S\subseteq S^{(t)}\right\} \right|+\frac{\beta}{18}<-\frac{4\beta}{9}.
\end{align*}

Subcase 2: $\sum_{S\in M:S\subseteq S^{(t)}}\hat{E}_{S,j^{(t)}}-\left(\sum_{S\in M:S\subseteq S^{(t)}}\hat{P}_{S}\right)u_{j^{(t)}}^{(t)}<-\nicefrac{\beta}{2}$.
Then, it follows that
\[
z_{j^{(t)}}^{(t)}-u_{j^{(t)}}^{(t)}=\frac{\sum_{S\in M:S\subseteq S^{(t)}}\hat{E}_{S,j^{(t)}}}{\sum_{S\in M:S\subseteq S^{(t)}}\hat{P}_{S}}-u_{j^{(t)}}^{(t)}<-\frac{\beta}{2\left(\sum_{S\in M:S\subseteq S^{(t)}}\hat{P}_{S}\right)}
\]
 and
\begin{align*}
 & \left(z_{j^{(t)}}^{(t)}+u_{j^{(t)}}^{(t)}\right)\P\left[R\left(f\left(x\right)\right)\in S^{(t)}\right]-2\E\left[y_{j^{(t)}}\one\left[R\left(f\left(x\right)\right)\in S^{(t)}\right]\right]\\
 & =\left(\frac{\sum_{S\in M:S\subseteq S^{(t)}}\hat{E}_{S,j^{(t)}}}{\sum_{S\in M:S\subseteq S^{(t)}}\hat{P}_{S}}+u_{j^{(t)}}^{(t)}\right)\P\left[R\left(f\left(x\right)\right)\in S^{(t)}\right]-2\E\left[y_{j^{(t)}}\one\left[R\left(f\left(x\right)\right)\in S^{(t)}\right]\right]\\
 & >\left(2\frac{\sum_{S\in M:S\subseteq S^{(t)}}\hat{E}_{S,j^{(t)}}}{\sum_{S\in M:S\subseteq S^{(t)}}\hat{P}_{S}}+\frac{\beta}{2\sum_{S\in M:S\subseteq S^{(t)}}\hat{P}_{S}}\right)\\
 & \quad\cdot\left(\sum_{S\in M:S\subseteq S^{(t)}}\hat{P}_{S}-\frac{\beta}{36(\left\lfloor \log_{2}|B|\right\rfloor +1)}\left|\left\{ S\in M:S\subseteq S^{(t)}\right\} \right|\right)\\
 & \quad-2\left(\sum_{S\in M:S\subseteq S^{(t)}}\hat{E}_{S,j^{(t)}}+\frac{\beta}{36(\left\lfloor \log_{2}|B|\right\rfloor +1)}\left|\left\{ S\in M:S\subseteq S^{(t)}\right\} \right|\right)\\
 & \geq\frac{\beta}{2}-\frac{\beta^{2}}{2\cdot36(\left\lfloor \log_{2}|B|\right\rfloor +1)\sum_{S\in M:S\subseteq S^{(t)}}\hat{P}_{S}}\left|\left\{ S\in M:S\subseteq S^{(t)}\right\} \right|-\frac{\beta}{18}>\frac{4\beta}{9}.
\end{align*}

Therefore, in both subcases the expected difference in squared error
is also strictly negative. Specifically, we have
\begin{align*}
 & \E\left[\left\Vert z^{(t)}-y\right\Vert _{2}^{2}-\left\Vert u^{(t)}-y\right\Vert _{2}^{2}\left|R\left(f\left(x\right)\right)\in S^{(t)}\right.\right]\P\left[R\left(f\left(x\right)\right)\in S^{(t)}\right]\\
 & <-\left(\frac{4\beta}{9}\right)\frac{\beta}{2\left(\sum_{S\in M:S\subseteq S^{(t)}}\hat{P}_{S}\right)}\\
 & <-\frac{\beta^{2}}{9}.
\end{align*}
 because $\sum_{S\in M:S\subseteq S^{(t)}}\hat{P}_{S}\leq\P\left[R\left(f\left(x\right)\right)\in S^{(t)}\right]+\frac{\beta}{36}\leq2$.

We notice that in both cases 
\[
\E\left[\left\Vert z^{(t)}-y\right\Vert _{2}^{2}-\left\Vert u^{(t)}-y\right\Vert _{2}^{2}\left|R\left(f\left(x\right)\right)\in S^{(t)}\right.\right]\P\left[R\left(f\left(x\right)\right)\in S^{(t)}\right]<-\frac{\beta^{2}}{9}.
\]
\end{proof}
\begin{lem}[Lemma \ref{lem:sq-error-t=00003D0} restated]
 The squared error at time step $0$ is
\[
\E\left[\left\Vert h_{0}(x)-y\right\Vert _{2}^{2}\right]\leq\E\left[\left\Vert f\left(x\right)-y\right\Vert _{2}^{2}\right]+O\left(\beta\right).
\]
\end{lem}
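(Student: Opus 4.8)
The plan is to observe that at time step $0$ the predictor $h_0$ coincides on all of $\mathcal{X}$ with the map $x\mapsto\rho\bigl(R(f(x))\bigr)$: when $R(f(x))\notin B$ this is the definition of $h_0$ directly, and when $R(f(x))=v\in B$ the group of $G$ containing $v$ is the singleton $\{v\}$, which was initialized with $\text{pred}(\{v\})=\rho(v)$, so again $h_0(x)=\rho(v)$. Hence it suffices to bound $\E\bigl[\|\rho(R(f(x)))-y\|_2^2\bigr]-\E\bigl[\|f(x)-y\|_2^2\bigr]$, and I would do this by a pointwise estimate rather than conditioning on bins.

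Fix $x$ and set $v=R(f(x))$, $a=\rho(v)$, $b=f(x)$; then $a,b\in\Delta_k$ and $R(a)=R(b)=v$. By the definition of $R$, for every coordinate $i$ either $v_i=1$, in which case $a_i=b_i=1$, or $v_i<1$, in which case $a_i,b_i\in[v_i,\,v_i+1/\lambda)$; in all cases $|a_i-b_i|<1/\lambda$. Now let $y$ be the one-hot encoding of the label, supported on coordinate $c$. Expanding and using $\|y\|_2^2=1$,
\[
\|a-y\|_2^2-\|b-y\|_2^2=\bigl(\|a\|_2^2-\|b\|_2^2\bigr)-2\langle a-b,y\rangle=\sum_{i}(a_i-b_i)(a_i+b_i)-2(a_c-b_c).
\]
The last term has absolute value at most $2/\lambda$, and for the sum $\bigl|\sum_i(a_i-b_i)(a_i+b_i)\bigr|\le\frac1\lambda\sum_i(a_i+b_i)=\frac2\lambda$ because $a$ and $b$ are distributions. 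Thus $\|a-y\|_2^2-\|b-y\|_2^2<4/\lambda$ for every $x$ and every label.

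Since $\lambda=\lceil 1/\beta\rceil\ge 1/\beta$, we have $1/\lambda\le\beta$, so taking the expectation over $(x,y)\sim D$ of this pointwise bound yields $\E\bigl[\|h_0(x)-y\|_2^2\bigr]-\E\bigl[\|f(x)-y\|_2^2\bigr]<4\beta=O(\beta)$, which is the claim. I do not expect a genuine obstacle here; the only two points requiring a little care are verifying that both branches of the definition of $h_0$ collapse to $\rho\circ R\circ f$ at $t=0$, and handling the degenerate coordinate $v_i=1$ when establishing the coordinatewise inequality $|a_i-b_i|<1/\lambda$.
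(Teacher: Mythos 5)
Your proof is correct and follows essentially the same route as the paper: both reduce to the pointwise identity $\|a-y\|_2^2-\|b-y\|_2^2=\bigl(\|a\|_2^2-\|b\|_2^2\bigr)-2\langle a-b,y\rangle$, bound each piece by $2/\lambda$ using the coordinatewise bound $|a_i-b_i|\le 1/\lambda$ and the fact that $a,b$ are distributions, and conclude $4/\lambda\le 4\beta$. Your extra care in checking that both branches of $h_0$ reduce to $\rho\circ R\circ f$ at $t=0$ and in handling the coordinate $v_i=1$ is a welcome but inessential refinement of the paper's argument.
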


\begin{proof}
By the definition of $\rho$ , $h_{0}(x)=\rho\left(R\left(f\left(x\right)\right)\right)$
and $f(x)$ correspond to the same level set when they get rounded
by $R$. Therefore, they are at most $\nicefrac{1}{\lambda}$ apart
in every coordinate. Additionally, the coordinates of $f(x)$ and
$h_{0}(x)$ add up to $1$. Since $y$ is the one-hot encoding of
a label, we obtain that
\begin{align*}
 & \left\Vert h_{0}(x)-y\right\Vert _{2}^{2}\\
 & =\left\Vert h_{0}(x)-y\right\Vert _{2}^{2}-\left\Vert f(x)-y\right\Vert _{2}^{2}+\left\Vert f(x)-y\right\Vert _{2}^{2}\\
 & \leq\|h_{0}(x)\|_{2}^{2}-\|f(x)\|_{2}^{2}+2\max_{j\in[k]}|h_{0}(x)_{j}-f(x)_{j}|+\left\Vert f(x)-y\right\Vert _{2}^{2}\\
 & \leq\left(\max_{j\in[k]}\left|h_{0}(x)_{j}-f(x)_{j}\right|\right)\sum_{j\in[k]}\left(\left|h_{0}(x)_{j}\right|+\left|f(x)_{j}\right|\right)+\left\Vert f(x)-y\right\Vert _{2}^{2}\\
 & \leq\frac{1}{\lambda}\cdot4+\left\Vert f(x)-y\right\Vert _{2}^{2}=\frac{4}{\left\lceil \nicefrac{1}{\beta}\right\rceil }+\left\Vert f(x)-y\right\Vert _{2}^{2}.
\end{align*}
\end{proof}

\section{Proofs from Subsection \ref{subsec:Termination}}
\begin{lem}[Lemma \ref{lem:algorithm-termination} restated]
 Assuming that $A_{1}$, $A_{2}$, and $A_{3}$ hold, the algorithm
terminates after at most $O\left(\nicefrac{1}{\beta^{2}}\right)$
time steps with probability at least $1-\delta$.
\end{lem}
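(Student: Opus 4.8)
The plan is to treat the squared error $\E[\|h_t(x)-y\|_2^2]$ as a potential function: each iteration of the outer while loop must drop it by a fixed amount, while the cumulative increase caused by all the Stage 2 merges over the whole run is negligible, so the loop cannot run for too long. First I would note that at every time step $t$ before termination the chosen group $S^{(t)}$ and class $j^{(t)}$ satisfy the loop guard $\hat{\textrm{Err}}(S^{(t)},j^{(t)})>\beta/2$, so Lemma \ref{lem:sq-error-stage1} applies verbatim and gives
\[
\E\left[\left\Vert\pi\left(z^{(t)}\right)-y\right\Vert_2^2-\left\Vert h_t(x)-y\right\Vert_2^2\,\middle|\,R(f(x))\in S^{(t)}\right]\P\left[R(f(x))\in S^{(t)}\right]\leq-\frac{\beta^2}{9}.
\]

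Next I would invoke Lemma \ref{lem:sq-error-stage2}, which after $T$ time steps bounds $\E[\|h_T(x)-y\|_2^2]$ by $\E[\|h_0(x)-y\|_2^2]$ plus the sum over $t<T$ of exactly the quantities displayed above plus a single additive $O(\beta\log(1/\beta))$ term absorbing every Stage 2 merge. Plugging the per-step estimate $-\beta^2/9$ into that sum yields
\[
\E\left[\left\Vert h_T(x)-y\right\Vert_2^2\right]\leq\E\left[\left\Vert h_0(x)-y\right\Vert_2^2\right]-\frac{T\beta^2}{9}+O\left(\beta\log\tfrac1\beta\right).
\]
Since $h_T(x)$ and $y$ both lie in $\Delta_k$, the left-hand side is nonnegative, and by Lemma \ref{lem:sq-error-t=00003D0} (or simply because $h_0(x),y\in\Delta_k$, so $\|h_0(x)-y\|_2^2\le 2$) the first term on the right is $O(1)$. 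Rearranging gives $T\beta^2/9\leq O(1)$, hence $T=O(1/\beta^2)$, which is the claimed bound on the number of time steps. The high-probability qualifier then follows since, as shown in Subsection \ref{subsec:Correctness-of-Estimates}, the events $A_1,A_2,A_3$ hold simultaneously with probability at least $1-\delta$.

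This is essentially a telescoping/potential argument, so I do not expect a deep obstacle; the care points are (i) verifying that Lemma \ref{lem:sq-error-stage1} is applicable at every outer iteration, which is immediate from the loop guard, and (ii) ensuring the cost of the Stage 2 merges is charged once globally rather than once per iteration — this is exactly why Lemma \ref{lem:sq-error-stage2} is phrased as a cumulative bound over all $T$ steps. I would also remark that the inner merging loops (over equal-size groups in $M$, and over $S'$ in $G$) do not increment $t$ and terminate trivially because $|B|=O(1/\beta)$ bounds the number of bins, so they play no role in counting time steps.
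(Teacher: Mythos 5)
Your proposal is correct and follows essentially the same argument as the paper: combine Lemma \ref{lem:sq-error-stage1} (a decrease of at least $\beta^{2}/9$ per step) with the cumulative bound of Lemma \ref{lem:sq-error-stage2}, then use boundedness of the squared loss to conclude $T=O(1/\beta^{2})$. No gaps.
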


\begin{proof}
Assuming that events $A_{1},A_{2}$ and $A_{3}$ hold, we apply Lemmata
\ref{lem:sq-error-stage2} and \ref{lem:sq-error-stage1} to obtain
the following bound
\[
\E\left[\left\Vert h(x)-y\right\Vert _{2}^{2}\right]-\E\left[\left\Vert \rho\left(R\left(f\left(x\right)\right)\right)-y\right\Vert _{2}^{2}\right]\leq-\frac{\beta^{2}}{9}T+\frac{4}{\left\lceil \nicefrac{1}{\beta}\right\rceil }\log_{2}\left(\frac{36}{\beta}\right).
\]

Moreover , since the squared loss is always bounded between 0 and
1 we have 
\[
-1\leq-\frac{\beta^{2}}{9}T+\frac{4}{\left\lceil \nicefrac{1}{\beta}\right\rceil }\log_{2}\left(\frac{36}{\beta}\right)
\]
which implies that the algorithm must terminate after 
\[
T\leq\frac{9+\frac{36}{\left\lceil \nicefrac{1}{\beta}\right\rceil }\log_{2}\left(\frac{36}{\beta}\right)}{\beta^{2}}
\]
time steps.
\end{proof}
\begin{lem}[Lemma \ref{lem:termination-error} restated]
 Assuming that $A_{1}$, $A_{2}$, and $A_{3}$ hold, the $\ell_{p}$
calibration error of $h$ $\left(\textrm{Err}_{p}(h)\right)^{p}$
is bounded by $O(\beta^{p-1})$ with probability at least $1-\delta$.
\end{lem}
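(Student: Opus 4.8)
The plan is to control every per-level-set, per-class error of the output predictor $h$ by $O(\beta)$, to bound the \emph{sum} of all these errors by an absolute constant, and then to combine the two via a H\"older-type inequality. Throughout I condition on the events $A_1,A_2,A_3$, so everything below is deterministic; since $\P[A_1\wedge A_2\wedge A_3]\ge 1-\delta$ by Subsection~\ref{subsec:Correctness-of-Estimates}, the conclusion then holds with probability at least $1-\delta$ (and the output $h$ exists by Lemma~\ref{lem:algorithm-termination}). Write $g=R\circ h$. The first task is to describe which inputs land in a fixed level set $w\in\VV$: if $R(f(x))\in B$ then $h(x)=\text{pred}(S)$ for the group $S\in G$ containing $R(f(x))$, so $g(x)=R(\text{pred}(S))$; if $R(f(x))=v\notin B$ then $h(x)=\rho(v)$, and since $R\circ\rho$ is the identity on $\VV$, $g(x)=v$. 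I would first establish the invariant that at all times distinct groups of $G$ have distinct rounded predictions: it holds initially because $R(\rho(v))=v$ for distinct bins $v$, and is preserved because immediately after each update the algorithm merges $S^{(t)}$ with the (at most one) other group sharing its new rounded prediction. Consequently, for each $w$ there is at most one group $S_w\in G$ with $R(\text{pred}(S_w))=w$, the bin $w$ itself contributes only when $w\notin B$, the two contributing events are disjoint (since $S_w\subseteq B$), and
\[
\E\!\left[(h(x)_j-y_j)\one[g(x)=w]\right]=\E\!\left[(\text{pred}(S_w)_j-y_j)\one[R(f(x))\in S_w]\right]+\one[w\notin B]\,\E\!\left[(\rho(w)_j-y_j)\one[R(f(x))=w]\right].
\]

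To bound the first term I would invoke Lemma~\ref{lem:size-of-M}: $S_w$ is a disjoint union of at most $\lfloor\log_2|B|\rfloor+1$ groups of $M$, which moreover partition it (the $M$--$G$ nesting property), so under $A_2,A_3$ the aggregates $\sum_{S'\in M,\,S'\subseteq S_w}\hat P_{S'}$ and $\sum_{S'\in M,\,S'\subseteq S_w}\hat E_{S',j}$ are within $\beta/36$ of $\P[R(f(x))\in S_w]$ and $\E[y_j\one[R(f(x))\in S_w]]$; since $\text{pred}(S_w)_j\le1$, the algorithm's estimate $\hat{\textrm{Err}}(S_w,j)$ is within $\beta/18$ of the magnitude of the first term, and the termination condition $\hat{\textrm{Err}}(S_w,j)\le\beta/2$ then gives a bound of $5\beta/9$. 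The second term (present only for $w\notin B$) is at most $\P[R(f(x))=w]$ since $|\rho(w)_j-y_j|\le1$, and $A_1$ together with $\hat\mu_w<\beta/6$ gives $\P[R(f(x))=w]<\beta/6+\beta/12=\beta/4$. Hence $\textrm{Err}(h,w,j)<\beta$ for every $w,j$.

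Next I would bound $\sum_{w,j}\textrm{Err}(h,w,j)$. Fixing $w$ and writing $a_j=\E[(h(x)_j-y_j)\one[g(x)=w]]$, we have $\sum_j a_j=0$ because $h(x),y\in\Delta_k$, so $\sum_j|a_j|=2\sum_{j:a_j>0}a_j\le 2\sum_{j:a_j>0}\E[h(x)_j\one[g(x)=w]]\le 2\,\P[g(x)=w]$, using $h(x)_j\ge0$; summing over $w$ yields $\sum_{w,j}\textrm{Err}(h,w,j)\le2$. Finally, since $p>1$,
\[
\left(\textrm{Err}_p(h)\right)^p=\sum_{w\in\VV}\sum_{j\in[k]}\textrm{Err}(h,w,j)^p\le\Bigl(\max_{w,j}\textrm{Err}(h,w,j)\Bigr)^{p-1}\sum_{w,j}\textrm{Err}(h,w,j)\le\beta^{p-1}\cdot2=O(\beta^{p-1}),
\]
and for $p=\infty$ the pointwise bound $\textrm{Err}(h,w,j)<\beta$ already gives $\textrm{Err}_\infty(h)<\beta$.

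I expect the main obstacle to be the level-set bookkeeping in the first two paragraphs: verifying the ``distinct rounded predictions'' invariant for $G$, checking that the low-probability branch of $h$ reuses exactly the single bin $w$ (so the decomposition has only two terms, disjoint when $w\notin B$), and confirming that each relevant group of $G$ splits into $O(\log|B|)$ groups of $M$ so that $A_2,A_3$ transfer to an $O(\beta)$ comparison between the algorithm's estimate $\hat{\textrm{Err}}$ and the true calibration error of the group. The $\ell_p$-power estimate itself and the $\sum_j|a_j|\le 2\,\P[g(x)=w]$ bound are then routine.
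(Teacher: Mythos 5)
Your proposal is correct and follows essentially the same route as the paper's proof: bound each $\textrm{Err}(h,w,j)$ by $O(\beta)$ using the termination condition together with the $O(\log|B|)$ transfer from $A_2,A_3$ and the low-mass bound for bins outside $B$, bound $\sum_{w,j}\textrm{Err}(h,w,j)\le 2$, and combine via the max-times-sum inequality. Your treatment is in fact slightly more careful than the paper's on the bookkeeping (the distinct-rounded-predictions invariant for $G$, the disjointness of the two contributing events, and the $\beta/4$ rather than $\beta/6$ constant for bins outside $B$), but these refinements do not change the argument or the final $O(\beta^{p-1})$ bound.
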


\begin{proof}
Let $T$ be the time step when the algorithm terminates. First, we
analyze the error under the assumption that $A_{1},A_{2}$ and $A_{3}$
hold. We show that for all $v\in\VV$ and all $j\in[k]$, $\textrm{Err}(h,v,j)\le\beta$.

A point $x$ gets a prediction $h(x)$ that gets rounded to level
set $v$ in one of two ways:

1. if $v$ is not a high-probability bin, then the initial prediction
$f(x)$ gets rounded to $v$, or

2. if there exists a group of bins $S\in G$ such that $R\left(\text{pred}(S)\right)=v$,
then the initial prediction $f(x)$ is in a high-probability bin that,
through the calibration algorithm gets mapped to group $S$.

Note that both cases can be true simultaneously for a fixed $v$.
In the second case, due to the termination criterion of the algorithm,
$\forall j\in[k]$, 
\[
\hat{\textrm{Err}}(S,j)=\left|(\sum_{S'\in M:S'\subseteq S}\hat{P}_{S'})\text{pred}\left(S\right)_{j}-\sum_{S'\in M:S'\subseteq S}\hat{E}_{S',j}\right|\leq\frac{\beta}{2}.
\]
For the true error of $v\in\VV$ and $j\in[k]$, we have that
\begin{align*}
 & \textrm{Err}(h,v,j)\\
 & =\left|\E_{(x,y)\sim D}\left[\left(h(x)_{j}-y_{j}\right)\one\left[R\left(h\left(x\right)\right)=v\right]\right]\right|\\
 & \leq\left|\E_{(x,y)\sim D}\left[\left(h(x)_{j}-y_{j}\right)\one\left[R\left(h\left(x\right)\right)=v\textrm{ and }R(f(x))\in B\right]\right]\right|\\
 & \quad+\left|\E_{(x,y)\sim D}\left[\left(h(x)_{j}-y_{j}\right)\one\left[R\left(h\left(x\right)\right)=v\textrm{ and }R(f(x))\notin B\right]\right]\right|\\
 & \leq\left|\P\left[R(f(x))\in S\right]\cdot\text{pred}(S)_{j}-\E_{(x,y)\sim D}\left[y_{j}\one\left[R(f(x))\in S\right]\right]\right|\cdot\\
 & \;\one\left[\exists S\in G:R\left(\text{pred}(S)\right)=v\right]+\P\left[R(f(x))=v\right]\one\left[v\notin B\right]\\
 & \leq(|(\sum_{S'\in M:S'\subseteq S}\hat{P}_{S'})\text{pred}(S)_{j}-\sum_{S'\in M:S'\subseteq S}\hat{E}_{S',j}|\\
 & \;+\frac{2\beta}{36(\left\lfloor \log_{2}|B|\right\rfloor +1)}\left|\left\{ S'\in M:S'\subseteq S\right\} \right|)\one\left[\exists S\in G:R\left(\text{pred}(S)\right)=v\right]+\frac{\beta}{6}\one[v\notin B]\\
 & \leq\left(\frac{\beta}{2}+\frac{\beta}{18}\right)\one\left[\exists S\in G:R\left(\text{pred}(S)\right)=v\right]+\frac{\beta}{6}\one[v\notin B]\\
 & \leq\beta
\end{align*}
Therefore,
\begin{align*}
 & \sum_{v\in V_{\lambda}^{k}}\sum_{j=1}^{k}(\textrm{Err}(h,v,j))^{p}\\
 & \le(\sum_{v\in V_{\lambda}^{k}}\sum_{j=1}^{k}\textrm{Err}(h,v,j))\max_{v\in V_{\lambda}^{k},j\in[k]}(\textrm{Err}(h,v,j))^{p-1}\\
 & \le(\sum_{v\in V_{\lambda}^{k}}\sum_{j=1}^{k}\left(\E_{(x,y)\sim D}\left[h(x)_{j}\mid R(h(x))=v\right]\right.\\
 & \left.+\E_{(x,y)\sim D}\left[y_{j}\mid R(h(x))=v\right]\right)\P\left[R(h(x))=v\right])\beta^{p-1}\\
 & \le2\beta^{p-1}
\end{align*}
This holds because for all $v\in\VV$, $\sum_{j=1}^{k}\E_{(x,y)\sim D}\left[h(x)_{j}\mid R(h(x))=v\right]=1$.
As a result we get that $\P\left[\textrm{Err}_{p}(h)>\left(2\beta^{p-1}\right)^{1/p}\left|A_{1},A_{2},A_{3}\right.\right]=0$. 
\end{proof}
\begin{lem}[Lemma \ref{lem:runtime} restated]
 Assuming that $A_{1}$, $A_{2}$, and $A_{3}$ hold, the algorithm
terminates in time polynomial in $\frac{1}{\beta}$ and $k$.
\end{lem}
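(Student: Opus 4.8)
The plan is to bound the total work of Algorithm~\ref{alg:multiclass_cal_full} by (i) bounding the number $T$ of iterations of the main loop, (ii) bounding the sizes of all data structures the algorithm maintains, and (iii) bounding the cost of a single iteration, and then combining these bounds.

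By Lemma~\ref{lem:algorithm-termination}, under $A_1,A_2,A_3$ the main loop runs for $T=O(1/\beta^2)$ iterations. The key observation --- needed because $|V_\lambda^k|$ can be super-polynomial in $k$ and $1/\beta$ --- is that the algorithm never explicitly enumerates $V_\lambda^k$: the empirical mass $\hat{\mu}_v$ is nonzero only for the bins actually hit by the sample set used to estimate bin masses, and by Lemma~\ref{lem:samples-probabilities} together with Lemma~\ref{lem:size-of-V} (which gives $\log|V_\lambda^k|=O(\min(k,\lambda)\ln(\lambda+k))$, polynomial in $k$ and $1/\beta$ when $\lambda=\lceil 1/\beta\rceil$) that sample set has size $\mathrm{poly}(1/\beta,k,\log(1/\delta))$. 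Every other bin has $\hat{\mu}_v=0<\beta/6$ and is excluded from $B$, so $B$ can be formed by scanning only those polynomially many bins. Moreover, under $A_1$ we get $|B|=O(1/\beta)$, since each bin of $B$ has true mass at least $\beta/6-\beta/12=\beta/12$ and true masses sum to $1$.

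With $|B|=O(1/\beta)$, all structures stay small: $G$ and $M$ each start as $|B|$ singletons and are only ever modified by merges, so there are at most $O(1/\beta)$ merge operations in each over the whole run, and by Lemma~\ref{lem:size-of-M} every group of $G$ is a disjoint union of $O(\log|B|)$ groups of $M$. A single iteration then costs $O(|B|\cdot k)$ to scan $G$ for a group and class with error exceeding $\beta/2$ (errors are maintained, so each lookup is $O(1)$); $O(\log|B|+k)$ to assemble $z^{(t)}$ from the relevant $M$-estimates plus $O(k\log k)$ for the Euclidean projection $\pi(z^{(t)})$ onto $\Delta_k$; $O(|B|\cdot k)$ to search for a group $S'$ whose prediction rounds to the same level set; and $O(\log|B|\cdot k)$ to recompute $\hat{\textrm{Err}}(S^{(t)},\cdot)$. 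Each of the $O(1/\beta)$ merges inside $M$ re-estimates $\hat{P}$ and the $k$ quantities $\hat{E}_{\cdot,j}$ from a fixed sample pool of size $\mathrm{poly}(1/\beta,k,\log(1/\delta))$, costing $\mathrm{poly}(1/\beta,k,\log(1/\delta))$ per merge after a single pass bucketing the pool by bin; relabeling bins inside merged groups costs $O(1/\beta)$ amortized over the run. Each evaluation of $R(f(\cdot))$ is one call to $f$ plus $O(k)$ rounding, and the algorithm performs $\mathrm{poly}(1/\beta,k,\log(1/\delta))$ of them in total (one per sample, over all pools). Summing, the runtime is $O(1/\beta^2)\cdot\mathrm{poly}(1/\beta,k)+\mathrm{poly}(1/\beta,k,\log(1/\delta))=\mathrm{poly}(1/\beta,k,\log(1/\delta))$, which is polynomial in $1/\beta$ and $k$.

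The only genuine subtlety --- and the step I would be most careful about --- is the one in the second paragraph: that the nominal instruction ``for all bins $v\in V_\lambda^k$'' is implemented by iterating only over the polynomially many bins containing at least one sample, so that the super-polynomial size of $V_\lambda^k$ never enters the running time, and that $|B|=O(1/\beta)$ genuinely holds under $A_1$. Everything after that is routine bookkeeping over the data structures $G$ and $M$.
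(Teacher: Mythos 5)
Your proof is correct and follows essentially the same decomposition as the paper's: bound the iteration count via Lemma \ref{lem:algorithm-termination}, observe that the bin-mass estimation only touches bins actually hit by samples (so $|V_\lambda^k|$ never enters the runtime) and that $|B|=O(1/\beta)$ under $A_1$, and then account for the per-iteration work on $G$ and $M$ using the fact that groups are only ever merged. The subtlety you flag about implementing the ``for all bins'' loop via the non-empty bins is exactly the point the paper makes with its dictionary/hash-table remark, so nothing is missing.
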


\begin{proof}
Assuming that $A_{1}$, $A_{2}$ and $A_{3}$ hold, Algorithm \ref{alg:multiclass_cal_full}
has time complexity $O\left(\text{poly}\left(\frac{1}{\beta},k\right)\right)$,
where $\text{poly}$ denotes a polynomial function. We analyze the
time complexity of each phase of the algorithm.

Phase 1: Identifying high-probability bins. This phase requires $O(n)$
time, where $n$ is the number of samples used to estimate $\hat{\mu}_{v}$.
According to the analysis in Subsection \ref{subsec:Correctness-of-Estimates},
$n$ is polynomial in $\frac{1}{\beta}$ and logarithmic in $k$.
Notably, this step avoids iterating over all bins in $V_{\lambda}^{k}$
by examining only bins containing input samples. This can be efficiently
implemented using a dictionary/hash table where keys represent bins
and values are lists of samples in each bin. The dictionary size equals
the number of non-empty bins. From this point forward the algorithm
operates exclusively on the high probability bins in $B$, whose cardinality
is linear in $\frac{1}{\beta}$.

Phase 2: Initializing data structures $M$ and $G$. This requires
time polynomial in |$B$|, $k$, and the number of samples used for
estimating $\hat{P}$ and $\hat{E}$. By the analysis in Subsection
\ref{subsec:Correctness-of-Estimates}, the number of these samples
is polynomial in $\frac{1}{\beta}$ and logarithmic in $k$.

Phase 3: Calibration. The algorithm calibrates predictions for bins
in $B$ by executing at most $O\left(\frac{1}{\beta^{2}}\right)$
iterations. Each iteration performs a polynomial number of operations
in $k$ and $\frac{1}{\beta}$. More specifically, searching in $G$
for the large-error group requires at most $|B|$ time. The total
number of merges in $G$ and $M$ throughout the entire algorithm
is bounded by $|B|$, since we begin with $|B|$ groups and only merge.
The estimation and error computation steps run in time polynomial
in the sample size and $k$. 

Combining the analyses of the three phases, we conclude that the algorithm's
time complexity is polynomial in $\frac{1}{\beta}$ and $k$.
\end{proof}

\end{document}